\def\eqref#1{equation~\ref{#1}}
\def\Eqref#1{Equation~\ref{#1}}
\def\1{\bm{1}}
\DeclareMathAlphabet{\mathsfit}{\encodingdefault}{\sfdefault}{m}{sl}
\SetMathAlphabet{\mathsfit}{bold}{\encodingdefault}{\sfdefault}{bx}{n}
\newcommand{\E}{\mathbb{E}}
\newcommand{\R}{\mathbb{R}}
\DeclareMathOperator*{\argmin}{arg\,min}
\definecolor{gold}{rgb}{1.0, 0.84, 0.0}
\definecolor{silver}{rgb}{0.75, 0.75, 0.75}
\definecolor{bronze}{rgb}{0.8, 0.5, 0.2}
\newcommand{\Input}{\item[\textbf{Input:}]}
\newcommand{\Output}{\item[\textbf{Output:}]}
\theoremstyle{plain}
\newtheorem{theorem}{Theorem}[section]
\newtheorem{lemma}[theorem]{Lemma}
\newtheorem{fact}[theorem]{Fact}
\newtheorem{definition}[theorem]{Definition}
  \newcommand{\cAAAI}[1]{AAAI\ Conference\ on\ Artificial (AAAI)}
\title{Provably Accurate Shapley Value Estimation via Leverage Score Sampling}
\author{Christopher Musco \\ 
  New York University \\ 
  \texttt{cmusco@nyu.edu} 
  \And
  R. Teal Witter \\ 
  New York University \\ 
  \texttt{rtealwitter@nyu.edu} 
}
\begin{document}

\maketitle

\begin{abstract}
Originally introduced in game theory, Shapley values have emerged as a central tool in explainable machine learning, where they are used to attribute model predictions to specific input features. However, computing Shapley values exactly is expensive: for a general model with $n$ features, $O(2^n)$ model evaluations are necessary. To address this issue, approximation algorithms are widely used. One of the most popular is the Kernel SHAP algorithm, which is model agnostic and remarkably effective in practice. However, to the best of our knowledge, Kernel SHAP has no strong non-asymptotic complexity guarantees. We address this issue by introducing \emph{Leverage SHAP}, a lightweight modification of Kernel SHAP that provides provably accurate Shapley value estimates with just $O(n\log n)$ model evaluations. Our approach takes advantage of a connection between Shapley value estimation and agnostic active learning by employing \emph{leverage score sampling}, a powerful regression tool. Beyond theoretical guarantees, we show that Leverage SHAP consistently outperforms even the highly optimized implementation of Kernel SHAP available in the ubiquitous SHAP library [Lundberg \& Lee, 2017].
\end{abstract}

\section{Introduction}
While AI is increasingly deployed in high-stakes domains like education, healthcare, finance, and law, increasingly complicated models often make predictions or decisions in an opaque and uninterpretable way.
In high-stakes domains, transparency in a model is crucial for building trust.
Moreover, for researchers and developers, understanding model behavior is important for identifying areas of improvement and applying appropriate safeguards.
To address these challenges, Shapley values have emerged as a powerful game-theoretic approach for interpreting even opaque models \citep{shapley1951notes,vstrumbelj2014explaining,DattaSenZick:2016,lundberg2017unified}. These values can be used to effectively quantify the contribution of each input feature to a model's output, offering at least a partial, principled explanation for why a model made a certain prediction. 

Concretely, Shapley values originate from game-theory as a method for determining fair `payouts' for a cooperative game involving $n$ players. The goal is to assign higher payouts to players who contributed more to the cooperative effort. Shapley values quantify the contribution of a player by measuring how its addition to a set of other players changes the value of the game. Formally, let the \emph{value function} $v: 2^{[n]} \to \mathbb{R}$ be a function defined on sets $S \subseteq [n]$.
The Shapley value for player $i$ is:
\begin{align}
    \phi_i &= \frac1{n} \sum_{S \subseteq [n] \setminus \{i\}}
    \frac{v(S \cup \{i\}) - v(S)}{\binom{n-1}{|S|}}.
    %= \sum_{S \subseteq [n] \setminus \{i\}} \frac{|S|! (n-|S|-1)!}{n!} (v(S \cup \{i\}) - v(S)).
    \label{eq:shap_standard}
\end{align}
The denominator weights the marginal contribution of player $i$ to set $S$ by the number of sets of size $|S|$, so that the marginal contribution to sets of each size are equally considered. With this weighting, Shapley values are known to be the unique values that satisfy four desirable game-theoretic properties: Null Player, Symmetry, Additivity, and Efficiency \citep{shapley1951notes}. For further details on Shapley values and their theoretical motivation, we refer the reader to \cite{Molnar:2024}.

A popular way of using Shapley values for explainable AI is to attribute predictions made by a model $f: \mathbb{R}^n \to \mathbb{R}$ on a given input $\mathbf{x} \in \mathbb{R}^n$ compared to a baseline input $\mathbf{y} \in \mathbb{R}^n$ \citep{lundberg2017unified}.
The players are the features and $v(S)$ is the prediction of the model when using the features in $S$; i.e., $v(S) = f(\mathbf{x}^S)$ where $\mathbf{x}^S_i = x_i $ for $i \in S$ and $\mathbf{x}^S_i = y_i$ otherwise.\footnote{There are multiple ways to define the baseline $\mathbf{y}$. The simplest is to consider a fixed vector \citep{lundberg2017unified}. Other approaches define $\mathbf{y}$ as a random vector that is drawn from a data distribution \citep{lundberg2018consistent,janzing2020feature} and take $v(S) = \E[\mathbf{x}^S]$, where the expectation is taken over the random choice of $\mathbf{y}$. The focus of our work is estimating Shapley values once the values function $v$ is fixed, so our methods and theoretical analysis are agnostic to the specific approach used.}
When $v$ is defined in this way, Shapley values measure how each feature value in the input contributes to the prediction.

Shapley values also find other applications in machine learning, including in feature and data selection. For feature selection, the value function $v(S)$ is taken to be the model loss when 
when using the features in $S$ \citep{marcilio2020explanations,fryer2021shapley}.
For data selection, $v(S)$ is taken to be the loss when using the data observations in $S$ \citep{jia2019towards,ghorbani2019data}.

\subsection{Efficient Shapley Value Computation}
\label{sec:efficient_shap_vals}
Naively, computing all $n$ Shapley values according to \Eqref{eq:shap_standard} requires $O(2^n)$ evaluations of $v$ (each of which involves the evaluation of a learned model) and $O(2^n)$ time. This cost can be reduced in certain special cases, e.g. when computing feature attributions for linear models or decision trees 
\citep{lundberg2018consistent,campbell2022exact,amoukou2022accurate,chen2018shapley}.

More often, when $v$ is based on an arbitrary model, like a deep neural network, the exponential cost in $n$ is avoided by turning to approximation algorithms for estimating Shapley values, including sampling, permutation sampling, and Kernel SHAP \citep{strumbelj2010efficient,lundberg2017unified,mitchell2022sampling}. The Kernel SHAP method is especially popular, as it performs well in practice for a variety of models, requiring just a small number of black-box evaluations of $v$ to obtain accurate estimates to $\phi_1, \ldots, \phi_n$. The method is a corner-stone of the ubiquitous SHAP library for explainable AI based on Shapley values \citep{lundberg2017unified}. 

Kernel SHAP is based on an elegant connection between Shapley values and  least squares regression \citep{charnes1988extremal}. 
Specifically, let $[n]$ denote $\{1, \ldots, n\}$, $\emptyset$ denote the empty set, and $\mathbf{1}$ denote an all $1$'s vector of length $n$. The Shapley values $\boldsymbol{\phi} = [\phi_1, \ldots, \phi_n] \in \R^{n}$ are known to satisfy:
\begin{align}\label{eq:linear_regression_connection}
   \boldsymbol{\phi} =  
   \argmin_{\mathbf{x}: \langle \mathbf{x}, \mathbf{1} \rangle = v([n]) - v(\emptyset)} \| \mathbf{Zx - y} \|_2, 
\end{align}
where $\mathbf{Z} \in \mathbb{R}^{2^n-2 \times n}$  is a specific structured matrix whose rows correspond to sets $S \subseteq [n]$ with $0 < |S| < n$, and $\mathbf{y} \in \mathbb{R}^{2^n-2}$ is vector whose entries correspond to values of $v(S)$.
(We precisely define $\mathbf{Z}$ and $\mathbf{y}$ in Section \ref{sec:background}.)

Since solving the regression problem in \Eqref{eq:linear_regression_connection} directly would require evaluating $v(S)$ for all $2^n-2$ subsets represented in $\mathbf{y}$, 
 Kernel SHAP solves the problem approximately via \emph{subsampling}. Concretely, for a given number of samples $m$ and a discrete probability distribution $\mathbf{p} \in [0,1]^{2^n-2}$ over rows in $\mathbf{Z}$, consider a sampling matrix $\mathbf{S} \in \mathbb{R}^{m \times 2^n - 2}$, where each row of $\mathbf{S}$ is $0$ except for a single entry $1/\sqrt{p_j}$ in the $j^\text{th}$ entry with probability $p_j$.
The Kernel SHAP estimate is given by
\begin{align}\label{eq:subsampled_solution}
    \tilde{\boldsymbol{\phi}} = \argmin_{\mathbf{x}: \langle \mathbf{x}, \mathbf{1} \rangle = v([n]) - v(\emptyset)} \| \mathbf{SZx - Sy} \|_2.
\end{align}
Importantly, computing this estimate only requires at most $m$ evaluations of the value function $v$, since $\mathbf{Sy}$ can be constructed from observing at most $m$ entries in $\mathbf{y}$. Value function evaluations typically dominate the computational cost of actually solving the regression problem in \Eqref{eq:subsampled_solution}, so ideally $m$ is chosen as small as possible. 
In an effort to reduce sample complexity, Kernel SHAP does not sample rows uniformly. Instead, the row corresponding to subset $S$ is chosen with probability proportional to: 
\begin{align}
\label{eq:kernel_shap_weights}
   w(|S|) = \left(\binom{n}{|S|} |S| (n-|S|)\right)^{-1}.
\end{align}
The specific motivation for this distribution is discussed further in Section \ref{sec:background}, but the choice is intuitive: the method is more likely to sample rows corresponding to subsets whose size is close to $0$ or $n$, which aligns with the fact that these subsets more heavily impact the Shapley values (\Eqref{eq:shap_standard}). Ultimately, however, the choice of $w(|S|)$ is heuristic.

In practice, the Kernel SHAP algorithm is implemented with further optimizations \citep{lundberg2017unified,covert2020improving,jethani2021fastshap}.
\begin{itemize}[leftmargin=*]
    \item  \textbf{Paired Sampling:} If a row corresponding to set $S$ is sampled, the row corresponding to its complement $\bar{S} \vcentcolon = [n] \setminus S$ is also sampled. This \textit{paired} sampling strategy intuitively balances samples so each player $i$ is involved in the exact same number of samples. Paired sampling substantially improves Kernel SHAP \citep{covert2020improving}.
    \item \textbf{Sampling without Replacement:} When $n$ is small, sampling with replacement can lead to inaccurate solutions even as the number of samples $m$ approaches and surpasses $2^n$, even though $2^n$ evaluations of $v$ are sufficient to exactly recover the Shapley values. In the SHAP library implementation of Kernel SHAP, this issue is addressed with a version of sampling without replacement: if there is a sufficient `sample budget' for a given set size, all sets of that size are sampled \citep{lundberg2017unified}. Doing so substantially improves Kernel SHAP as $m$ approaches $2^n$.
\end{itemize}

\subsection{Our Contributions}
Despite its ubiquity in practice, to the best of our knowledge, no non-asymptotic theoretical accuracy guarantees are known for Kernel SHAP when implemented with $m < 2^n-2$ row samples (which corresponds to $m$ evaluations of the value function, $v$). We address this issue by proposing \textbf{Leverage SHAP}, a lightweight modification of Kernel SHAP that 1) enjoys strong theoretical accuracy guarantees and 2) consistently outperforms Kernel SHAP in experiments.

Leverage SHAP begins with the observation that a nearly optimal solution to the regression problem in \Eqref{eq:linear_regression_connection} can be obtained by sampling just $\tilde{O}(n)$ rows with probability proportional to their \emph{statistical leverage scores}, a natural measure for the ``importance'' or ``uniqueness'' of a matrix row \citep{Sarlos:2006,RauhutWard:2012,HamptonDoostan:2015,CohenMigliorati:2017}.\footnote{Technically, this fact is known for \emph{unconstrained} least square regression. Some additional work is needed to handle the linear constraint in \Eqref{eq:linear_regression_connection}, but doing so is relatively straightforward.} 
This fact immediately implies that, \emph{in principle}, we should be able to provably approximate $\boldsymbol{\phi}$ with a nearly-linear number of value function evaluations (one for each sampled row).

However, leverage scores are expensive to compute, naively requiring at least $O(2^n)$ time to write down for a matrix like $\mathbf{Z}$ with $O(2^n)$ rows. Our key observation is that this bottleneck can be avoided in the case of Shapley value estimation: we prove that the leverage scores of $\mathbf{Z}$ have a simple closed form that admits efficient sampling without ever writing them all down. Concretely, we show that the leverage score of the row corresponding to any subset $S\subset [n]$ is proportional to $\binom{n}{|S|}^{-1}$. 

This suggests a similar, but meaningfully different sampling distribution than the one used by Kernel SHAP (see \Eqref{eq:kernel_shap_weights}).
Since all subsets of a given size have the same leverage score, we can efficiently sample proportional to the leverage scores by sampling a random size $s$ uniformly from $\{1,\ldots,n-1\}$ then selecting a subset $S$ of size $s$ uniformly at random.

\begin{theorem}
\label{thm:main}
    For any $\epsilon > 0$ and constant $\delta > 0$, the Leverage SHAP algorithm uses $m~=~O(n\log (\frac{n}{\delta}) + n\frac1{\epsilon\delta})$ evaluations of $v$ in expectation %\footnote{Formally, if we include the dependence on $\delta$, we require $m = O(n\log (n/\delta) + n/(\epsilon\delta))$.}
    and $O(mn^2)$ additional runtime to return estimated Shapley values $\tilde{\boldsymbol{\phi}}$ satisfying $\langle \tilde{\boldsymbol{\phi}}, \mathbf{1} \rangle = v([n]) - v(\emptyset)$ and, with probability $1-\delta$,
    \begin{align}
    \label{eq:main_objective_value}
        \| \mathbf{Z} \tilde{\boldsymbol{\phi}} - \mathbf{y} \|_2^2
        \leq (1 + \epsilon) 
        \| \mathbf{Z} \boldsymbol{\phi} - \mathbf{y} \|_2^2.
    \end{align}
\end{theorem}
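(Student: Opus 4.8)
The plan is to reduce the constrained regression of \Eqref{eq:linear_regression_connection} to an \emph{unconstrained} least squares problem, check that the Leverage SHAP sampling distribution is a constant-factor overestimate of leverage score sampling for that problem, and then invoke a black-box guarantee for leverage score sampling in regression. For Step 1 (removing the constraint), fix any $\mathbf{x}_0 \in \R^n$ with $\langle \mathbf{x}_0, \mathbf{1}\rangle = v([n]) - v(\emptyset)$ and let $\mathbf{N} \in \R^{n \times (n-1)}$ have columns spanning $\{\mathbf{x} : \langle \mathbf{x}, \mathbf{1}\rangle = 0\}$. Substituting $\mathbf{x} = \mathbf{x}_0 + \mathbf{N}\mathbf{z}$ turns \Eqref{eq:linear_regression_connection} into the unconstrained problem $\min_{\mathbf{z}} \|\mathbf{A}\mathbf{z} - \mathbf{b}\|_2$ with $\mathbf{A} \vcentcolon= \mathbf{Z}\mathbf{N}$ and $\mathbf{b} \vcentcolon= \mathbf{y} - \mathbf{Z}\mathbf{x}_0$, and turns \Eqref{eq:subsampled_solution} into $\min_{\mathbf{z}} \|\mathbf{S}\mathbf{A}\mathbf{z} - \mathbf{S}\mathbf{b}\|_2$. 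Since $\mathbf{Z}\tilde{\boldsymbol{\phi}} - \mathbf{y} = \mathbf{A}\tilde{\mathbf{z}} - \mathbf{b}$ and $\mathbf{Z}\boldsymbol{\phi} - \mathbf{y} = \mathbf{A}\mathbf{z}^* - \mathbf{b}$ for the corresponding minimizers, \Eqref{eq:main_objective_value} is equivalent to a $(1+\epsilon)$ relative-error bound for the sketched unconstrained problem; feasibility of $\tilde{\boldsymbol{\phi}}$, and hence $\langle \tilde{\boldsymbol{\phi}}, \mathbf{1}\rangle = v([n]) - v(\emptyset)$, holds by construction.

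Next (Step 2, identifying the sampling distribution): because $\mathrm{colspace}(\mathbf{A}) \subseteq \mathrm{colspace}(\mathbf{Z})$, each row of $\mathbf{A}$ has leverage score bounded by the corresponding leverage score of $\mathbf{Z}$. Using the closed form stated above — the row for $S$ has leverage score proportional to $\binom{n}{|S|}^{-1}$, so the scores sum to $\Theta(n)$ — the Leverage SHAP distribution (draw $s$ uniformly on $\{1,\dots,n-1\}$, then $S$ uniformly among size-$s$ subsets, giving $\Pr[S] = \tfrac{1}{(n-1)\binom{n}{s}}$) samples each row with probability proportional to an overestimate of its $\mathbf{A}$-leverage score, with total overestimate $\Theta(n)$. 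This is precisely the object required by leverage score sampling theory, and crucially it is samplable in $O(n)$ time per draw without ever materializing $\mathbf{Z}$ or $\mathbf{A}$; each sampled row costs exactly one evaluation of $v$, namely reading the corresponding entry of $\mathbf{y}$.

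For Step 3, invoke the standard leverage score sampling theorem for least squares \citep{Sarlos:2006,CohenMigliorati:2017}: sampling $m = O(d \log(d/\delta) + d/(\epsilon\delta))$ rows of a matrix $\mathbf{A} \in \R^{N \times d}$ proportionally to overestimates of its leverage scores summing to $O(d)$ yields, with probability $1-\delta$, a minimizer $\tilde{\mathbf{z}}$ of $\|\mathbf{S}\mathbf{A}\mathbf{z} - \mathbf{S}\mathbf{b}\|_2$ with $\|\mathbf{A}\tilde{\mathbf{z}} - \mathbf{b}\|_2^2 \le (1+\epsilon)\min_{\mathbf{z}}\|\mathbf{A}\mathbf{z} - \mathbf{b}\|_2^2$; the proof combines a matrix-Chernoff subspace embedding ($\|\mathbf{S}\mathbf{A}\mathbf{z}\|_2 = (1\pm\tfrac{1}{2})\|\mathbf{A}\mathbf{z}\|_2$ for all $\mathbf{z}$, needing $O(d\log(d/\delta))$ samples) with a second-moment bound on $\|(\mathbf{S}\mathbf{A})^{\top}\mathbf{S}(\mathbf{A}\mathbf{z}^*-\mathbf{b})\|_2$ (needing $O(d/(\epsilon\delta))$ samples). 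Here $d = n-1 = O(n)$, so $m = O(n\log(n/\delta) + n/(\epsilon\delta))$, which is $O(n\log n + n/\epsilon)$ for constant $\delta$, the expectation absorbing the randomness in the number of distinct sets drawn. The remaining computation — forming $\mathbf{S}\mathbf{Z}$, $\mathbf{x}_0$, $\mathbf{N}$ and solving the $O(m)\times(n-1)$ sketched system — costs $O(mn^2)$.

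The real content is the closed-form leverage scores of $\mathbf{Z}$ (established separately) combined with the bookkeeping for the linear constraint: one must confirm that sketching commutes with the null-space reparametrization and that leverage scores do not increase under $\mathbf{Z}\mapsto\mathbf{Z}\mathbf{N}$, so that the unconstrained black-box bound transfers without loss and the sample count stays $O(n\log n + n/\epsilon)$. A secondary point is the \emph{paired sampling} optimization used in practice: since $S$ and $\bar{S} = [n]\setminus S$ share a size and hence a leverage score, sampling them as a pair is leverage score sampling on a partition of the rows into $2$-element blocks, which perturbs the relevant quantities by only constant factors, so the same guarantee still applies.
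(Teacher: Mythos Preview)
Your proposal is correct and follows essentially the same three-step strategy as the paper: (1) remove the linear constraint by reparametrizing into the orthogonal complement of $\mathbf{1}$, (2) observe that the leverage scores of the resulting matrix are $\binom{n}{|S|}^{-1}$ so that the Leverage SHAP distribution is leverage score sampling, and (3) apply the standard subspace-embedding plus approximate-matrix-multiplication argument. Two points deserve comment.

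First, your Step~2 takes an unnecessary detour. You bound the leverage scores of $\mathbf{A}=\mathbf{ZN}$ by those of $\mathbf{Z}$ via monotonicity of column space, and then assert that the $\mathbf{Z}$-scores are proportional to $\binom{n}{|S|}^{-1}$. But the paper's closed form (Lemma~\ref{lemma:leverage_scores}) is for $\mathbf{A}=\mathbf{ZP}$, not $\mathbf{Z}$; the leverage scores of $\mathbf{Z}$ itself are $w(s)\,(ns-\alpha s^2)$ for some $\alpha<1$ determined by $c_n$, which is only \emph{approximately} proportional to $\binom{n}{s}^{-1}=w(s)\,s(n-s)$. Your conclusion survives because your $\mathbf{A}=\mathbf{ZN}$ has the same column space as the paper's $\mathbf{ZP}$, so its leverage scores are \emph{exactly} $\binom{n}{|S|}^{-1}$ and no overestimate argument is needed at all. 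The paper avoids the detour by computing $\mathbf{A}^\top\mathbf{A}=\tfrac{1}{n}\mathbf{P}$ directly.

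Second, the paper's Theorem~\ref{thm:main} is stated for the actual Leverage SHAP algorithm, which uses \emph{paired Bernoulli} (without-replacement) sampling, not i.i.d.\ sampling. You invoke the black-box i.i.d.\ result and wave at paired sampling as ``perturbing constants.'' The paper instead redoes the two ingredients: the subspace-embedding step uses matrix Bernstein on a sum of rank-$2$ random matrices (one per pair $(\mathbf{z},\bar{\mathbf{z}})$), and the second-moment step is a block approximate-matrix-multiplication bound (Lemmas~\ref{lemma:spectral} and~\ref{lemma:frobenius}). The modifications are routine but are the technical content that lets the guarantee apply to the algorithm as implemented.
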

In words, Theorem \ref{thm:main} establishes that, with a near-linear number of function evaluations, we can compute approximate Shapley values whose \emph{objective value} is close to the true Shapley values. We also require $O(mn^2)$ additional runtime to solve the linearly constrained regression problem in \Eqref{eq:subsampled_solution}, although this cost can be reduced in practice using, e.g., iterative methods.

By leveraging the fact that $\mathbf{Z}$ is a well-conditioned matrix, the bound in \Eqref{eq:main_objective_value} also implies a bound on the average squared error, $\| \tilde{\boldsymbol{\phi}} - \boldsymbol{\phi} \|_2^2$, which is provided in Section \ref{sec:theory_guarantees}. (In Appendix \ref{app:hardness}, we discuss why estimating Shapley values with multiplicative error is NP-hard.)

Beyond our theoretical results, we also show that leverage score sampling can be naturally combined with paired sampling, without sacrificing theoretical guarantees. Moreover, we show that a natural ``without replacement'' version of leverage score sampling leads to additional accuracy improvements when $m$ is large. Overall, these simple optimizations lead to an algorithm that consistently outperforms the optimized version of Kernel SHAP available in the SHAP Library (this `Optimized Kernel SHAP' algorithm also uses paired sampling and sampling without replacement). We illustrate this point in Figure \ref{fig:detailed}. 

More extensive experiments are included in Section \ref{sec:experiments}. We find that the improvement of Leverage SHAP over Kernel SHAP is especially substantial in settings where $n$ is large in comparison to $m$ and when we only have access to noisy estimates of the Shapley values. This is often the case in applications to explainable AI where, as discussion earlier, $v(S)$ is an expectation over functions involving random baselines and estimated via a finite sample.
\begin{figure*}[t!]
    \centering
    \includegraphics[width=\linewidth]{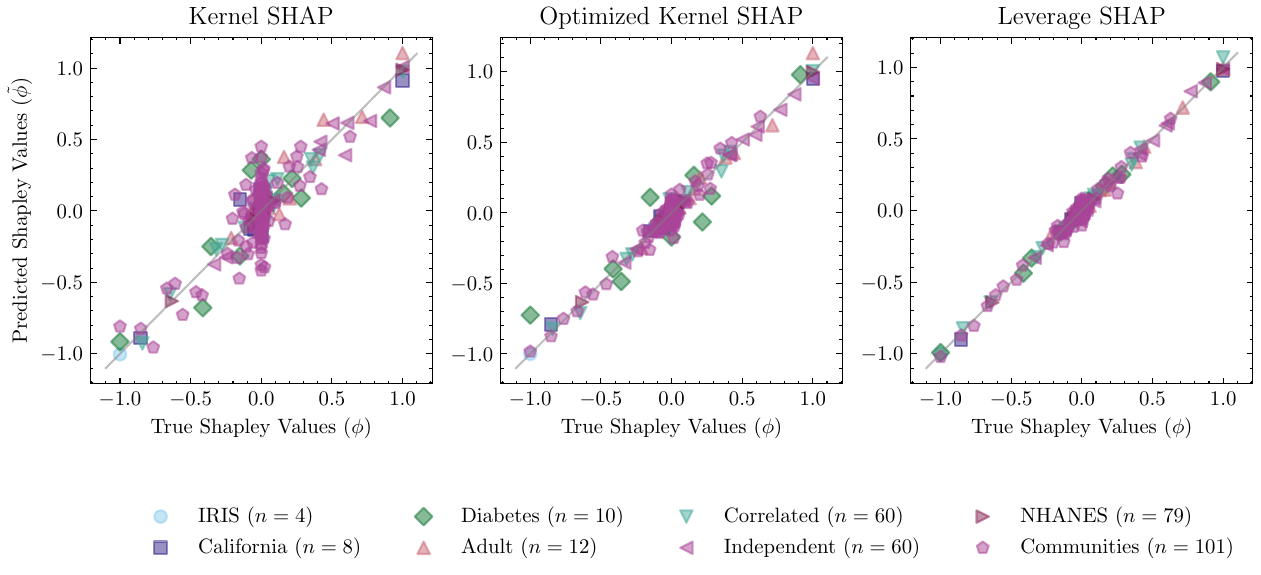}
    \caption{Predicted versus true Shapley values for all features in 8 datasets (we use $m=5n$ samples for this experiment).
    %. \chris{based on how many function evals?}
    Points near the identity line indicate that the estimated Shapley value is close to its true value. The plots suggest that our Leverage SHAP method is more accurate than the baseline Kernel SHAP algorithm, as well as the optimized Kernel SHAP implementation available in the SHAP library. We corroborate these findings with more experiments in Section \ref{sec:experiments}.}
    \label{fig:detailed}
    \vspace{-1em}
\end{figure*}

\subsection{Related Work}

\paragraph{Shapley Values Estimation.} As discussed, naively computing Shapley values requires an exponential number of evaluations of $v$. While more efficient methods exist for certain structured value functions (see references in Section \ref{sec:efficient_shap_vals}), for generic functions, faster algorithms involve some sort of approximation. 
The most direct way of obtaining an approximation to is approximate the summation definition of Shapley values (Equation \ref{eq:shap_standard}), which involves $O(2^n)$ terms for each subset of $[n]$, with a random subsample of subsets \citep{castro2009polynomial,strumbelj2010efficient,vstrumbelj2014explaining}. The first methods for doing so use a different subsample for each player $i$. 

A natural alternative is to try selecting subsets in such a way that allows them to be reused across multiple players \citep{illes2019estimation,mitchell2022sampling}. However, since each term in the summation involves both $v(S)$ and $v(S\cup \{i\})$, it is difficult to achieve high levels of sample reuse when working with the summation definition. One option is to split the sum into two, and separately estimate $\sum_{S \subseteq [n] \setminus \{i\}} v(S \cup \{i\})/\binom{n-1}{|S|}$  and $\sum_{S \subseteq [n] \setminus \{i\}}v(S)/\binom{n-1}{|S|}$. This allows substantial subset reuse, but tends to perform poorly in practice due to high variance in the individual sums \citep{wang2023databanzhaf}.  
By solving a global regression problem that determines the entire $\boldsymbol{\phi}$ vector, Kernel SHAP can be viewed as a more effective way of reusing subsets to obtain Shapley values for all players.

In addition to the sampling methods referenced above, \cite{covert2020improving} propose a modified Kernel SHAP algorithm, prove it is unbiased, and compute its asymptotic variance.
However, they find that the modified version performs worse than Kernel SHAP.
Additionally, we note that some recent work in the explainable AI setting takes advantage of the fact that we often wish to evaluate feature impact for a large number of different input vectors $\mathbf{x}\in \R^n$, which each induce their own set function $v$. In this setting, it is possible to leverage information gathered for one input vector to more efficiently compute Shapley values for another, further reducing sample cost \citep{schwab2019cxplain,jethani2021fastshap}.
This setting is incomparable to ours; like Kernel SHAP, Leverage SHAP works in the setting where we have black-box access to a \emph{single} value function $v$.

\paragraph{Leverage Scores.} As discussed, leverage scores are a natural measure of importance for matrix rows. They are widely used as importance sampling probabilities in randomized matrix algorithms for problems ranging from regression \citep{CohenLeeMusco:2015,AlaouiMahoney:2015}, to low-rank approximation \citep{DrineasMahoneyMuthukrishnan:2008,cohen2017input,MuscoMusco:2017,rudi2018fast}, to graph sparsification \citep{SpielmanSrivastava:2011}, and beyond \citep{AgarwalKakadeKidambi:2020}.

More recently, leverage score sampling has been applied extensively in active learning as a method for selecting data examples for training \citep{CohenMigliorati:2017,AvronKapralovMusco:2019,ChenPrice:2019a,ErdelyiMuscoMusco:2020,ChenDerezinski:2021,GHM22,focs2022,CardenasAdcockDexter:2023,ShimizuChengMusco:2024}. The efficient estimation of Shapley values via the regression problem in \Eqref{eq:kernel_shap_weights} can be viewed as an active learning problem, since our primary cost is in observing entries in the target vector $\mathbf{y}$, each of which corresponds to an expensive value function evaluation.

For active learning of linear models with $n$ features under the $\ell_2$ norm, the $O(n\log n + n/\epsilon)$ sample complexity required by leverage score sampling is known to be near-optimal. More complicated sampling methods can achieve $O(n/\epsilon)$, which is optimal in the worst-case \citep{BatsonSpielmanSrivastava:2012, ChenPrice:2019a}. However, it is not clear how to apply such methods efficiently to an exponentially tall matrix, as we manage to do for leverage score sampling.

\section{Background and Notation}

\label{sec:background}
\paragraph{Notation.}
Lowercase letters represent scalars, bold lowercase letters vectors, and bold uppercase letters matrices.
We use the set notation $[n] = \{1, \dots, n\}$ and $\emptyset=\{\}$.
We let $\mathbf{0}$ denote the all zeros vector, $\mathbf{1}$ the all ones vector, and $\mathbf{I}$ the identity matrix, with dimensions clear from context.
For a vector $\mathbf{x}$, $x_i$ is the $i^\text{th}$ entry (non-bold to indicate a scalar).
For a matrix $\mathbf{X} \in \mathbb{R}^{\rho \times n}$, $[\mathbf{X}]_i \in \mathbb{R}^{1 \times n}$ is the $i^\text{th}$ row. 
For a vector $\mathbf{x}$, $\|\mathbf{x}\|_2 = (\sum_i x_i^2)^{1/2}$ denotes the Euclidean ($\ell_2$) norm and $\|\mathbf{x}\|_1 = \sum_i |x_i|$ is the $\ell_1$ norm. 
For a matrix $\mathbf{X}\in \R^{n\times m}$, $\mathbf{X}^+$ denotes the Moore–Penrose pseudoinverse.

\paragraph{Preliminaries.}
Any subset $S \subseteq [n]$ can be represented by a binary indicator vector $\mathbf{z} \in \{0,1\}^n$, and we use
use $v(S)$ and $v(\mathbf{z})$ interchangeably. We construct the matrix $\mathbf{Z}$ and target vector $\mathbf{y}$ appearing in \Eqref{eq:linear_regression_connection} by indexing rows by all $\mathbf{z} \in \{0,1\}^n$ with $0 < \| \mathbf{z} \|_1 < n$:
\begin{itemize}
    \item Let $\mathbf{Z} \in \mathbb{R}^{2^n-2 \times n}$ be a matrix with $[\mathbf{Z}]_{\mathbf{z}} = \sqrt{w(\| \mathbf{z}\|_1)} \mathbf{z}^\top$.
    \item Let $\mathbf{y} \in \mathbb{R}^{2^n-2}$ be the vector where $[\mathbf{y}]_{\mathbf{z}} = \sqrt{w(\| \mathbf{z}\|_1)} (v(\mathbf{z}) - v(\mathbf{0}))$.
\end{itemize}
Above, $w(s)= \left(\binom{n}{s} s (n-s)\right)^{-1}$ is the same weight function defined in  \Eqref{eq:kernel_shap_weights}.

As discussed, the Kernel SHAP method is based on an equivalence between Shapley values and the solution of a constrained regression problem involving $\mathbf{Z}$ and $\mathbf{y}$. Formally, we have:
\begin{restatable}[Equivalence \citep{lundberg2017unified,charnes1988extremal}]{lemma}{equivalence}\label{lemma:equivalence}
   \begin{align}
       \boldsymbol{\phi} &= \argmin_{\mathbf{x}: \langle \mathbf{x}, \mathbf{1} \rangle = v(\mathbf{1}) - v(\mathbf{0})} \| \mathbf{Z x - y} \|_2^2  \\
       &= \argmin_{\mathbf{x}: \langle \mathbf{x}, \mathbf{1} \rangle  = v(\mathbf{1}) - v(\mathbf{0})} 
       \sum_{\mathbf{z}: 0 < \| \mathbf{z} \|_1 < n} w(\|\mathbf{z}\|_1)\cdot \left(\langle \mathbf{z},\mathbf{x}\rangle - (v(\mathbf{z})-v(\mathbf{0}))\right)^2
       \label{eq:summation_form}.
   \end{align} 
\end{restatable}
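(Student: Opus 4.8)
The second equality in the lemma is a pure unpacking: expanding $\|\mathbf{Zx-y}\|_2^2 = \sum_{\mathbf z}\bigl([\mathbf Z]_{\mathbf z}\mathbf x - [\mathbf y]_{\mathbf z}\bigr)^2$ and substituting the definitions of $[\mathbf Z]_{\mathbf z}$ and $[\mathbf y]_{\mathbf z}$ factors out $\sqrt{w(\|\mathbf z\|_1)}$ from each summand. So the real content is the first equality, and my plan is to prove it by verifying that the Shapley vector $\boldsymbol\phi$ of \Eqref{eq:shap_standard} satisfies the KKT (first-order optimality) conditions of the constrained least-squares problem. This suffices because the objective is convex and the constraint $\langle\mathbf x,\mathbf 1\rangle = v(\mathbf 1)-v(\mathbf 0)$ is affine, so any feasible KKT point is a global minimizer; moreover $\mathbf Z$ has full column rank (for $n\ge 2$ its rows include scaled copies of every standard basis vector, from the singleton subsets), so $\mathbf Z^\top\mathbf Z \succ 0$ and the minimizer is unique. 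Throughout I would assume $v(\mathbf 0)=0$, which loses no generality: both sides of the claimed identity are invariant under subtracting a constant from $v$ (the regression because $\mathbf y$ and the constraint only see $v(\cdot)-v(\mathbf 0)$; the Shapley values because constant games have zero Shapley value).

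Write $\mathbf A := \mathbf Z^\top\mathbf Z$ and $\mathbf b := \mathbf Z^\top\mathbf y$; the stationarity condition for a feasible $\mathbf x$ is $\mathbf A\mathbf x - \mathbf b = \mu\mathbf 1$ for some scalar $\mu$. First I would compute $\mathbf A$. Since the weights depend only on $\|\mathbf z\|_1$, the diagonal entries of $\mathbf A$ are all equal and the off-diagonal entries are all equal, so $\mathbf A = \alpha\mathbf I + \beta\mathbf 1\mathbf 1^\top$ with $\alpha = A_{11}-A_{12}$. Using Pascal's identity and the elementary ratio $\binom{n-2}{s-1}\big/\binom ns = \frac{s(n-s)}{n(n-1)}$, this difference collapses to $\alpha = \sum_{s=1}^{n-1}\binom{n-2}{s-1}w(s) = \sum_{s=1}^{n-1}\frac{1}{n(n-1)} = \frac1n$. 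Because $\langle\boldsymbol\phi,\mathbf 1\rangle = v(\mathbf 1)$ (the Efficiency property of \Eqref{eq:shap_standard}, easily checked by summing \Eqref{eq:shap_standard} over $i$ and telescoping), the term $\beta\mathbf 1\mathbf 1^\top\boldsymbol\phi = \beta v(\mathbf 1)\mathbf 1$ is a fixed multiple of $\mathbf 1$, so stationarity reduces to showing that $\alpha\phi_i - b_i$ is independent of $i$.

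Verifying this is the crux. I would expand $b_i = \sum_{S\ni i,\,0<|S|<n}w(|S|)v(S)$ and $\phi_i$ as linear functionals of $\bigl(v(S)\bigr)_S$ and compare the coefficient of each $v(T)$. For $1\le |T|\le n-1$ this coefficient in $\alpha\phi_i - b_i$ works out to $\frac{\alpha}{n\binom{n-1}{|T|-1}} - w(|T|)$ when $i\in T$ and $-\frac{\alpha}{n\binom{n-1}{|T|}}$ when $i\notin T$ (for $T=[n]$ it is $\alpha/n$ irrespective of $i$, and $v(\mathbf 0)=0$). So the whole claim reduces to the single identity
\begin{align}
\frac{\alpha}{n}\left(\frac{1}{\binom{n-1}{s-1}} + \frac{1}{\binom{n-1}{s}}\right) = w(s), \qquad 1\le s\le n-1,
\end{align}
which follows by writing $\binom{n-1}{s-1} = \tfrac sn\binom ns$ and $\binom{n-1}{s}=\tfrac{n-s}{n}\binom ns$, so that the left-hand side becomes $\alpha n\cdot w(s)$, and then using $\alpha = 1/n$. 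In fact this shows $\alpha\phi_i - b_i$ is the same functional of $v$ for every $i$, which is more than enough: $\boldsymbol\phi$ is then a KKT point (with $\mu$ equal to that functional plus $\beta v(\mathbf 1)$) and hence, by strong convexity, the unique minimizer.

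The main obstacle is mostly bookkeeping: spotting that $\mathbf Z^\top\mathbf Z$ is a rank-one perturbation of $\tfrac1n\mathbf I$ — the constant $1/n$ being exactly what the argument needs — and organizing the coefficient comparison so that everything funnels into one binomial identity. An alternative I considered is to instead show that the unique constrained minimizer satisfies Null Player, Symmetry, Additivity, and Efficiency, and then invoke the uniqueness of the Shapley value: Additivity is immediate from linearity of the solution map in $v$, Symmetry from the permutation symmetry of $\mathbf Z$, and Efficiency is the constraint itself; but checking Null Player requires essentially the same combinatorial input, so I would present the direct KKT computation.
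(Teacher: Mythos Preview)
Your proof is correct and takes a genuinely different route from the paper's. The paper first invokes Lemma~\ref{lemma:freedom} to pass to the unconstrained problem $\min_{\mathbf x}\|\mathbf{Ax}-\mathbf b\|_2^2$ with $\mathbf A=\mathbf{ZP}$, then computes the minimizer explicitly as $(\mathbf A^\top\mathbf A)^+\mathbf A^\top\mathbf b$ using $(\mathbf A^\top\mathbf A)^+=n\mathbf P$ (from Lemma~\ref{lemma:ATA_form}), and finally expands the $i$th entry of this product over sets, using the recursion $w(s)s=w(s{+}1)(n{-}s{-}1)$, to identify it with $\phi_i-\tfrac{v([n])-v(\emptyset)}{n}$. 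You instead stay with the constrained problem and verify that $\boldsymbol\phi$ is a KKT point, which sidesteps the projection and pseudoinverse entirely: once you observe $\mathbf Z^\top\mathbf Z=\tfrac1n\mathbf I+\beta\mathbf{11}^\top$ (the same structural fact underlying the paper's Lemma~\ref{lemma:ATA_form}), stationarity collapses to the single binomial identity $\tfrac{1}{n^2}\bigl(\binom{n-1}{s-1}^{-1}+\binom{n-1}{s}^{-1}\bigr)=w(s)$. Your approach is arguably more self-contained and isolates the combinatorial content more sharply; the paper's approach has the virtue of reusing exactly the unconstrained reformulation and the $\mathbf A^\top\mathbf A=\tfrac1n\mathbf P$ computation that drive the rest of the analysis, so no extra work is duplicated.
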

For completeness, we provide a self-contained proof of Lemma \ref{lemma:equivalence} in Appendix \ref{appendix:equivalence}. 
The form in Equation \ref{eq:summation_form} inspires the heuristic choice to sample sets with probabilities proportional to $w(\|\mathbf{z}\|_1)$ in Kernel SHAP, as larger terms in the sum should intuitively be sampled with higher probability to reduce variance of the estimate. However, as discussed, a more principled way to approximately solve least squares regression problems via subsampling is to use probabilities proportional to the statistical leverage scores. Formally, these scores are defined as follows:
\begin{definition}[Leverage Scores]
    Consider a matrix $\mathbf{X} \in \mathbb{R}^{\rho \times n}$.
    For $i \in [\rho]$, the leverage score of the $i^{\text{th}}$ row $[\mathbf{X}]_i\in \R^{1\times n}$ is
    $\ell_i \vcentcolon =
    [\mathbf{X}]_i (\mathbf{X^\top X})^{+} [\mathbf{X}]_i^\top.$
\end{definition}

\section{Leverage SHAP}

Statistical leverage scores are traditionally used to approximately solve unconstrained regression problems. Since
Shapley values are the solution to a linearly constrained problem, we first reformulate this into an unconstrained problem. Ultimately, we sample by leverage scores of the reformulated problem, which we prove have a simple closed form that admits efficient sampling.

Concretely, we have the following equivalence:

\begin{restatable}[Constrained to Unconstrained]{lemma}{freedomlemma}\label{lemma:freedom}
Let $\mathbf{P}$ be the projection matrix $\mathbf{I} - \frac1{n} \mathbf{11}^\top$.
Define $\mathbf{A} = \mathbf{Z P}$ and $\mathbf{b} =\mathbf{y} - \mathbf{Z 1} \frac{v(\mathbf{1}) - v(\mathbf{0})}{n}.$
Then
\begin{align}
    \argmin_{\mathbf{x}: \langle \mathbf{x}, \mathbf{1} \rangle = v(\mathbf{1}) - v(\mathbf{0})} 
    \| \mathbf{Z x - y} \|_2^2
    = \argmin_{\mathbf{x}} \left\| \mathbf{Ax - b} \right\|_2^2
    + \mathbf{1} \frac{v(\mathbf{1}) - v(\mathbf{0})}{n}.
\end{align}
Further, we have that $\min_{\mathbf{x}: \langle \mathbf{x}, \mathbf{1} \rangle = v(\mathbf{1}) - v(\mathbf{0})} 
\| \mathbf{Z x - y} \|_2^2
= \min_{\mathbf{x}} \left\| \mathbf{Ax - b} \right\|_2^2.$    
\end{restatable}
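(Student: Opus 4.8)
The plan is a standard affine reparametrization of the feasible set. Write $c \vcentcolon= v(\mathbf{1}) - v(\mathbf{0})$ for brevity, and note that $\mathbf{P} = \mathbf{I} - \frac1n\mathbf{1}\mathbf{1}^\top$ is the orthogonal projector onto the hyperplane $\mathbf{1}^\perp$, so $\mathbf{P}^2 = \mathbf{P}$, $\mathbf{P}\mathbf{1} = \mathbf{0}$, and $\mathbf{1}^\top\mathbf{P} = \mathbf{0}^\top$. First I would verify the set identity $\{\mathbf{x}\in\R^n : \langle\mathbf{x},\mathbf{1}\rangle = c\} = \{\mathbf{P}\mathbf{w} + \tfrac cn\mathbf{1} : \mathbf{w}\in\R^n\}$. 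The inclusion ``$\supseteq$'' is immediate from $\langle \mathbf{P}\mathbf{w} + \tfrac cn\mathbf{1},\mathbf{1}\rangle = \mathbf{1}^\top\mathbf{P}\mathbf{w} + \tfrac cn\|\mathbf{1}\|_2^2 = 0 + c$; for ``$\subseteq$'', given $\mathbf{x}$ with $\langle\mathbf{x},\mathbf{1}\rangle = c$, taking $\mathbf{w} = \mathbf{x}$ yields $\mathbf{P}\mathbf{x} + \tfrac cn\mathbf{1} = \mathbf{x} - \tfrac1n\mathbf{1}(\mathbf{1}^\top\mathbf{x}) + \tfrac cn\mathbf{1} = \mathbf{x}$.

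Next I would substitute this parametrization into the objective. With $\mathbf{A} = \mathbf{Z}\mathbf{P}$ and $\mathbf{b} = \mathbf{y} - \tfrac cn\mathbf{Z}\mathbf{1}$, for $\mathbf{x} = \mathbf{P}\mathbf{w} + \tfrac cn\mathbf{1}$ we have $\mathbf{Z}\mathbf{x} - \mathbf{y} = \mathbf{Z}\mathbf{P}\mathbf{w} + \tfrac cn\mathbf{Z}\mathbf{1} - \mathbf{y} = \mathbf{A}\mathbf{w} - \mathbf{b}$, hence $\|\mathbf{Z}\mathbf{x} - \mathbf{y}\|_2^2 = \|\mathbf{A}\mathbf{w} - \mathbf{b}\|_2^2$. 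Since the first step shows $\mathbf{w}\mapsto\mathbf{P}\mathbf{w}+\tfrac cn\mathbf{1}$ maps $\R^n$ onto the feasible set, the constrained minimization of $\|\mathbf{Z}\mathbf{x}-\mathbf{y}\|_2^2$ has exactly the same optimal value as the unconstrained minimization of $\|\mathbf{A}\mathbf{w}-\mathbf{b}\|_2^2$, which is the ``Further'' claim. For the $\argmin$ claim, any minimizer $\mathbf{w}^\star$ of $\|\mathbf{A}\mathbf{w}-\mathbf{b}\|_2^2$ gives the constrained minimizer $\mathbf{P}\mathbf{w}^\star + \tfrac cn\mathbf{1}$, and conversely every constrained minimizer has this form.

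The one point requiring care is that $\mathbf{A}\mathbf{1} = \mathbf{Z}\mathbf{P}\mathbf{1} = \mathbf{0}$, so the unconstrained problem does not have a unique minimizer: its solution set is a coset of $\mathrm{span}(\mathbf{1})$, while the constrained problem has the unique solution $\boldsymbol{\phi}$ by Lemma~\ref{lemma:equivalence}. To make the displayed identity hold literally one reads $\argmin_{\mathbf{x}}\|\mathbf{A}\mathbf{x}-\mathbf{b}\|_2^2$ as the canonical minimum-norm solution $\mathbf{A}^+\mathbf{b}$, which lies in $\mathrm{row}(\mathbf{A})\subseteq\mathbf{1}^\perp$; for such a representative $\mathbf{w}^\star$ one has $\mathbf{P}\mathbf{w}^\star = \mathbf{w}^\star$, so the reconstruction reduces to adding $\tfrac cn\mathbf{1}$, and that shift is exactly what restores $\langle\cdot,\mathbf{1}\rangle = c$. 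There is no deeper obstacle — the lemma is a routine change of variables — so the ``hard part'' is simply this bookkeeping around the one-dimensional null space of $\mathbf{A}$ (equivalently, composing the unconstrained solution with $\mathbf{P}$ before adding the shift).
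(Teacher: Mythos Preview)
Your proof is correct and follows essentially the same route as the paper: both arguments reparametrize the feasible set via the projector $\mathbf{P}$ (the paper writes $\mathbf{x}=\mathbf{x}'+c\mathbf{1}$ with $\mathbf{x}'\perp\mathbf{1}$ and then replaces $\mathbf{x}'$ by $\mathbf{P}\mathbf{x}'$, which is exactly your map $\mathbf{w}\mapsto\mathbf{P}\mathbf{w}+\tfrac{c}{n}\mathbf{1}$). Your treatment is in fact more careful than the paper's on one point: you explicitly address the non-uniqueness of the unconstrained $\argmin$ arising from $\mathbf{A}\mathbf{1}=\mathbf{0}$ and resolve it via the minimum-norm (pseudoinverse) convention, whereas the paper's proof leaves this implicit.
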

When there are multiple $\mathbf{x}$ that minimize the objective $\left\| \mathbf{Ax - b} \right\|_2^2$, we define $\argmin_{\mathbf{x}} \left\| \mathbf{Ax - b} \right\|_2^2$ as the $\mathbf{x}$ that also minimizes $\| \mathbf{x}\|_2^2$.

Lemma \ref{lemma:freedom} is proven in Appendix \ref{appendix:constrained_solution}. Using the equivalent unconstrained regressio n problem, we consider methods that construct a sampling matrix $\mathbf{S}$ and return:
\begin{align}
    \tilde{\boldsymbol{\phi}} 
    = \argmin_{\mathbf{x}} \| \mathbf{S A x - S b} \|_2^2
    + \mathbf{1} \frac{v(\mathbf{1}) - v(\mathbf{0})}{n} .
\end{align}
The main question is how to build $\mathbf{S}$. Our Leverage SHAP method does so by sampling with probabilities proportional to the leverage scores of $\textbf{A}$. Since naively these scores would be intractable to compute, requiring $O(\rho n^2)$ time, where $\rho = 2^n - 2$ is the number of rows in $\mathbf{A}$, our method rests on the derivation of a simple closed form expression for the leverage scores, which we prove below. 

\subsection{Analytical Form of Leverage Scores}
\begin{lemma}\label{lemma:leverage_scores}
Let $\mathbf{A}$ be as defined in Lemma \ref{lemma:freedom}. The leverage score of the row in $\mathbf{A}$ with index $\mathbf{z}\in \{0,1\}^n$, where $0 <\|\mathbf{z}\|_1 < n$, is equal to
    $\ell_{\mathbf{z}} = \binom{n}{\|\mathbf{z}\|_1}^{-1}$.
\end{lemma}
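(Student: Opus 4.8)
The plan is to compute the Gram matrix $\mathbf{A}^\top\mathbf{A}$ in closed form, observe that it collapses to a scalar multiple of the projection $\mathbf{P}$, and then evaluate the leverage-score formula $\ell_{\mathbf{z}} = [\mathbf{A}]_{\mathbf{z}}(\mathbf{A}^\top\mathbf{A})^+[\mathbf{A}]_{\mathbf{z}}^\top$ directly. First I would write $\mathbf{Z}^\top\mathbf{Z} = \sum_{\mathbf{z}:\,0<\|\mathbf{z}\|_1<n} w(\|\mathbf{z}\|_1)\,\mathbf{z}\mathbf{z}^\top$ and note that, because the index set is invariant under permutations of $[n]$, this matrix has all diagonal entries equal and all off-diagonal entries equal; hence $\mathbf{Z}^\top\mathbf{Z} = \alpha\mathbf{I} + \beta\mathbf{1}\mathbf{1}^\top$ for some scalars $\alpha,\beta$. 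Since $\mathbf{A} = \mathbf{Z}\mathbf{P}$ with $\mathbf{P} = \mathbf{I} - \tfrac1n\mathbf{1}\mathbf{1}^\top$ symmetric and $\mathbf{P}\mathbf{1} = \mathbf{0}$, we get $\mathbf{A}^\top\mathbf{A} = \mathbf{P}\mathbf{Z}^\top\mathbf{Z}\mathbf{P} = \alpha\mathbf{P}^2 + \beta(\mathbf{P}\mathbf{1})(\mathbf{P}\mathbf{1})^\top = \alpha\mathbf{P}$, so only $\alpha$ matters and $\beta$ never needs to be computed.

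Next I would pin down $\alpha$, which equals a diagonal entry of $\mathbf{Z}^\top\mathbf{Z}$ minus an off-diagonal entry, i.e. $\sum_{\mathbf{z}:\,z_1=1,\,z_2=0} w(\|\mathbf{z}\|_1) = \sum_{s=1}^{n-1}\binom{n-2}{s-1}w(s)$. Using the identity $\binom{n-2}{s-1}/\binom{n}{s} = \tfrac{s(n-s)}{n(n-1)}$ together with $w(s) = \big(\binom{n}{s}s(n-s)\big)^{-1}$, every summand simplifies to $\tfrac1{n(n-1)}$, and summing the $n-1$ terms gives $\alpha = \tfrac1n$. Therefore $\mathbf{A}^\top\mathbf{A} = \tfrac1n\mathbf{P}$. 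Since $\mathbf{P}$ is a symmetric idempotent (an orthogonal projection), $\mathbf{P}^+ = \mathbf{P}$, hence $(\mathbf{A}^\top\mathbf{A})^+ = n\mathbf{P}$.

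Finally I would substitute $[\mathbf{A}]_{\mathbf{z}} = \sqrt{w(\|\mathbf{z}\|_1)}\,\mathbf{z}^\top\mathbf{P}$ and $(\mathbf{A}^\top\mathbf{A})^+ = n\mathbf{P}$ into the definition, and use $\mathbf{P}^3 = \mathbf{P}$ to obtain $\ell_{\mathbf{z}} = n\,w(\|\mathbf{z}\|_1)\,\mathbf{z}^\top\mathbf{P}\mathbf{z}$. Because $\mathbf{z}$ is binary, $\mathbf{z}^\top\mathbf{z} = \|\mathbf{z}\|_1 = \mathbf{1}^\top\mathbf{z}$, so $\mathbf{z}^\top\mathbf{P}\mathbf{z} = \|\mathbf{z}\|_1 - \tfrac1n\|\mathbf{z}\|_1^2 = \tfrac{\|\mathbf{z}\|_1(n-\|\mathbf{z}\|_1)}{n}$, and plugging in $w(\|\mathbf{z}\|_1) = \big(\binom{n}{\|\mathbf{z}\|_1}\|\mathbf{z}\|_1(n-\|\mathbf{z}\|_1)\big)^{-1}$ cancels everything except $\binom{n}{\|\mathbf{z}\|_1}^{-1}$, which is the claim. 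The only mildly delicate step is the binomial simplification that produces $\alpha = \tfrac1n$; the rest is routine linear algebra with $\mathbf{P}$. I would also remark that the reduction $\mathbf{A}^\top\mathbf{A} = \tfrac1n\mathbf{P}$ is the crux: besides giving the leverage scores it shows $\mathbf{A}$ has all nonzero singular values equal to $1/\sqrt n$, the well-conditioning fact used elsewhere in the analysis.
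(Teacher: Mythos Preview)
Your proposal is correct and follows essentially the same route as the paper: establish $\mathbf{Z}^\top\mathbf{Z} = \alpha\mathbf{I} + \beta\mathbf{1}\mathbf{1}^\top$, conclude $\mathbf{A}^\top\mathbf{A} = \tfrac{1}{n}\mathbf{P}$ (the paper's Lemma~\ref{lemma:ATA_form}), take the pseudoinverse $n\mathbf{P}$, and evaluate the quadratic form to get $w(\|\mathbf{z}\|_1)\,\|\mathbf{z}\|_1(n-\|\mathbf{z}\|_1) = \binom{n}{\|\mathbf{z}\|_1}^{-1}$. Your use of permutation symmetry and the observation that $\beta$ cancels under $\mathbf{P}$ is a mild streamlining over the paper, which computes the off-diagonal constant $c_n$ explicitly before it disappears, but the argument is otherwise identical.
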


The proof of Lemma \ref{lemma:leverage_scores} depends on an explicit expression for the matrix $\mathbf{A^\top A}$: 
\begin{lemma}\label{lemma:ATA_form}
Let $\mathbf{A}$ be as defined in Lemma \ref{lemma:freedom}. $\mathbf{A^\top A} = \frac1{n} \mathbf{P}$.
\end{lemma}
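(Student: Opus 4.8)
The plan is to expand $\mathbf{A}^\top \mathbf{A} = \mathbf{P}^\top \mathbf{Z}^\top \mathbf{Z}\, \mathbf{P} = \mathbf{P}\, \mathbf{Z}^\top \mathbf{Z}\, \mathbf{P}$ (using that $\mathbf{P}$ is symmetric) and then exploit the permutation symmetry of $\mathbf{Z}^\top \mathbf{Z}$. Writing $\mathbf{Z}^\top \mathbf{Z} = \sum_{\mathbf{z}:\, 0 < \|\mathbf{z}\|_1 < n} w(\|\mathbf{z}\|_1)\, \mathbf{z}\mathbf{z}^\top$, I observe that permuting the coordinates $\{1,\dots,n\}$ merely permutes the terms of this sum (since $w$ depends only on $\|\mathbf{z}\|_1$), so $\mathbf{Z}^\top \mathbf{Z}$ is invariant under simultaneous row/column permutations. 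Consequently all of its diagonal entries equal a common value $a$ and all of its off-diagonal entries equal a common value $b$, i.e.\ $\mathbf{Z}^\top \mathbf{Z} = (a-b)\mathbf{I} + b\, \mathbf{1}\mathbf{1}^\top$.

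Next I would plug this form into $\mathbf{A}^\top\mathbf{A} = \mathbf{P}\, \mathbf{Z}^\top \mathbf{Z}\, \mathbf{P}$ and use the two defining properties of the centering projection $\mathbf{P} = \mathbf{I} - \tfrac1n \mathbf{1}\mathbf{1}^\top$: namely $\mathbf{P}^2 = \mathbf{P}$ and $\mathbf{P}\mathbf{1} = \mathbf{0}$. This gives $\mathbf{A}^\top\mathbf{A} = (a-b)\mathbf{P}\mathbf{P} + b\, \mathbf{P}\mathbf{1}\mathbf{1}^\top\mathbf{P} = (a-b)\mathbf{P}$, since the second term vanishes. So the entire lemma reduces to establishing the single scalar identity $a - b = 1/n$.

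The key point that makes this last step clean is that I never need $a$ or $b$ individually (each is a messy harmonic-number sum); only their difference is needed, and it has a combinatorial interpretation in which the weight $w(s)$ cancels exactly. Concretely, fixing the distinct coordinates $1$ and $2$, $a - b = (\mathbf{Z}^\top\mathbf{Z})_{11} - (\mathbf{Z}^\top\mathbf{Z})_{12} = \sum_{\mathbf{z}} w(\|\mathbf{z}\|_1)(z_1^2 - z_1 z_2) = \sum_{\mathbf{z}:\, z_1 = 1,\, z_2 = 0} w(\|\mathbf{z}\|_1)$. Grouping the binary vectors with $z_1=1$, $z_2=0$ by their weight $s = \|\mathbf{z}\|_1 \in \{1, \dots, n-1\}$, there are exactly $\binom{n-2}{s-1}$ of them, so $a - b = \sum_{s=1}^{n-1} \binom{n-2}{s-1}\, w(s)$. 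Substituting $w(s) = \big(\binom{n}{s}\, s\, (n-s)\big)^{-1}$ and using the elementary identity $\binom{n-2}{s-1} = \binom{n}{s}\cdot \frac{s(n-s)}{n(n-1)}$ (valid for $1 \le s \le n-1$), every factor cancels and each summand collapses to $\frac{1}{n(n-1)}$; summing the $n-1$ terms yields $a - b = \frac1n$, which finishes the proof.

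I expect the only real subtlety to be the bookkeeping in the binomial identity $\binom{n-2}{s-1}\big/\binom{n}{s} = \frac{s(n-s)}{n(n-1)}$ and in stating the permutation-symmetry argument carefully enough to justify the two-scalar form of $\mathbf{Z}^\top\mathbf{Z}$; once those are in place, the rest is routine linear algebra with the projection $\mathbf{P}$.
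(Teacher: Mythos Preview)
Your proposal is correct and follows essentially the same route as the paper: both arguments show $\mathbf{Z}^\top\mathbf{Z}$ has the form $\alpha\,\mathbf{I} + \beta\,\mathbf{1}\mathbf{1}^\top$, conjugate by $\mathbf{P}$ using $\mathbf{P}\mathbf{1}=\mathbf{0}$ and $\mathbf{P}^2=\mathbf{P}$, and reduce everything to the scalar identity $\sum_{s=1}^{n-1}\binom{n-2}{s-1}w(s)=\tfrac1n$. Your version is slightly more streamlined in that you invoke permutation symmetry to get the two-scalar form up front (the paper computes the diagonal and off-diagonal entries separately) and you spell out the binomial identity $\binom{n-2}{s-1}\big/\binom{n}{s}=\tfrac{s(n-s)}{n(n-1)}$ that the paper leaves as ``a direct calculation.''
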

This fact will also be relevant later in our analysis, as it implies that $\mathbf{A^\top A}$ is well-conditioned. Perhaps surprisingly, all of its non-zero singular values are equal to $1/n$.
\begin{proof}[Proof of Lemma \ref{lemma:ATA_form}]
Recall that $\mathbf{A} = \mathbf{ZP}$, so $\mathbf{A}^\top \mathbf{A} = \mathbf{P^\top Z^\top}\mathbf{ZP}$. We start by deriving explicit expressions for the $(i,j)$ entry of 
$\mathbf{Z}^\top \mathbf{Z}$, denoted by $[\mathbf{Z}^\top \mathbf{Z}]_{(i,j)}$, for all $i\in [n]$ and $j\in [n]$. In particular, we can check that $[\mathbf{Z}^\top \mathbf{Z}]_{(i,j)}
= \sum_{\mathbf{z} \in \{0,1\}^n} z_i z_j w(\|\mathbf{z}\|_1).$
So, when $i\neq j$,
\begin{align}\label{eq:zz_entry_i_neq_j}
    [\mathbf{Z}^\top \mathbf{Z}]_{(i,j)} = \sum_{\mathbf{z}: z_i, z_j = 1} w(\|\mathbf{z}\|_1)
    &= \sum_{s=2}^{n-1} \binom{n-2}{s-2} w(s).
    % = \frac{H_{n-2} - \frac1{n-1} - 1}{n}.    
\end{align}
Let $c_n$ denote the above quantity, which does not depend on $i$ and $j$. I.e., $[\mathbf{Z}^\top \mathbf{Z}]_{(i,j)} = c_n$ for $i\neq j.$
%The first equality follows from counting the number of $\mathbf{z}$ such that $\| \mathbf{z}\|_1 = s$.
% The last equality follows from this \href{https://www.wolframalpha.com/input?i=sum+%28%28n-2%29+choose+%28s-2%29%29+%2F+%28%28n+choose+s%29+*+s+*+%28n-s%29%29%2C+s%3D2+to+%28n-1%29}{WolframAlpha query}.

For the case $i =j$, let $j'$ be any fixed index not equal to $i$. We have that
\begin{align}\label{eq:zz_entry_i_eq_j}
    [\mathbf{Z}^\top \mathbf{Z}]_{(i,i)} &= \sum_{\mathbf{z}: z_i = 1} w(\| \mathbf{z} \|_1)
    = \sum_{\mathbf{z}: z_i = 1, z_{j'}=0} w(\| \mathbf{z} \|_1)
    + \sum_{\mathbf{z}: z_i = 1, z_{j'}=1} w(\| \mathbf{z} \|_1)
    \nonumber\\&= \sum_{s=1}^{n-1} \binom{n-2}{s-1} w(s)
    + \sum_{s=2}^{n-1} \binom{n-2}{s-2} w(s)
    = \frac1{n}
    + c_n.
\end{align}
The last equality can be verified via a direct calculation. 
%The first equality partitions the summation in such a way that we can relate part of it to off-diagonal entries.
% The second equality follows by counting the number of $\mathbf{z}$ such that $\| \mathbf{z}\|_1 = s$.
% The last equality follows from this \href{https://www.wolframalpha.com/input?i=sum+%28%28n-2%29+choose+%28s-1%29%29+%2F%28%28n+choose+s%29+*+s+*+%28n-s%29%29%2C+s%3D1+to+%28n-1%29}{WolframAlpha query} and from Equation \ref{eq:zz_entry_i_neq_j}.
By Equations \ref{eq:zz_entry_i_neq_j} and \ref{eq:zz_entry_i_eq_j}, we have that $\mathbf{Z^\top Z} = \frac1{n} \mathbf{I} + c_n \mathbf{1} \mathbf{1}^\top$.
So we conclude that: 
\begin{align*}
    \mathbf{A}^\top \mathbf{A}=
    \mathbf{P^\top Z^\top Z P}
    &= \left(\mathbf{I} - \frac1{n} \mathbf{11^\top}\right)
    \left(\frac1{n} \mathbf{I} + c_n \mathbf{11}^\top\right)
    \left(\mathbf{I} - \frac1{n} \mathbf{11^\top}\right)
    %\nonumber \\ &= \left(\mathbf{I} - \frac1{n} \mathbf{11^\top}\right) \left(\frac1{n} \mathbf{I} + c_n \mathbf{11}^\top - \frac1{n^2} \mathbf{11^\top} - \frac{c_n}{n} n \mathbf{11^\top} \right)
    %\nonumber \\ &= \left(\mathbf{I} - \frac1{n} \mathbf{11^\top}\right) \left(\frac1{n}\mathbf{I} - \frac{1}{n^2} \mathbf{11}^\top\right)
    = \frac1{n} \left(\mathbf{I} - \frac1{n} \mathbf{11^\top}\right). & &\qedhere
\end{align*}
\end{proof}

With Lemma \ref{lemma:ATA_form}, we are now ready to analytically compute the leverage scores.

\begin{proof}[Proof of Lemma \ref{lemma:leverage_scores}]
The leverage scores of $\mathbf{A}$ are given by $
\ell_{\mathbf{z}} =
[\mathbf{ZP}]_{\mathbf{z}}
\left( \mathbf{A^\top A}\right)^{+} 
[\mathbf{ZP}]_{\mathbf{z}}^\top$.
By Lemma \ref{lemma:ATA_form}, we have that $\mathbf{A^\top A} = \frac1{n} \left(\mathbf{I} - \frac1{n} \mathbf{11}^\top\right)$ and thus $(\mathbf{A^\top A})^+ = n\mathbf{I} - \mathbf{11}^\top$.
%Observe that $\mathbf{I} - \frac1{n} \mathbf{1 1^\top}$ can be written as the outerproduct sum of an orthonormal basis without the eigenvector corresponding to $\frac1{\sqrt{n}} \mathbf{1}$.
% Since the pseudoinverse inverts all non-zero eigenvalues, the pseudoinverse is given by $(\mathbf{P ^\top Z^\top Z P})^{+} = n\mathbf{I} - \mathbf{11}^\top$.
Recall that $[\mathbf{Z}]_\mathbf{z} = \sqrt{w(\| \mathbf{z} \|_1)} \mathbf{z}^\top$ so $[\mathbf{Z P}]_{\mathbf{z}}  =  \sqrt{w(\| \mathbf{z} \|_1)} \mathbf{z}^\top \left(\mathbf{I} - \frac1{n} \mathbf{1 1^\top}\right)$. We thus have:
\begin{align}
    \ell_{\mathbf{z}}
    &= \sqrt{w(\|\mathbf{z}\|_1)} \left(\mathbf{z} - \frac{\|\mathbf{z}\|_1}{n} \mathbf{1}\right)^\top
    (n \mathbf{I} - \mathbf{11}^\top)
    \sqrt{w(\|\mathbf{z}\|_1)} \left(\mathbf{z} - \frac{\|\mathbf{z}\|_1}{n} \mathbf{1}\right)
    %\nonumber \\ &= w(\|\mathbf{z}\|_1) \left(\mathbf{z} - \frac{\|\mathbf{z}\|_1}{n} \mathbf{1}\right)^\top \left( n \mathbf{z} - n \frac{\| \mathbf{z} \|_1}{n} \mathbf{1} - \mathbf{1}^\top \mathbf{z} \mathbf{1} + \frac{\| \mathbf{z}_1 \|_1}{n} \mathbf{1^\top 1} \mathbf{1} \right)
    %\nonumber \\ &= w(\|\mathbf{z}\|_1) \left(\mathbf{z} - \frac{\|\mathbf{z}\|_1}{n} \mathbf{1}\right)^\top     \left( n \mathbf{z} - \| \mathbf{z} \|_1 \mathbf{1} \right)
    %\nonumber \\ &= w(\|\mathbf{z}\|_1) \left(n \mathbf{z^\top z} - \| \mathbf{z} \|_1 \mathbf{z^\top z} - \frac{\| \mathbf{z}\|_1}{n} n \mathbf{1^\top z} + \frac{\| \mathbf{z}\|_1}{n} \| \mathbf{z} \|_1 \mathbf{1^\top 1} \right)
    \nonumber \\ &
    =w(\|\mathbf{z}\|_1) (n\| \mathbf{z} \|_1 - \| \mathbf{z} \|_1^2)
    = w(\|\mathbf{z}\|_1)
    \| \mathbf{z} \|_1 (n - \| \mathbf{z} \|_1)
    = \binom{n}{\| \mathbf{z} \|_1}^{-1}.\qedhere
\end{align}
\end{proof}

Notice that the leverage scores simply correspond to sampling every row proportional to the number of sets of that size.
In retrospect, this is quite intuitive given the original definition of Shapley values in \Eqref{eq:shap_standard}.
% However, to the best of our knowledge, has not been tried before.
% Also note that, because of the special structure of $\mathbf{A}$, the leverage scores are equal to the norms of each row.
As shown in Figure \ref{fig:sampling_probs}, unlike the Kernel SHAP sampling distribution, leverage score sampling ensures that all subset sizes are equally represented in our sample from $\mathbf{A}$. 

\begin{figure*}[tb!]
    \centering
    \includegraphics[width=\linewidth]{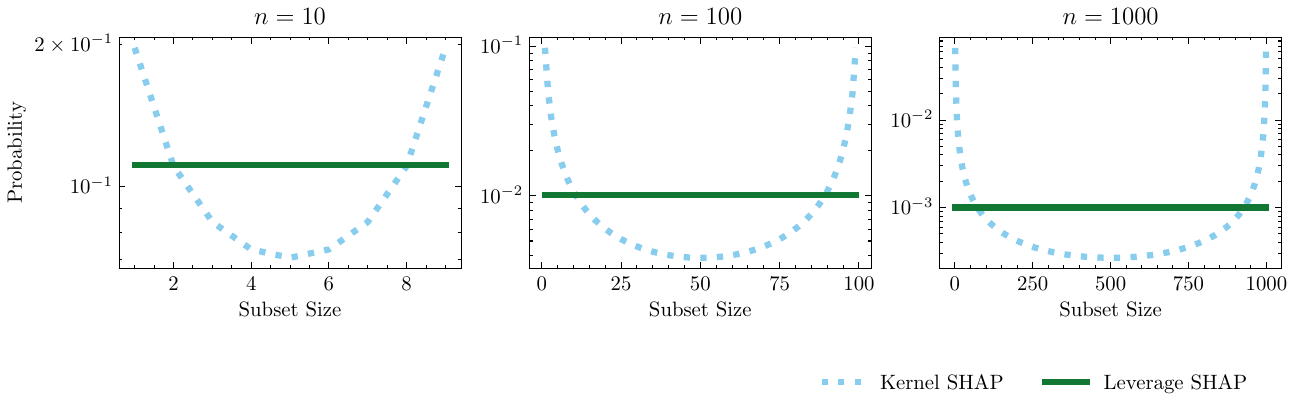}
    \caption{
    Let $S$ be a subset sampled with the Kernel SHAP or Leverage SHAP probabilities.
    The plots show the distribution of the set size $|S|$ for different $n$.
    As in in the definition of Shapley values, the leverage score distribution places equal \emph{total weight} on each subset size, contrasting with Kernel SHAP which over-weights small and large subsets.
    }
    \label{fig:sampling_probs}
    \vspace{-1em}
\end{figure*}

\subsection{Our Algorithm}
With Lemma \ref{lemma:leverage_scores} in place, we are ready to present Leverage SHAP, which is given as Algorithm \ref{alg:leverage_shap}. In addition to leverage score sampling, the algorithm incorporates paired sampling and sampling without replacement, both of which are used in optimized implementations of Kernel SHAP.

For paired sampling, the idea is to sample rows with probabilities proportional to the leverage scores, but in a \emph{correlated way}: any time we sample index $\mathbf{z}$, we also select its complement, $\bar{\mathbf{z}}$, where $\bar{z}_i = 1 - z_i$ for $i \in [n]$. Note that, by the symmetry of $\mathbf{A}$'s leverage scores, $\ell_{\mathbf{z}} = \ell_{\bar{\mathbf{z}}}$.
Similarly, $w(\| \mathbf{z} \|_1) = w(\| \bar{\mathbf{z}} \|_1)$. Moving forward, let $\mathcal{Z}$ denote the set of pairs $(\mathbf{z},\bar{\mathbf{z}})$ where $0 < \|\mathbf{z}\|_1 < n$.

To perform paired sampling without replacement, we select indices $(\mathbf{z}, \bar{\mathbf{z}})$ independently at random with probability $\min(1, c (\ell_{\mathbf{z}}+\ell_{\bar{\mathbf{z}}})) = \min(1, 2c \ell_{\mathbf{z}})$, where $c >1$ is an oversampling parameter. All rows that are sampled are included in a subsampled matrix $\mathbf{Z'P}$ and reweighted by the inverse of the probability with which they were sampled. The expected number of row samples in $\mathbf{Z'P}$ is equal to $\sum_{(\mathbf{z}, \bar{\mathbf{z}})} \min(1, 2c \ell_{\mathbf{z}})$. We choose $c$ via binary search so that this expectation equals  $m -2$, where $m$ is our target number of value function evaluations (two evaluations are reserved to compute $v(\mathbf{1}) - v(\mathbf{0}))$. I.e., we choose $c$ to solve the equation:
\begin{align}\label{eq:expected_samples}
    m - 2 
    % \mathbb{E}\sum_{(\mathbf{z}, \bar{\mathbf{z}}) \in \mathcal{Z}} 2 \cdot \mathbbm{1}[(\mathbf{z}, \bar{\mathbf{z}}) \textnormal{ selected}]
    = \sum_{s=1}^{n-1} \min\left(1, 2c \binom{n}{s}^{-1}\right) \binom{n}{s}.
    %= \sum_{s=1}^{\lfloor n/2 \rfloor} \min\left(1, 2c \binom{n}{s}^{-1}\right) \binom{n}{s} \textrm{a}(s,n).
\end{align}

Note that our procedure is different from the with-replacement Kernel SHAP procedure described in Section \ref{sec:efficient_shap_vals}. Sampling without replacement requires more care to avoid iterating over the exponential number of pairs in $\mathcal{Z}$. Fortunately, we can exploit the fact that there are only $n-1$ different set sizes, and all sets of the same size have the same leverage score. This allows us to determine how many sets of a different size should be sampled (by drawing a binomial random variable), and we can then sample the required number of sets of each size uniformly at random. Ultimately, we can collect $m$ samples in $O(mn^2)$ time. Details are deferred to Appendix \ref{appendix:bernoulli_code}. 

Because we sample independently without replacement, it is possible that the number of function evaluations $| \mathcal{Z}' |+2$ used by Algorithm \ref{alg:leverage_shap} exceeds $m$.
However, because this number is itself a sum of independent random variables, it tightly concentrates around its expectation, $m$.
If necessary, Leverage SHAP can be implemented so that the number of evaluations is deterministically $m$ (i.e., instead of sampling from a Binomial on Line \ref{line:binomial_sample} in Algorithm \ref{alg:bernoulli_sampling}, set $m_s$ to be the expectation of the Binomial).
Since this version of Leverage SHAP no longer has independence, our theoretical analysis no longer applies.
However, we still believe the guarantees hold even without independence, and leave a formal analysis to future work.

\begin{algorithm}[tb!]
\caption{Leverage SHAP}\label{alg:leverage_shap}
\begin{algorithmic}[1]
\Input $n$: number of players, $v$: set function, $m$: target number of function evaluations (at least $n$)
\Output $\tilde{\boldsymbol{\phi}}$: approximate Shapley values
\State $m \gets \min(m, 2^n)$ \Comment{$2^n$ samples to solve exactly}
\State Find $c$ via binary search so that $m-2 = \sum_{s=1}^{\lfloor n/2 \rfloor} \min(\binom{n}{s}, 2c) $ \Comment{Equation \ref{eq:expected_samples}}
\State $\mathcal{Z}' \gets$ \texttt{BernoulliSample}$(n, c)$ \Comment{Sample from $\mathcal{Z}$ without replacement (Algorithm \ref{alg:bernoulli_sampling})}
\State $i \gets 0$, $\mathbf{W} \gets \mathbf{0}_{2|\mathcal{Z'}| \times 2|\mathcal{Z'}|}$, $\mathbf{Z}' \gets \mathbf{0}_{2|\mathcal{Z'}| \times n}$
\For{$(\mathbf{z}, \bar{\mathbf{z}}) \in \mathcal{Z}'$}
\Comment{Build sampled regression problem}
\State $\mathbf{Z}'_{(i,)} \gets \mathbf{z}^\top$, $\mathbf{Z}'_{(i+1,)} \gets \bar{\mathbf{z}}^\top$
\State $W_{(i,i)} \gets W_{(i+1,i+1)} \gets \frac{w(\| \mathbf{z} \|_1)}{\min(1, 2c \cdot \ell_{\mathbf{z}})}$
\Comment{Correct weights in expectation}
\State $i \gets i+2$
\EndFor
\State $\mathbf{y'} \gets v(\mathbf{Z}') - v(\mathbf{0}) \mathbf{1}$ \Comment{$v(\mathbf{Z}')$ evaluates $v$ on $m-2$ inputs}
\State $\mathbf{b'} \gets \mathbf{y'} - \frac{v(\mathbf{1}) - v(\mathbf{0})}{n} \mathbf{Z' 1}$
\State Compute $$\tilde{\boldsymbol{\phi}}^\perp \gets \argmin_{\mathbf{x}} \| \mathbf{W}^\frac12 \mathbf{Z'} \mathbf{P x} - \mathbf{W}^\frac12 \mathbf{b'} \|_2$$
\Comment{Standard least squares}
\State \textbf{return} $\tilde{\boldsymbol{\phi}} \gets \tilde{\boldsymbol{\phi}}^\perp + \frac{v(\mathbf{1}) - v(\mathbf{0})}{n} \mathbf{1}$
\end{algorithmic}
\end{algorithm}

\section{Theoretical Guarantees}\label{sec:theory_guarantees}
    
As discussed, Leverage SHAP offers strong theoretical approximation guarantees. 
Our main result is Theorem \ref{thm:main} where we show that, with $m = O(n \log (n/\delta) + \frac{n}{\epsilon\delta})$ samples in expectation, Algorithm \ref{alg:leverage_shap} returns a solution $\tilde{\boldsymbol{\phi}}$ that, with probability $1-\delta$, satisfies
\begin{align}
    \| \mathbf{A} \tilde{\boldsymbol{\phi}} - \mathbf{b} \|_2^2
    \leq (1 + \epsilon) 
    \| \mathbf{A} \boldsymbol{\phi} - \mathbf{b} \|_2^2.
\end{align}
We prove this guarantee in Appendix \ref{appendix:theory}.
The proof modifies the standard analysis of leverage scores for active least squares regression, which requires two components: 1) a \emph{subspace embedding guarantee}, proven with a matrix Chernoff bound applied to a sum of rank-$1$ random matrices, and 2) an \emph{approximate matrix-multiplication guarantee}, proven with a second moment analysis.
We replace these steps with 1) a matrix Bernstein bound applied to a sum of rank-$2$ random matrices and 2) a block-approximate matrix multiplication guarantee.

While well-motivated by the connection between Shapley values and linear regression, the objective function in Theorem \ref{thm:main} is not intuitive.
Instead, we may be interested in the $\ell_2$-norm error between the true Shapley values and the estimated Shapley values.
Fortunately, we can use special properties of our problem to show that Theorem \ref{thm:main} implies the following corollary on the $\ell_2$-norm error.

\begin{restatable}{corollary}{corollarynorm}\label{coro:l2norm}
Suppose $\tilde{\boldsymbol{\phi}}$ satisfies 
$\| \mathbf{A} \tilde{\boldsymbol{\phi}} - \mathbf{b} \|_2^2 \leq (1+\epsilon) \| \mathbf{A \boldsymbol{\phi} - b} \|_2^2$.
Let $\gamma = \| \mathbf{A} \boldsymbol{\phi} - \mathbf{b} \|_2^2 / \| \mathbf{A} \boldsymbol{\phi} \|_2^2$. 
Then
\begin{align*}
\| \boldsymbol{\phi} - \tilde{\boldsymbol{\phi}} \|_2^2 
\leq \epsilon \gamma \| \boldsymbol{\phi}\|_2^2.
\end{align*}
\end{restatable}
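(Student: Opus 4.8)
The plan is to route everything through the single spectral identity $\mathbf{A}^\top\mathbf{A} = \frac{1}{n}\mathbf{P}$ from Lemma~\ref{lemma:ATA_form}, together with the normal equations satisfied by the true Shapley values. First I would set $\mathbf{d} \vcentcolon= \tilde{\boldsymbol{\phi}} - \boldsymbol{\phi}$. Both $\boldsymbol{\phi}$ (by Lemma~\ref{lemma:equivalence}) and $\tilde{\boldsymbol{\phi}}$ (by construction in Algorithm~\ref{alg:leverage_shap}, which returns a vector satisfying the efficiency constraint) obey $\langle\cdot,\mathbf{1}\rangle = v(\mathbf{1})-v(\mathbf{0})$, so $\langle\mathbf{d},\mathbf{1}\rangle = 0$ and hence $\mathbf{P}\mathbf{d} = \mathbf{d}$. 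Lemma~\ref{lemma:ATA_form} then gives $\|\mathbf{A}\mathbf{d}\|_2^2 = \mathbf{d}^\top\mathbf{A}^\top\mathbf{A}\,\mathbf{d} = \frac{1}{n}\mathbf{d}^\top\mathbf{P}\mathbf{d} = \frac{1}{n}\|\mathbf{d}\|_2^2$, so it suffices to bound $\|\mathbf{A}\mathbf{d}\|_2^2$ and then multiply by $n$.

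Next I would establish that $\boldsymbol{\phi}$ is itself a global minimizer of the \emph{unconstrained} least-squares objective $\|\mathbf{A}\mathbf{x}-\mathbf{b}\|_2^2$. By Lemma~\ref{lemma:freedom}, $\boldsymbol{\phi} - \frac{v(\mathbf{1})-v(\mathbf{0})}{n}\mathbf{1}$ is an unconstrained minimizer; since $\mathbf{A}\mathbf{1} = \mathbf{Z}\mathbf{P}\mathbf{1} = \mathbf{0}$, adding $\frac{v(\mathbf{1})-v(\mathbf{0})}{n}\mathbf{1}$ back in does not change the residual $\mathbf{A}\mathbf{x}-\mathbf{b}$, so $\boldsymbol{\phi}$ also minimizes $\|\mathbf{A}\mathbf{x}-\mathbf{b}\|_2^2$. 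The normal equations then yield $\mathbf{A}^\top(\mathbf{A}\boldsymbol{\phi}-\mathbf{b}) = \mathbf{0}$, so in particular the residual $\mathbf{A}\boldsymbol{\phi}-\mathbf{b}$ is orthogonal to every vector in the column space of $\mathbf{A}$, and hence to $\mathbf{A}\mathbf{d}$. Writing $\mathbf{A}\tilde{\boldsymbol{\phi}} - \mathbf{b} = (\mathbf{A}\boldsymbol{\phi} - \mathbf{b}) + \mathbf{A}\mathbf{d}$ and applying the Pythagorean theorem gives $\|\mathbf{A}\tilde{\boldsymbol{\phi}}-\mathbf{b}\|_2^2 = \|\mathbf{A}\boldsymbol{\phi}-\mathbf{b}\|_2^2 + \|\mathbf{A}\mathbf{d}\|_2^2$.

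Finally, combining this identity with the hypothesis $\|\mathbf{A}\tilde{\boldsymbol{\phi}}-\mathbf{b}\|_2^2 \le (1+\epsilon)\|\mathbf{A}\boldsymbol{\phi}-\mathbf{b}\|_2^2$ gives $\|\mathbf{A}\mathbf{d}\|_2^2 \le \epsilon\|\mathbf{A}\boldsymbol{\phi}-\mathbf{b}\|_2^2 = \epsilon\gamma\|\mathbf{A}\boldsymbol{\phi}\|_2^2$, using the definition of $\gamma$. Substituting $\|\mathbf{A}\mathbf{d}\|_2^2 = \frac{1}{n}\|\mathbf{d}\|_2^2$ and bounding $\|\mathbf{A}\boldsymbol{\phi}\|_2^2 = \frac{1}{n}\|\mathbf{P}\boldsymbol{\phi}\|_2^2 \le \frac{1}{n}\|\boldsymbol{\phi}\|_2^2$ (since $\mathbf{P}$ is an orthogonal projection), then multiplying through by $n$, yields $\|\boldsymbol{\phi} - \tilde{\boldsymbol{\phi}}\|_2^2 \le \epsilon\gamma\|\boldsymbol{\phi}\|_2^2$. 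I do not expect a genuine obstacle here; the only points needing care are (i) verifying that $\boldsymbol{\phi}$ itself — and not merely a shifted copy — minimizes the unconstrained objective, so that the normal-equation orthogonality is available, which hinges precisely on $\mathbf{A}\mathbf{1}=\mathbf{0}$; and (ii) retaining the inequality $\|\mathbf{P}\boldsymbol{\phi}\|_2 \le \|\boldsymbol{\phi}\|_2$ rather than an equality, since $\boldsymbol{\phi}$ generally has a nonzero component along $\mathbf{1}$ (indeed $\langle\boldsymbol{\phi},\mathbf{1}\rangle = v(\mathbf{1})-v(\mathbf{0})$).
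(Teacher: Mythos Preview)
Your proposal is correct and follows essentially the same route as the paper: the Pythagorean split $\|\mathbf{A}\tilde{\boldsymbol{\phi}}-\mathbf{b}\|_2^2 = \|\mathbf{A}\boldsymbol{\phi}-\mathbf{b}\|_2^2 + \|\mathbf{A}(\tilde{\boldsymbol{\phi}}-\boldsymbol{\phi})\|_2^2$, the identity $\|\mathbf{A}(\tilde{\boldsymbol{\phi}}-\boldsymbol{\phi})\|_2^2 = \tfrac{1}{n}\|\tilde{\boldsymbol{\phi}}-\boldsymbol{\phi}\|_2^2$ from constraint satisfaction, and the bound $\|\mathbf{A}\boldsymbol{\phi}\|_2^2 \le \tfrac{1}{n}\|\boldsymbol{\phi}\|_2^2$. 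The only cosmetic difference is that the paper packages the two norm identities into a separate lemma proved via an explicit SVD of $\mathbf{A}$, whereas you obtain them more directly from $\mathbf{A}^\top\mathbf{A}=\tfrac{1}{n}\mathbf{P}$ and $\mathbf{P}\mathbf{d}=\mathbf{d}$; your derivation is arguably cleaner but the content is the same.
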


The proof of Corollary \ref{coro:l2norm} appears in Appendix \ref{appendix:corollary_proof}.
The statement of the corollary is in terms of a problem-specific parameter $\gamma$ which intuitively measures how well the \textit{optimal} coefficients $\boldsymbol{\phi}$ can reach the target vector $\mathbf{b}$.
In Appendix \ref{appendix:gamma}, we explore how the performance varies with $\gamma$ in practice.
We find (see e.g., Figures \ref{fig:gamma-l2}, \ref{fig:ablation_gamma-l2}, \ref{fig:ablation_gamma-objective}) that performance does erode as $\gamma$ increases for all regression-based algorithms, suggesting that $\gamma$ is not an artifact of our analysis.

\section{Experiments}
\label{sec:experiments}

In the experiments, we evaluate Leverage SHAP and Kernel SHAP based on how closely they align with the ground truth Shapley values.
We primarily use the $\ell_2$-norm error i.e., $\| \bm{\phi} - \tilde{\bm{\phi}}\|_2^2/\| \bm{\phi}\|_2^2$ as the $\ell_2$ norm error.
We run our experiments on eight popular datasets from the SHAP library \citep{lundberg2017unified}.
We find that Leverage SHAP can even outperform the highly optimized Kernel SHAP in the SHAP library, especially for large $n$ or when access to the set function is noisy.

\begin{figure}[tb!]
    \centering
    \includegraphics[width=\linewidth]{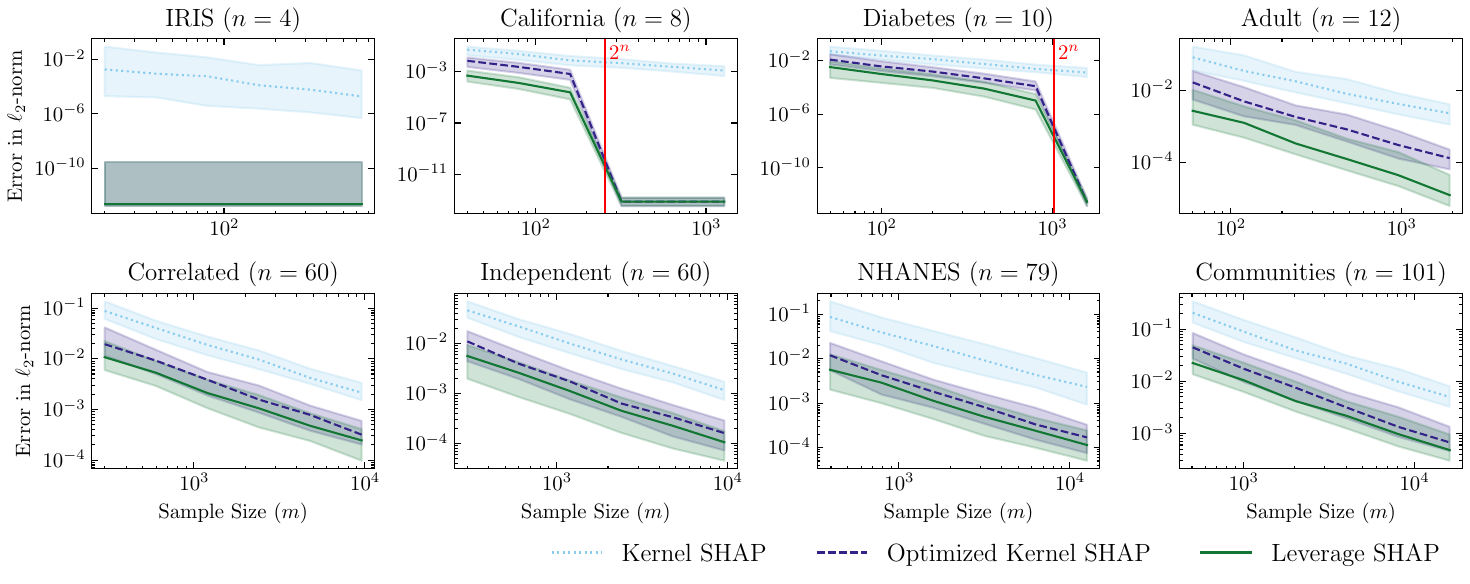}
    \caption{The $\ell_2$-norm error between the estimated Shapley values and ground truth Shapley values as a function of sample size. The lines report the median error while the shaded regions encompass the first to third quartile over 100 runs. Besides the setting where the exact Shapley values can be recovered, Leverage SHAP reliably outperforms even Optimized Kernel SHAP: the third quartile of the Leverage SHAP error is roughly the median error of Optimized Kernel SHAP.}
    \label{fig:size-l2}
    \vspace{-1em}
\end{figure}

\textbf{Implementation Details.}
In order to compute the ground truth Shapley values for large values of $n$, we use a tree-based model for which we can compute the exact Shapley values efficiently using Tree SHAP \citep{lundberg2017unified}. All of our code is written in Python and can be found on Github\footnote{\url{github.com/rtealwitter/leverageshap}}. We use the SHAP library for the Optimized Kernel SHAP implementation, Tree SHAP, and the datasets. We use XGBoost for training and evaluating trees, under the default parameters.

\textbf{Additional experiments.} So that we do not overcrowd the plots, we show performance on Kernel SHAP, the Optimized Kernel SHAP implementation, and Leverage SHAP in the main body.
We also run all of the experiments with the ablated estimators in Appendix \ref{appendix:ablation}.
The ablation experiments (see e.g., Figure \ref{fig:ablation_size-l2}) suggest that Bernoulli sampling (sampling without replacement) improves Leverage SHAP for small $n$ and the paired sampling improves Leverage SHAP for all settings.
Because it is more interpretable, we report the $\ell_2$-norm error metric to compare the estimated to the ground truth Shapley values. We also report the linear objective error for similar experiments in Appendix \ref{appendix:objective}.

Figure \ref{fig:size-l2} plots the performance for Kernel SHAP, the Optimized Kernel SHAP, and Leverage SHAP as the number of samples varies (we set $m=5n, 10n, 20n, \ldots, 160n$).
For each estimator, the line is the median error and the shaded region encompasses the first and third quartile over 100 random runs.\footnote{We report the median and quartiles instead of the mean and standard deviation since the mean minus standard deviation is negative for the small error values in our experiments.}
Kernel SHAP gives the highest error in all settings of $m$ and $n$, pointing to the importance of the paired sampling and sampling without replacement optimizations in both Optimized Kernel SHAP and Leverage SHAP.
Because both methods sample without replacement, Optimized Kernel SHAP and Leverage SHAP achieve essentially machine precision as $m$ approaches $2^n$.
For all other settings,
Leverage SHAP reliably outperforms Optimized Kernel SHAP; the third quartile of Leverage SHAP is generally at the median of Optimized Kernel SHAP.
The analogous experiment for the linear objective error depicted in Figure \ref{fig:ablation_size-objective} indicates the same findings.

\begin{table}[tb!]
    \centering
    \resizebox{\linewidth}{!}{ 
\begin{tabular} {lcccccccc}
\toprule
 & \textbf{IRIS} & \textbf{California} & \textbf{Diabetes} & \textbf{Adult} & \textbf{Correlated} & \textbf{Independent} & \textbf{NHANES} & \textbf{Communities} \\ 
\midrule
\addlinespace[1ex] 
\textbf{Kernel SHAP} &  &  &  &  &  &  &  &  \\ 
\hspace{7pt}Mean & \cellcolor{bronze!60}0.026 & \cellcolor{bronze!60}0.0266 & \cellcolor{bronze!60}0.0553 & \cellcolor{bronze!60}0.0673 & \cellcolor{bronze!60}0.0465 & \cellcolor{bronze!60}0.0264 & \cellcolor{bronze!60}0.0604 & \cellcolor{bronze!60}0.12 \\ 
\hspace{7pt}1st Quartile & \cellcolor{bronze!60}1.61e-05 & \cellcolor{bronze!60}0.00829 & \cellcolor{bronze!60}0.0116 & \cellcolor{bronze!60}0.0182 & \cellcolor{bronze!60}0.0244 & \cellcolor{bronze!60}0.0134 & \cellcolor{bronze!60}0.0202 & \cellcolor{bronze!60}0.0563 \\ 
\hspace{7pt}2nd Quartile & \cellcolor{bronze!60}0.000898 & \cellcolor{bronze!60}0.0236 & \cellcolor{bronze!60}0.0229 & \cellcolor{bronze!60}0.0345 & \cellcolor{bronze!60}0.0404 & \cellcolor{bronze!60}0.0217 & \cellcolor{bronze!60}0.0388 & \cellcolor{bronze!60}0.089 \\ 
\hspace{7pt}3rd Quartile & \cellcolor{bronze!60}0.0328 & \cellcolor{bronze!60}0.0424 & \cellcolor{bronze!60}0.0524 & \cellcolor{bronze!60}0.0936 & \cellcolor{bronze!60}0.056 & \cellcolor{bronze!60}0.0303 & \cellcolor{bronze!60}0.0843 & \cellcolor{bronze!60}0.149 \\ 
\addlinespace[1ex] 
\textbf{Optimized Kernel SHAP} &  &  &  &  &  &  &  &  \\ 
\hspace{7pt}Mean & \cellcolor{silver!60}4.84e-09 & \cellcolor{silver!60}0.00342 & \cellcolor{silver!60}0.0093 & \cellcolor{silver!60}0.00989 & \cellcolor{silver!60}0.0117 & \cellcolor{silver!60}0.00474 & \cellcolor{silver!60}0.00758 & \cellcolor{silver!60}0.0233 \\ 
\hspace{7pt}1st Quartile & \cellcolor{silver!60}1.66e-13 & \cellcolor{silver!60}0.000802 & \cellcolor{silver!60}0.00161 & \cellcolor{silver!60}0.00187 & \cellcolor{silver!60}0.00499 & \cellcolor{silver!60}0.00194 & \cellcolor{silver!60}0.00156 & \cellcolor{silver!60}0.00962 \\ 
\hspace{7pt}2nd Quartile & \cellcolor{gold!60}2.17e-13 & \cellcolor{silver!60}0.00238 & \cellcolor{silver!60}0.00356 & \cellcolor{silver!60}0.00489 & \cellcolor{silver!60}0.00916 & \cellcolor{silver!60}0.00391 & \cellcolor{silver!60}0.00425 & \cellcolor{silver!60}0.0173 \\ 
\hspace{7pt}3rd Quartile & \cellcolor{silver!60}2.69e-10 & \cellcolor{silver!60}0.00489 & \cellcolor{silver!60}0.00868 & \cellcolor{silver!60}0.0122 & \cellcolor{silver!60}0.015 & \cellcolor{silver!60}0.00695 & \cellcolor{silver!60}0.00871 & \cellcolor{silver!60}0.0325 \\ 
\addlinespace[1ex] 
\textbf{Leverage SHAP} &  &  &  &  &  &  &  &  \\ 
\hspace{7pt}Mean & \cellcolor{gold!60}4.84e-09 & \cellcolor{gold!60}0.000311 & \cellcolor{gold!60}0.0023 & \cellcolor{gold!60}0.00477 & \cellcolor{gold!60}0.00716 & \cellcolor{gold!60}0.00288 & \cellcolor{gold!60}0.00532 & \cellcolor{gold!60}0.0156 \\ 
\hspace{7pt}1st Quartile & \cellcolor{gold!60}1.66e-13 & \cellcolor{gold!60}4.47e-05 & \cellcolor{gold!60}0.000215 & \cellcolor{gold!60}0.000477 & \cellcolor{gold!60}0.00289 & \cellcolor{gold!60}0.000843 & \cellcolor{gold!60}0.000995 & \cellcolor{gold!60}0.0062 \\ 
\hspace{7pt}2nd Quartile & \cellcolor{gold!60}2.17e-13 & \cellcolor{gold!60}0.000133 & \cellcolor{gold!60}0.000969 & \cellcolor{gold!60}0.00124 & \cellcolor{gold!60}0.00528 & \cellcolor{gold!60}0.00257 & \cellcolor{gold!60}0.00288 & \cellcolor{gold!60}0.0104 \\ 
\hspace{7pt}3rd Quartile & \cellcolor{gold!60}2.69e-10 & \cellcolor{gold!60}0.000366 & \cellcolor{gold!60}0.00241 & \cellcolor{gold!60}0.00354 & \cellcolor{gold!60}0.00891 & \cellcolor{gold!60}0.00417 & \cellcolor{gold!60}0.00554 & \cellcolor{gold!60}0.0225 \\ 
\bottomrule
\end{tabular}}
    \caption{Summary statistics of the $\ell_2$-norm error for every dataset. We adopt the Olympic medal convention: \colorbox{gold!60}{gold}, \colorbox{silver!60}{silver} and \colorbox{bronze!60}{bronze} cells signify first, second and third best performance, respectively. Except for ties, Leverage SHAP gives the best performance across all settings.}
    \label{tab:l2-error}
    \vspace{-1em}
\end{table}

Table \ref{tab:l2-error} depicts the estimator performance for $m=10n$.
The table shows the mean, first quartile, second quartile, and third quartile of the error for each estimator on every dataset.
Except for ties in the setting where $m \geq 2^n$, Leverage SHAP gives the best performance followed by Optimized Kernel SHAP and then Kernel SHAP.

Beyond sample size, we evaluate how the estimators perform with noisy access to the set functions in Appendix \ref{app:noise}.
As in the sample size experiments, Leverage SHAP meets or exceeds the other estimators, suggesting the utility of the algorithm as a robust estimator in explainable AI.

\section{Conclusion}
We introduce Leverage SHAP, a principled alternative to Kernel SHAP, designed to provide provable accuracy guarantees with nearly linear model evaluations. The cornerstone of our approach is leverage score sampling, a powerful subsampling technique used in regression. To adapt this method for estimating Shapley values, we reformulate the standard Shapley value constrained regression problem and analytically compute the leverage scores of this reformulation. Leverage SHAP efficiently uses these scores to produce provably accurate estimates of Shapley values. Our method enjoys strong theoretical guarantees, which we prove by modifying the standard leverage score analysis to incorporate the empirically-motivated paired sampling and sampling without replacement optimizations. Through extensive experiments on eight datasets, we demonstrate that our algorithm outperforms even the optimized version of Kernel SHAP, establishing Leverage SHAP as a valuable tool in the explainable AI toolkit.

\section{Acknowledgements}
This work was supported by the National Science Foundation under Grant No. 2045590 and Graduate Research Fellowship Grant No. DGE-2234660.
We are grateful to Yurong Liu for introducing us to the problem of estimating Shapley values.

\newpage
\bibliography{references}
\bibliographystyle{iclr2025_conference}

\newpage
\appendix
\appendixpage
\tableofcontents
\newpage

\clearpage

\section{Proof of Approximation Guarantee}\label{appendix:theory}

In this section, we prove the following theorem.

\begin{theorem}\label{thm:leverage_shap_paired}
    Let $m = O(n \log (\frac{n}{\delta}) + n \frac1{\delta\epsilon})$.    
    Algorithm \ref{alg:leverage_shap} produces an estimate $\tilde{\boldsymbol{\phi}}$ such that, with probability $1-\delta$,
    \begin{align*}
    \| \mathbf{A} \tilde{\boldsymbol{\phi}} - \mathbf{b} \|_2^2 \leq (1+\epsilon) \| \mathbf{A \boldsymbol{\phi} - b} \|_2^2.
    \end{align*}
\end{theorem}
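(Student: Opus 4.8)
\textbf{Proof proposal for Theorem~\ref{thm:leverage_shap_paired}.}

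The plan is to follow the standard template for analyzing leverage score sampling in active $\ell_2$ regression, but adapted to the paired, without-replacement sampling scheme that Algorithm~\ref{alg:leverage_shap} uses. Recall from Lemma~\ref{lemma:freedom} that it suffices to control the unconstrained problem $\min_{\mathbf{x}} \|\mathbf{Ax} - \mathbf{b}\|_2^2$, since the returned $\tilde{\boldsymbol{\phi}}$ is exactly $\tilde{\boldsymbol{\phi}}^\perp + \frac{v(\mathbf{1})-v(\mathbf{0})}{n}\mathbf{1}$ and the affine shift cancels in the objective. Write $\mathbf{S}$ for the (random, diagonal-reweighted) sampling matrix implicitly built in the \texttt{For} loop, so that $\mathbf{S}^\top \mathbf{S}$ is a weighted sum of the selected rank-$1$ row-projectors; because of pairing, it is more natural to group the two rows indexed by $(\mathbf{z},\bar{\mathbf{z}})$ into a single rank-$2$ contribution. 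The two ingredients I would establish are: (1) a \emph{subspace embedding}: with probability $1-\delta/2$, $(1-\tfrac12)\mathbf{A}^\top\mathbf{A} \preceq \mathbf{A}^\top\mathbf{S}^\top\mathbf{S}\mathbf{A} \preceq (1+\tfrac12)\mathbf{A}^\top\mathbf{A}$ on the column space of $\mathbf{A}$; and (2) an \emph{approximate matrix multiplication} bound: with probability $1-\delta/2$, $\|(\mathbf{S}\mathbf{A})^\top(\mathbf{S}\mathbf{b} - \mathbf{S}\mathbf{A}\boldsymbol{\phi})\|_{(\mathbf{A}^\top\mathbf{A})^+}^2 \le \tfrac{\epsilon}{c'}\|\mathbf{A}\boldsymbol{\phi}-\mathbf{b}\|_2^2$ for an appropriate constant $c'$. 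Combining these two via the usual normal-equations argument (the optimum of the sampled problem satisfies $(\mathbf{SA})^\top\mathbf{SA}\tilde{\boldsymbol{\phi}}^\perp = (\mathbf{SA})^\top\mathbf{Sb}$, write $\tilde{\boldsymbol{\phi}}^\perp - \boldsymbol{\phi}$ in the residual basis, and use that $\mathbf{A}^\top(\mathbf{A}\boldsymbol{\phi}-\mathbf{b})=\mathbf{0}$) yields $\|\mathbf{A}\tilde{\boldsymbol{\phi}}^\perp - \mathbf{b}\|_2^2 \le (1+\epsilon)\|\mathbf{A}\boldsymbol{\phi}-\mathbf{b}\|_2^2$.

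For step (1), I would use a matrix Bernstein inequality rather than the matrix Chernoff bound used in the textbook (with-replacement) analysis, because here the samples are independent Bernoulli indicators across the $n-1$ size classes (equivalently, across pairs), not i.i.d.\ draws. Define for each pair $(\mathbf{z},\bar{\mathbf{z}})$ included in $\mathcal{Z}'$ the rank-$2$ matrix $\mathbf{X}_{\mathbf{z}} = \frac{1}{p_{\mathbf{z}}}(\mathbf{A}^\top\mathbf{A})^{+/2}\big([\mathbf{A}]_{\mathbf{z}}^\top[\mathbf{A}]_{\mathbf{z}} + [\mathbf{A}]_{\bar{\mathbf{z}}}^\top[\mathbf{A}]_{\bar{\mathbf{z}}}\big)(\mathbf{A}^\top\mathbf{A})^{+/2}$, where $p_{\mathbf{z}} = \min(1, 2c\,\ell_{\mathbf{z}})$; then $\sum$ over selected pairs has expectation $\mathbf{P}$ (the identity on the column space of $\mathbf{A}$) by Lemma~\ref{lemma:leverage_scores}, since leverage scores sum to the rank. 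Each centered summand is bounded in spectral norm by roughly $\max(1, 2\ell_{\mathbf{z}}/p_{\mathbf{z}}) = \max(1, 1/c)$ up to constants (using $\|[\mathbf{A}]_{\mathbf{z}}(\mathbf{A}^\top\mathbf{A})^{+}[\mathbf{A}]_{\mathbf{z}}^\top\| = \ell_{\mathbf{z}}$ and $\ell_{\mathbf{z}}=\ell_{\bar{\mathbf{z}}}$), and the variance proxy is controlled similarly; matrix Bernstein then gives the $\pm\tfrac12$ spectral deviation once $m \gtrsim n\log(n/\delta)$, because the effective dimension is $n$ and the rows contributing mass $p_{\mathbf{z}}=1$ are embedded exactly. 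For step (2), I would do a second-moment / variance computation on the vector-valued quantity $\sum_{\mathbf{z}\in\mathcal{Z}'} \frac{1}{p_{\mathbf{z}}}(\mathbf{A}^\top\mathbf{A})^{+/2}([\mathbf{A}]_{\mathbf{z}}^\top r_{\mathbf{z}} + [\mathbf{A}]_{\bar{\mathbf{z}}}^\top r_{\bar{\mathbf{z}}})$ where $\mathbf{r} = \mathbf{A}\boldsymbol{\phi}-\mathbf{b}$; independence across pairs makes the variance a sum of per-pair variances, each bounded by $\frac{1}{p_{\mathbf{z}}}(\ell_{\mathbf{z}} + \ell_{\bar{\mathbf{z}}})(r_{\mathbf{z}}^2 + r_{\bar{\mathbf{z}}}^2)$-type terms (this is where the ``block'' structure of the pairing enters), which sums to $O(\frac{1}{c})\|\mathbf{r}\|_2^2$; then Markov (or a vector-Bernstein for the $\log(1/\delta)$ dependence) and the choice $c \gtrsim 1/\epsilon$ finish it. The binary search in Algorithm~\ref{alg:leverage_shap} guarantees the expected sample count is $m-2$, and concentration of the Binomial draws gives the ``$m$ in expectation'' statement; I would note the saturated sizes (where $p_{\mathbf{z}}=1$) are handled trivially since those rows are reproduced exactly with correct reweighting.

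I expect the main obstacle to be carefully handling the \emph{saturation / truncation} at $p_{\mathbf{z}} = 1$ together with the paired, without-replacement structure, since the clean i.i.d.\ leverage-score analysis does not apply verbatim. Concretely, one must verify that capping probabilities at $1$ (and the corresponding weight $w(\|\mathbf{z}\|_1)/\min(1,2c\ell_{\mathbf{z}})$) still yields an unbiased estimator of $\mathbf{A}^\top\mathbf{A}$ and of $\mathbf{A}^\top\mathbf{r}$ and that the Bernstein norm/variance bounds degrade gracefully — fully-saturated size classes contribute deterministically and hence with zero variance, which only helps, but partially-saturated regimes need the bound $2\ell_{\mathbf{z}}/p_{\mathbf{z}} \le 2$. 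A secondary subtlety is making the rank-$2$ matrix Bernstein bookkeeping precise: the two rows $[\mathbf{A}]_{\mathbf{z}}$ and $[\mathbf{A}]_{\bar{\mathbf{z}}}$ are not orthogonal in general, so the per-pair summand's operator norm and variance must be bounded using $\|[\mathbf{A}]_{\mathbf{z}}^\top[\mathbf{A}]_{\mathbf{z}} + [\mathbf{A}]_{\bar{\mathbf{z}}}^\top[\mathbf{A}]_{\bar{\mathbf{z}}}\|_{(\mathbf{A}^\top\mathbf{A})^+} \le \ell_{\mathbf{z}} + \ell_{\bar{\mathbf{z}}} = 2\ell_{\mathbf{z}}$ (by triangle inequality for PSD matrices in the $(\mathbf{A}^\top\mathbf{A})^+$-norm), which is exactly the place where pairing costs at most a factor of $2$ and never hurts the guarantee. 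Everything else — Lemma~\ref{lemma:ATA_form} giving the clean spectrum $\tfrac1n\mathbf{P}$, Lemma~\ref{lemma:leverage_scores} giving $\ell_{\mathbf{z}} = \binom{n}{\|\mathbf{z}\|_1}^{-1}$, and Lemma~\ref{lemma:freedom} reducing to the unconstrained problem — is already in hand, so the proof is mostly a careful reassembly of known pieces with the two structural modifications (rank-$2$ Bernstein, block AMM) flagged in the theorem's own preamble.
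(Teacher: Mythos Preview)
Your proposal is correct and follows essentially the same approach as the paper's proof: a matrix-Bernstein bound on the rank-$2$ paired summands to obtain the subspace embedding (the paper's Lemma~\ref{lemma:spectral}), a block second-moment computation plus Markov for the approximate-matrix-multiplication step (the paper's Lemma~\ref{lemma:frobenius}), and the standard normal-equations combination using $\mathbf{A}^\top(\mathbf{A}\boldsymbol{\phi}-\mathbf{b})=\mathbf{0}$. The only cosmetic differences are that the paper works throughout with an orthonormal basis $\mathbf{U}$ for the column space of $\mathbf{A}$ rather than your $(\mathbf{A}^\top\mathbf{A})^{+/2}$ normalization, bounds the per-block operator norm via a Gershgorin argument on $\mathbf{U}_{\Theta_i}\mathbf{U}_{\Theta_i}^\top$ rather than your PSD triangle inequality, and only carries out the Markov version of step~(2) (yielding the $n/(\epsilon\delta)$ dependence noted in the footnote to Theorem~\ref{thm:main}).
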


Because of the connection between the constrained and unconstrained regression problems described in Lemma \ref{lemma:freedom}, the theorem implies the theoretical guarantees in Theorem \ref{thm:main}.
The time complexity bound in Theorem \ref{thm:main} follows from Lemma \ref{lemma:time} in Appendix \ref{appendix:bernoulli_code}.

Consider the following leverage score sampling scheme where rows are selected in blocks.
(We use the word ``block'' instead of ``pair'' in the formal analysis for generality.)
Let $\Theta$ be a partition into blocks of equal size (size 2 in our case) where the leverage scores within a block are equal.
Block $\Theta_i$ is independently sampled with probability $p^+_i \vcentcolon= \min(1, c \sum_{k \in \Theta_i} \ell_k)$ for a constant $c$.
The constant $c$ is chosen so that the expected number of blocks sampled is $m$, i.e., $\sum_i p^+_i = m$.
Let $m'$ be the number of blocks sampled.
Let $\mathbf{S} \in \mathbb{R}^{|\Theta_i| m' \times \rho}$ be the (random) sampling matrix.
Each row of $\mathbf{S}$ corresponds to a row $k$ from a sampled block $i$:
every entry in the row is 0 except for the $k$th entry which is $1/\sqrt{p^+_i}$.

In order to analyze the solution returned by Algorithm \ref{alg:leverage_shap}, we will prove that the sampling matrix $\mathbf{S}$ preserves the spectral norm and Frobenius norm.

\begin{lemma}[Bernoulli Spectral Approximation]\label{lemma:spectral}
    Let $\mathbf{U} \in \mathbb{R}^{\rho \times n}$ be a matrix with orthonormal columns.
    Consider the block random sampling matrix $\mathbf{S}$ described above with rows sampled according to the leverage scores of $\mathbf{U}$.
    When $m = \Omega(n \log (n/\delta)/\epsilon^2)$,
    \begin{align}
        \| \mathbf{I} - \mathbf{U^\top S^\top S U} \|_2 \leq \epsilon
    \end{align}
    with probability $1-\delta$.
\end{lemma}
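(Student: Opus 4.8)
The plan is to prove Lemma~\ref{lemma:spectral} via a matrix Bernstein (matrix concentration) inequality applied to a sum of independent, mean-zero random matrices, one per block. First I would set up the decomposition: write $\mathbf{U}^\top \mathbf{S}^\top \mathbf{S} \mathbf{U} = \sum_i \frac{\gamma_i}{p_i^+} \mathbf{Y}_i$, where $\gamma_i \in \{0,1\}$ is the Bernoulli indicator that block $\Theta_i$ is sampled, $p_i^+ = \min(1, c\sum_{k\in\Theta_i}\ell_k)$, and $\mathbf{Y}_i = \sum_{k\in\Theta_i} [\mathbf{U}]_k^\top [\mathbf{U}]_k$ is the (fixed, rank-$\le|\Theta_i|$) contribution of block $i$. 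Since $\sum_i \mathbf{Y}_i = \mathbf{U}^\top\mathbf{U} = \mathbf{I}$ and $\mathbb{E}[\gamma_i/p_i^+] = 1$, the centered summands $\mathbf{X}_i := \left(\frac{\gamma_i}{p_i^+} - 1\right)\mathbf{Y}_i$ satisfy $\mathbb{E}\sum_i \mathbf{X}_i = \mathbf{0}$ and $\sum_i \mathbf{X}_i = \mathbf{U}^\top\mathbf{S}^\top\mathbf{S}\mathbf{U} - \mathbf{I}$, so the goal becomes bounding $\|\sum_i \mathbf{X}_i\|_2$.

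Next I would bound the two quantities matrix Bernstein needs: the per-term spectral norm and the variance. For the norm bound, when $p_i^+ = 1$ the term vanishes; otherwise $p_i^+ = c\sum_{k\in\Theta_i}\ell_k$, and using $\|\mathbf{Y}_i\|_2 \le \sum_{k\in\Theta_i}\|[\mathbf{U}]_k^\top[\mathbf{U}]_k\|_2 = \sum_{k\in\Theta_i}\ell_k$, we get $\|\mathbf{X}_i\|_2 \le \frac{1}{p_i^+}\|\mathbf{Y}_i\|_2 \le \frac{1}{c}$, giving a uniform bound $L = 1/c$. For the variance, $\mathbb{E}[\mathbf{X}_i^2] = \Var(\gamma_i/p_i^+)\,\mathbf{Y}_i^2 \preceq \frac{1}{p_i^+}\mathbf{Y}_i^2 \preceq \frac{\|\mathbf{Y}_i\|_2}{p_i^+}\mathbf{Y}_i \preceq \frac{1}{c}\mathbf{Y}_i$ (again using $\|\mathbf{Y}_i\|_2 \le \sum_{k\in\Theta_i}\ell_k$ and the definition of $p_i^+$ in the non-saturated case; the saturated case contributes nothing). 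Summing, $\|\sum_i \mathbb{E}[\mathbf{X}_i^2]\|_2 \preceq \frac{1}{c}\sum_i \mathbf{Y}_i = \frac{1}{c}\mathbf{I}$, so the variance parameter is $\sigma^2 = 1/c$. Matrix Bernstein then yields $\Pr[\|\sum_i\mathbf{X}_i\|_2 \ge \epsilon] \le 2n\exp\!\left(-\frac{\epsilon^2/2}{1/c + \epsilon/(3c)}\right)$, which is at most $\delta$ provided $c = \Omega\!\left(\frac{\log(n/\delta)}{\epsilon^2}\right)$.

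The last step is translating the condition on the oversampling parameter $c$ into the claimed sample count $m = \Omega(n\log(n/\delta)/\epsilon^2)$. Here I would use $\sum_i p_i^+ = \sum_i \min(1, c\sum_{k\in\Theta_i}\ell_k) \le \sum_i c\sum_{k\in\Theta_i}\ell_k = c\sum_k \ell_k = c\cdot n$ (since the leverage scores of an $n$-column orthonormal matrix sum to $n$), and $\sum_i p_i^+$ is exactly the expected number of sampled blocks, which is $m$ by the choice of $c$. Hence $c \le m/n$, and conversely the binary search sets $c$ as large as possible subject to $\sum_i p_i^+ = m$; as long as $m \le $ (number of blocks), $c$ can be taken at least $\Omega(m/n)$ up to the saturation effect, so $m = \Omega(n\log(n/\delta)/\epsilon^2)$ suffices to force $c = \Omega(\log(n/\delta)/\epsilon^2)$. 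I expect the main obstacle to be handling the saturated blocks (those with $p_i^+ = 1$) cleanly — these need to be excluded from the Bernstein sum since they contribute deterministically, and one must verify this only helps (it removes randomness and shrinks both the norm and variance bounds), and separately argue that the relationship $c \gtrsim m/n$ is not spoiled by many blocks saturating. A secondary subtlety is that the rows here are rows of $\mathbf{U}$, an orthonormal basis for the column space of $\mathbf{A}$, rather than of $\mathbf{A}$ itself; since leverage scores are basis-independent and the statement is phrased directly in terms of $\mathbf{U}$, this is immediate, but it is worth noting when the lemma is later applied with $\mathbf{A} = \mathbf{ZP}$ and its leverage scores from Lemma~\ref{lemma:leverage_scores}.
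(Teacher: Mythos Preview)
Your proposal is correct and follows essentially the same route as the paper: apply matrix Bernstein to the block-wise centered summands $\mathbf{X}_i$, bound the per-term norm by $L = 1/c$ and the variance by $O(1/c)$, then convert the condition on $c$ into one on $m$ via $m = \sum_i p_i^+ \le cn$. Two small notes: your variance step $\mathbf{Y}_i^2 \preceq \|\mathbf{Y}_i\|_2\,\mathbf{Y}_i \preceq \tfrac{1}{c}\mathbf{Y}_i$ is actually cleaner than the paper's (which detours through a Gershgorin bound on $\mathbf{U}_{\Theta_i}\mathbf{U}_{\Theta_i}^\top$ and picks up an extra factor of $4$), and you have one sign slip near the end --- from $m \le cn$ the conclusion is $c \ge m/n$ (equivalently $1/c \le n/m$), not $c \le m/n$, which is precisely the direction you need.
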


\begin{proof}[Proof of Lemma \ref{lemma:spectral}]
We will use the following matrix Bernstein bound (see e.g., Theorem 6.6.1 in \cite{tropp2015introduction}).

\begin{fact}[Matrix Bernstein]\label{fact:matrix_bernstein}
Consider a finite sequence $\{ \mathbf{X}_i \}$ of independent, random, Hermitian matrices with dimension $n$. Assume that $\mathbb{E}[\mathbf{X}_k] = \mathbf{0}$ and $\| \mathbf{X}_i \|_2 \leq L$ for all $i$.
Define $\mathbf{X} = \sum_{i} \mathbf{X}_i$ and let $V = \| \mathbb{E}[ \mathbf{X}^2] \|_2 = \| \sum_i \mathbb{E}[\mathbf{X}_i^2] \|_2$.
Then for any $\epsilon > 0$,
\begin{align}
    \Pr\left(\| \mathbf{X} \|_2 \geq \epsilon \right) \leq n \exp\left(\frac{-\epsilon^2/2}{V + L \epsilon/3}\right).
\end{align}
\end{fact}

Let $\mathbf{U}_{(k,)} \in \mathbb{R}^{1 \times n}$ be the $k$th row of $\mathbf{U}$.
Similarly, let $\mathbf{U}_{\Theta_i} \in \mathbb{R}^{|\theta_i|\times n}$ be the matrix with rows $\mathbf{U}_{(k,)}$ for $k \in \Theta_i$.

We will choose $\mathbf{X}_i = \mathbf{U}_{\Theta_i}^\top \mathbf{U}_{\Theta_i} - \frac1{p^+_i} \mathbf{U}_{\Theta_i}^\top \mathbf{U}_{\Theta_i}\mathbbm{1}[i \textnormal{ selected}]$.
Then $\mathbb{E}[\mathbf{X}_i] = \mathbf{0}$ and $\mathbf{X} = \mathbf{I} - \mathbf{U}^\top \mathbf{S}^\top \mathbf{SU}$.

First, we will upper bound $\max_i \| \mathbf{X}_i \|_2$.
If $i$ is not selected or $p^+_i=1$, then $\| \mathbf{X}_i \|_2 = 0$ so we only need to consider the case where $i$ is selected and $p^+_i = c \sum_{k \in \Theta_i} \ell_k < 1$.
Then
\begin{align}
    \| \mathbf{X}_i \|_2 \leq \left| 1 - \frac1{p^+_i} \right| \| \mathbf{U}_{\Theta_i}^\top \mathbf{U}_{\Theta_i} \|_2
    \leq \frac1{p^+_i} \sum_{k \in \Theta_i} \| \mathbf{U}_{(k,)} \|_2^2
    = \frac1{c} = L
\end{align}

Next, we will upper bound $\| \sum_i \mathbb{E}[\mathbf{X}_i^2] \|_2$. 
Again, notice that $\mathbb{E}[\mathbf{X}_i^2] = \mathbf{0}$ if $p^+_i = 1$ so we only need to consider the case where $p^+_i < 1$.
Then
\begin{align}\label{eq:expected_X}
    \sum_i \mathbb{E}[\mathbf{X}_i^2]
    = \sum_{i:p^+_i < 1} p^+_i \left(1 - \frac1{p^+_i}\right)^2 \left(\mathbf{U}_{\Theta_i}^\top \mathbf{U}_{\Theta_i} \right)^2 + (1-p^+_i) \left(\mathbf{U}_{\Theta_i}^\top \mathbf{U}_{\Theta_i} \right)^2.
\end{align}
Notice that $p^+_i (1-1/p^+_i)^2 + (1-p^+_i) = p^+_i - 2 + 1/p^+_i + 1 - p^+_i = 1/p^+_i - 1 \leq 1/p^+_i.$
Therefore
\begin{align}\label{eq:expected_X_bound}
    (\ref{eq:expected_X}) \preceq \sum_{i:p^+_i < 1} \frac1{p^+_i} \mathbf{U}_{\Theta_i}^\top \mathbf{U}_{\Theta_i} \mathbf{U}_{\Theta_i}^\top \mathbf{U}_{\Theta_i} 
\end{align}

Observe that entry $(k, k')$ of $\mathbf{U}_{\Theta_i}\mathbf{U}_{\Theta_i}^\top \in \mathbb{R}^{|\Theta_i| \times |\Theta_i|}$ is $\mathbf{U}_k \mathbf{U}_{k'}^\top$.
So the absolute value of each entry is $|\mathbf{U}_k \mathbf{U}_{k'}^\top| \leq \| \mathbf{U}_k \|_2 \| \mathbf{U}_{k'} \|_2 = \ell_k^{1/2} \ell_{k'}^{1/2}$ by Cauchy-Schwarz.
Define $\ell^{\max}_i = \max_{k \in \Theta_i} \ell_k$ and $\ell^{\min}_i = \min_{k \in \Theta_i} \ell_k$.
By the Gershgorin circle theorem, $\mathbf{U}_{\Theta_i}\mathbf{U}_{\Theta_i}^\top \preceq 2\ell^{\max}_i |\Theta_i|^2 \mathbf{I}$.
Equivalently, $\mathbf{A} \preceq \mathbf{B}$, $\mathbf{x}^\top \mathbf{A} \mathbf{x} \leq \mathbf{x}^\top \mathbf{B} \mathbf{x}$ for all $\mathbf{x}$.
Consider an arbitrary $\mathbf{z}$.
We have $\mathbf{z^\top C^\top A C z} \leq \mathbf{z^\top C^\top B C z}$ since $\mathbf{Cz}$ is some $\mathbf{x}$.
It follows that $\mathbf{U}_{\Theta_i}^\top\mathbf{U}_{\Theta_i}\mathbf{U}_{\Theta_i}^\top\mathbf{U}_{\Theta_i} \preceq 2\ell^{\max}_i |\Theta_i|^2 \mathbf{U}_{\Theta_i}^\top\mathbf{U}_{\Theta_i}$.
Then
\begin{align}
    (\ref{eq:expected_X_bound})
    \preceq \frac1{c} \sum_{i=1}^{|\Theta_i|}
    \frac{2\ell^{\max}_i |\Theta_i|^2 \mathbf{U}_{\Theta_i}^\top\mathbf{U}_{\Theta_i}}
    {\sum_{k\in \Theta_i} \ell_k}
    \preceq \frac1{c} \max_{i} 2 |\Theta_i| \frac{\ell^{\max}_i}{\ell^{\min}_i} \mathbf{I}.
\end{align}
Since $|\Theta_i| \leq 2$ and the leverage scores in a block are all equal, $\| \E[\mathbf{X}_j^2] \|_2 \leq 4/c$.

Recall that $c$ is chosen so that $m = \sum_{i} \min(1, c \sum_{k \in \Theta_i} \ell_k) \leq c \sum_i \sum_{k \in \Theta_i} \ell_k = c n$, where the last equality follows because the leverage scores sum to $n$.
So $1/c \leq n/m$.

Plugging in to Fact \ref{fact:matrix_bernstein}, we have that
\begin{align}
    \Pr\left(\| \mathbf{I} - \mathbf{U^\top S^\top S U} \|_2 \geq \epsilon \right) \leq n \exp\left(\frac{-m \epsilon^2/2n}{4 + \epsilon/3}\right) \leq \delta
\end{align}
provided $m = \Omega(n \log (n/\delta)/\epsilon^2)$.

\end{proof}

Next, we will prove that the sampling matrix preserves the Frobenius norm.

\begin{lemma}[Frobenius Approximation]\label{lemma:frobenius}
    Let $\mathbf{U} \in \mathbb{R}^{\rho \times n}$ be a matrix with orthonormal columns.
    Consider the block random sampling matrix $\mathbf{S}$ described above with rows sampled according to the leverage scores of $\mathbf{U}$.
    Let $\mathbf{V} \in \mathbb{R}^{\rho \times n'}$.
    As long as $m \geq \frac1{\delta \epsilon^2}$, then
    \begin{align}
        \left\| \mathbf{U^\top S^\top S V} - \mathbf{U^\top V} \right\|_F \leq \epsilon \| \mathbf{U}\|_F \|\mathbf{V} \|_F
    \end{align}
    with probability $1-\delta$.
\end{lemma}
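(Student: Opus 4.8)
The plan is a standard second-moment (approximate matrix multiplication) argument, adapted to the block sampling scheme. Write $\mathbf{U}_{(k,)}$ and $\mathbf{V}_{(k,)}$ for the $k$th rows. First I would rewrite the estimator in terms of blocks: since every row of $\mathbf{S}$ coming from a selected block $\Theta_i$ carries weight $1/\sqrt{p^+_i}$ in its single nonzero coordinate, $\mathbf{U}^\top\mathbf{S}^\top\mathbf{S}\mathbf{V} = \sum_i \frac{\mathbbm{1}[i\textnormal{ selected}]}{p^+_i}\mathbf{M}_i$ with $\mathbf{M}_i \vcentcolon= \sum_{k\in\Theta_i}\mathbf{U}_{(k,)}^\top\mathbf{V}_{(k,)}$, while $\mathbf{U}^\top\mathbf{V} = \sum_i \mathbf{M}_i$ since the leverage-score blocks partition all rows. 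Thus the error matrix is $\mathbf{W} \vcentcolon= \sum_i \big(\frac{\mathbbm{1}[i\textnormal{ selected}]}{p^+_i}-1\big)\mathbf{M}_i$, a sum of independent, mean-zero, matrix-valued random variables, one per block.

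By independence of the blocks and the elementary identity $\mathbb{E}\big[(\frac{\mathbbm{1}[i\textnormal{ selected}]}{p^+_i}-1)^2\big] = \frac{1}{p^+_i}-1$, the cross terms vanish and $\mathbb{E}\|\mathbf{W}\|_F^2 = \sum_i (\frac{1}{p^+_i}-1)\|\mathbf{M}_i\|_F^2 \le \sum_{i:\,p^+_i<1}\frac{\|\mathbf{M}_i\|_F^2}{p^+_i}$. To control $\|\mathbf{M}_i\|_F$ I would apply the triangle inequality and use that each outer product satisfies $\|\mathbf{U}_{(k,)}^\top\mathbf{V}_{(k,)}\|_F = \|\mathbf{U}_{(k,)}\|_2\|\mathbf{V}_{(k,)}\|_2$ with $\|\mathbf{U}_{(k,)}\|_2^2 = \ell_k$; after squaring (Cauchy--Schwarz moves the sum over $k\in\Theta_i$ outside the square) and using that the leverage scores inside a block are all equal, so $\sum_{k\in\Theta_i}\ell_k = |\Theta_i|\,\ell_k$, the $|\Theta_i|$ and $\ell_k$ factors cancel against $p^+_i = c\sum_{k\in\Theta_i}\ell_k$, leaving $\frac{\|\mathbf{M}_i\|_F^2}{p^+_i}\le \frac1c\sum_{k\in\Theta_i}\|\mathbf{V}_{(k,)}\|_2^2$. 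Summing over all blocks telescopes to $\mathbb{E}\|\mathbf{W}\|_F^2 \le \frac1c\|\mathbf{V}\|_F^2$.

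Finally, exactly as in the proof of Lemma \ref{lemma:spectral}, $m = \sum_i\min(1,c\sum_{k\in\Theta_i}\ell_k)\le c\sum_k\ell_k = cn$, so $\frac1c \le \frac nm = \frac{\|\mathbf{U}\|_F^2}{m}$ (using $\|\mathbf{U}\|_F^2 = n$ since $\mathbf{U}$ has orthonormal columns); hence $\mathbb{E}\|\mathbf{W}\|_F^2 \le \frac{\|\mathbf{U}\|_F^2\|\mathbf{V}\|_F^2}{m}$, and Markov's inequality gives $\Pr(\|\mathbf{W}\|_F \ge \epsilon\|\mathbf{U}\|_F\|\mathbf{V}\|_F)\le \frac{1}{m\epsilon^2}\le\delta$ whenever $m\ge\frac{1}{\delta\epsilon^2}$, which is the claim. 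The one place that needs care --- the only wrinkle relative to the textbook i.i.d.\ AMM bound --- is the block bookkeeping in the variance step: the rank-one terms within a block are perfectly correlated rather than independent, so they must be grouped into the matrices $\mathbf{M}_i$ before second moments are taken, and it is precisely the equal-leverage-score-within-a-block property that makes the resulting bound collapse cleanly to $\frac1c\|\mathbf{V}\|_F^2$ without losing constant factors.
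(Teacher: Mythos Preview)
Your proposal is correct and follows essentially the same approach as the paper: a second-moment (approximate matrix multiplication) bound on $\mathbb{E}\|\mathbf{U}^\top\mathbf{S}^\top\mathbf{S}\mathbf{V}-\mathbf{U}^\top\mathbf{V}\|_F^2$, followed by Markov's inequality, with the only adaptation being that rows are grouped into blocks before taking variances. The one cosmetic difference is in bounding $\|\mathbf{M}_i\|_F^2$: the paper uses Frobenius submultiplicativity directly, $\|\mathbf{M}_i\|_F^2\le\|\mathbf{U}_{(\Theta_i,)}\|_F^2\|\mathbf{V}_{(\Theta_i,)}\|_F^2=(\sum_{k\in\Theta_i}\ell_k)\|\mathbf{V}_{(\Theta_i,)}\|_F^2$, which cancels against $p_i^+$ without invoking the equal-leverage-within-a-block assumption, whereas your triangle-inequality-plus-Cauchy--Schwarz route picks up an extra $|\Theta_i|$ that you then eliminate via that assumption; both arrive at $\frac1c\|\mathbf{V}\|_F^2$.
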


\begin{proof}
We adapt Proposition 2.2 in \cite{wu2018note} to the setting where blocks are sampled without replacement.
\begin{align}
\mathbb{E} [ \|  \mathbf{U^\top S^\top S V} - \mathbf{U^\top V}\|_F^2]
&= \sum_{h_1 = 1}^n \sum_{h_2 = 1}^n \mathbb{E} [(\mathbf{U^\top S^\top S V} - \mathbf{U^\top V})_{(h_1, h_2)}^2]
\\&= \sum_{h_1 = 1}^n \sum_{h_2 = 1}^n \textrm{Var}[ (\mathbf{U^\top S^\top S V})_{(h_1, h_2)}]
\label{eq:variance_entries}
\end{align}
where the last equality follows because $\mathbb{E}[ \mathbf{U^\top S^\top S V}] = \mathbf{U^\top V}$.
We will consider an individual entry in terms of the blocks $i$.
If $p^+_i = 1$, then the entry is 0 so we only need to consider the case where $p^+_i < 1$.
Then, using independence, we have
\begin{align}
    \textrm{Var}[ (\mathbf{U^\top S^\top S V})_{(h_1, h_2)}]
    &= \sum_{i:p^+_i < 1} \textrm{Var}[\frac1{{p^+_i}} \mathbbm{1}[i \textrm{ selected}] (\mathbf{U}_{(h_1,\Theta_i)}^\top \mathbf{V}_{(\Theta_i, h_2)})]
    \\&\leq \sum_{i:p^+_i < 1} \frac1{p^+_i} 
    \left(\mathbf{U}_{(h_1,\Theta_i)}^\top \mathbf{V}_{(\Theta_i, h_2)})\right)^2
\end{align}
so
\begin{align}
    (\ref{eq:variance_entries})
    &\leq \sum_{i: p^+_i < 1} \| \mathbf{U}_{(,\Theta_i)}^\top \mathbf{V}_{(\Theta_i,)} \|_F^2
    \leq \sum_{i: p^+_i < 1} \frac1{p^+_i} \|\mathbf{U}_{(\Theta_i,)}\|_F^2 \|\mathbf{V}_{(\Theta_i,)}\|_F^2
    \\&= \sum_{i: p^+_i < 1} \frac{\sum_{k \in \Theta_i} \| \mathbf{U}_k \|_2^2}{c \sum_{k \in \Theta_i} \ell_k} \|\mathbf{V}_{(\Theta_i,)}\|_F^2
    = \frac1{c} \| \mathbf{V} \|_F^2 \leq \frac1{m} \| \mathbf{U} \|_F^2 \| \mathbf{V} \|_F^2
\end{align}
where the last equality follows because $m \leq c n$ and $\| \mathbf{U} \|_F^2 = n$.

By Markov's inequality,
\begin{align}
    \Pr \left(\left\| \mathbf{U^\top S^\top S V} - \mathbf{U^\top V} \right\|_F > \epsilon \| \mathbf{U}\|_F \|\mathbf{V} \|_F \right)
    &\leq \frac{\E \left[ \left\| \mathbf{U^\top S^\top S V} - \mathbf{U^\top V} \right\|_F^2 \right]}{\epsilon^2 \| \mathbf{U}\|_F^2 \|\mathbf{V} \|_F^2}
    \leq \frac1{m \epsilon^2} \leq \delta
\end{align}
as long as $m \geq \frac1{\delta \epsilon^2}$.
\end{proof}

With the spectral and Frobenius approximation guarantees from Lemmas \ref{lemma:spectral} and \ref{lemma:frobenius}, we are ready to prove Theorem \ref{thm:leverage_shap_paired}.

\begin{proof}[Proof of Theorem \ref{thm:leverage_shap_paired}]
Observe that
\begin{align}\label{eq:to_prove_paired}
    \| \mathbf{A} \tilde{\boldsymbol{\phi}} - \mathbf{b} \|_2^2
    = \| \mathbf{A} \tilde{\boldsymbol{\phi}} - \mathbf{A} \boldsymbol{\phi} + \mathbf{A} \boldsymbol{\phi} - \mathbf{b} \|_2^2    
    = \| \mathbf{A} \tilde{\boldsymbol{\phi}} - \mathbf{A} \boldsymbol{\phi} \|_2^2 + \| \mathbf{A} \boldsymbol{\phi} - \mathbf{b} \|_2^2
\end{align}
where the second equality follows because $\mathbf{A} \boldsymbol{\phi} - \mathbf{b}$ is orthogonal to any vector in the span of $\mathbf{A}$.    
So to prove the theorem, it suffices to show that
\begin{align}
\| \mathbf{A \tilde{\boldsymbol{\phi}} - A \boldsymbol{\phi}} \|_2^2
\leq  \epsilon
\| \mathbf{A \boldsymbol{\phi} - b} \|_2^2.
\end{align}

Let $\mathbf{U} \in \mathbb{R}^{\rho \times n'}$ be an orthonormal matrix that spans the columns of $\mathbf{A}$ where $n' \leq n$.
There is some $\mathbf{y}$ such that $\mathbf{Uy} = \mathbf{A} \boldsymbol{\phi}$ and some $\tilde{\mathbf{y}}$ such that $\mathbf{U}\tilde{\mathbf{y}} = \mathbf{A} \tilde{\boldsymbol{\phi}}$.
Observe that $\| \mathbf{A \tilde{\boldsymbol{\phi}} - A \boldsymbol{\phi}} \|_2 = \| \mathbf{U}\tilde{\mathbf{y}} - \mathbf{Uy} \|_2 = \| \tilde{\mathbf{y}} - \mathbf{y} \|_2$ where the last equality follows because $\mathbf{U}^\top \mathbf{U} = \mathbf{I}$.

By the reverse triangle inequality and the submultiplicavity of the spectral norm, we have
\begin{align}
\| \tilde{\mathbf{y}} - \mathbf{y} \|_2
&\leq \| \mathbf{U^\top S^\top S U} (\tilde{\mathbf{y}} - \mathbf{y}) \|_2
+ \| \mathbf{U^\top S^\top S U} (\tilde{\mathbf{y}} - \mathbf{y})  - (\tilde{\mathbf{y}} - \mathbf{y}) \|_2
\\&\leq \| \mathbf{U^\top S^\top S U} (\tilde{\mathbf{y}} - \mathbf{y}) \|_2
+ \| \mathbf{U^\top S^\top S U} - \mathbf{I} \|_2 \| \tilde{\mathbf{y}} - \mathbf{y} \|_2.
\end{align}
Because $\mathbf{U}$ has the same leverage scores as $\mathbf{A}$ and the number of rows sampled in $\mathbf{S}$ is within a constant factor of $m$, we can apply Lemma \ref{lemma:spectral}: With $m = O(n \log \frac{n}{\delta})$, we have $\| \mathbf{U^\top S^\top S U} - \mathbf{I} \|_2 \leq \frac12$ with probability $1-\delta/2$.
So, with probability $1-\delta/2$,
\begin{align}\label{eq:diff_UTSTSU}
\| \tilde{\mathbf{y}} - \mathbf{y} \|_2
\leq 
2 \| \mathbf{U^\top S^\top S U} (\tilde{\mathbf{y}} - \mathbf{y}) \|_2.
\end{align}
Then
\begin{align}
    \| \mathbf{U^\top S^\top S U} (\tilde{\mathbf{y}} - \mathbf{y}) \|_2
    &= \left\| \mathbf{U^\top S^\top} \left( \mathbf{S U} \tilde{\mathbf{y}} - \mathbf{Sb} + \mathbf{Sb} - \mathbf{S U} \mathbf{y} \right) \right\|_2
    \\&= \left\| \mathbf{U^\top S^\top S} \left(\mathbf{U}\mathbf{y} - \mathbf{b} \right)\right\|_2   
\end{align}
where the second equality follows because $\mathbf{SU} \tilde{\mathbf{y}} -\mathbf{S b}$ is orthogonal to any vector in the span of $\mathbf{SU}$.
By similar reasoning, notice that $\mathbf{U^\top} (\mathbf{U y - b}) = \mathbf{0}$.
Then, as long as $m = O(\frac{n}{\delta \epsilon})$, we have
\begin{align}
    \left\| \mathbf{U^\top S^\top S} \left(\mathbf{U}\mathbf{y} - \mathbf{b} \right)\right\|_2 
    \leq \frac{\sqrt{\epsilon}}{2 \sqrt{n}} \| \mathbf{U} \|_F \| \mathbf{Uy - b} \|_2 \label{eq:frobenius_bound}
\end{align}
with probability $1-\delta/2$ by Lemma \ref{lemma:frobenius}.
Since $\mathbf{U}$ has orthonormal columns, $\|\mathbf{U}\|_F^2 \leq n$.
Then, combining inequalities yields
\begin{align}
    \| \mathbf{A \tilde{\boldsymbol{\phi}} - A \boldsymbol{\phi}} \|_2^2 
    = \| \tilde{\mathbf{y}} - \mathbf{y} \|_2^2
    \leq 2 \| \mathbf{U^\top S^\top S U} (\tilde{\mathbf{y}} - \mathbf{y}) \|_2^2
    \leq \epsilon \| \mathbf{Uy - b} \|_2^2 = \epsilon \| \mathbf{A} \boldsymbol{\phi} - \mathbf{b} \|_2^2
\end{align}
with probability $1-\delta$.
\end{proof}

\section{Proof of Approximation Corollary}\label{appendix:corollary_proof}

We will establish several properties specific to our problem and then use these properties to prove Corollary \ref{coro:l2norm}.

\begin{lemma}[Properties of $\mathbf{A}$, $\boldsymbol{\phi}$, and $\boldsymbol{\tilde{\phi}}$]\label{lemma:properties}
The Shapley values $\boldsymbol{\phi}$, the matrix $\mathbf{A}$, and the estimated Shapley values $\boldsymbol{\tilde{\phi}}$ produced by Algorithm \ref{alg:leverage_shap} satisfy
\begin{align}
\| \mathbf{A}( \tilde{\boldsymbol{\phi}} - \boldsymbol{\phi}) \|_2^2 
= \frac1{n} \| \tilde{\boldsymbol{\phi}} - \boldsymbol{\phi} \|_2^2.
\end{align}
and 
\begin{align}
   \| \mathbf{A} \boldsymbol{\phi} \|_2^2 
   = \frac1{n} \left( \| \boldsymbol{\phi} \|_2^2 -  
   \frac{(v(\mathbf{1}) - v(\mathbf{0}))^2}{n} \right).
\end{align}

\end{lemma}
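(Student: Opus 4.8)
The plan is to derive both equalities directly from the closed form $\mathbf{A}^\top\mathbf{A} = \tfrac1n\mathbf{P}$ established in Lemma \ref{lemma:ATA_form}, combined with the single extra fact that $\boldsymbol{\phi}$ and $\tilde{\boldsymbol{\phi}}$ both satisfy the efficiency constraint $\langle\mathbf{x},\mathbf{1}\rangle = v(\mathbf{1})-v(\mathbf{0})$. For $\boldsymbol{\phi}$ this is the constraint appearing in Lemma \ref{lemma:equivalence} (equivalently, the Efficiency axiom); for $\tilde{\boldsymbol{\phi}}$ it is guaranteed by construction in Algorithm \ref{alg:leverage_shap} and restated in Theorem \ref{thm:main}. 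The key consequence is that $\tilde{\boldsymbol{\phi}}-\boldsymbol{\phi}$, which satisfies $\langle\tilde{\boldsymbol{\phi}}-\boldsymbol{\phi},\mathbf{1}\rangle = 0$, is fixed by the projection $\mathbf{P} = \mathbf{I}-\tfrac1n\mathbf{1}\mathbf{1}^\top$, while $\boldsymbol{\phi}$ interacts with $\mathbf{P}$ through its known inner product with $\mathbf{1}$.

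For the first identity, I would write $\|\mathbf{A}(\tilde{\boldsymbol{\phi}}-\boldsymbol{\phi})\|_2^2 = (\tilde{\boldsymbol{\phi}}-\boldsymbol{\phi})^\top\mathbf{A}^\top\mathbf{A}(\tilde{\boldsymbol{\phi}}-\boldsymbol{\phi}) = \tfrac1n(\tilde{\boldsymbol{\phi}}-\boldsymbol{\phi})^\top\mathbf{P}(\tilde{\boldsymbol{\phi}}-\boldsymbol{\phi})$. Since $\tilde{\boldsymbol{\phi}}-\boldsymbol{\phi}$ is orthogonal to $\mathbf{1}$, we have $\mathbf{P}(\tilde{\boldsymbol{\phi}}-\boldsymbol{\phi}) = \tilde{\boldsymbol{\phi}}-\boldsymbol{\phi}$, so the quadratic form collapses to $\tfrac1n\|\tilde{\boldsymbol{\phi}}-\boldsymbol{\phi}\|_2^2$, which is the claim. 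For the second identity, I would similarly expand $\|\mathbf{A}\boldsymbol{\phi}\|_2^2 = \boldsymbol{\phi}^\top\mathbf{A}^\top\mathbf{A}\boldsymbol{\phi} = \tfrac1n\boldsymbol{\phi}^\top\mathbf{P}\boldsymbol{\phi} = \tfrac1n\big(\|\boldsymbol{\phi}\|_2^2 - \tfrac1n\langle\boldsymbol{\phi},\mathbf{1}\rangle^2\big)$, and then substitute $\langle\boldsymbol{\phi},\mathbf{1}\rangle = v(\mathbf{1})-v(\mathbf{0})$ to obtain exactly the stated expression.

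I do not expect any genuine obstacle here: once Lemma \ref{lemma:ATA_form} is in hand the argument is a couple of lines of linear algebra. The only point worth an explicit sentence is the justification that $\tilde{\boldsymbol{\phi}}$ satisfies the efficiency constraint, which holds because Algorithm \ref{alg:leverage_shap} returns $\tilde{\boldsymbol{\phi}} = \tilde{\boldsymbol{\phi}}^\perp + \tfrac{v(\mathbf{1})-v(\mathbf{0})}{n}\mathbf{1}$ where $\tilde{\boldsymbol{\phi}}^\perp$ is the (minimum-norm) least-squares solution over the range of $\mathbf{Z}'\mathbf{P}$ and is therefore orthogonal to $\mathbf{1}$ (this is also recorded in Theorem \ref{thm:main}), together with the analogous statement for $\boldsymbol{\phi}$ coming from Lemma \ref{lemma:equivalence}.
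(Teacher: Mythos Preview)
Your proposal is correct and relies on exactly the same two ingredients as the paper's proof: the identity $\mathbf{A}^\top\mathbf{A}=\tfrac1n\mathbf{P}$ from Lemma~\ref{lemma:ATA_form} and the fact that both $\boldsymbol{\phi}$ and $\tilde{\boldsymbol{\phi}}$ satisfy the efficiency constraint $\langle\cdot,\mathbf{1}\rangle=v(\mathbf{1})-v(\mathbf{0})$. The difference is purely in presentation. The paper explicitly writes out the SVD $\mathbf{A}=\mathbf{U}\boldsymbol{\Sigma}\mathbf{V}^\top$, observes from Lemma~\ref{lemma:ATA_form} that all nonzero singular values equal $1/\sqrt{n}$ and that the columns of $\mathbf{V}$ span $\mathbf{1}^\perp$, expands $\boldsymbol{\phi}$ and $\tilde{\boldsymbol{\phi}}$ in the $\mathbf{v}_i$ basis, and computes the norms coordinatewise. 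Your argument bypasses the SVD entirely by working directly with the quadratic form $\mathbf{x}^\top\mathbf{A}^\top\mathbf{A}\mathbf{x}=\tfrac1n\mathbf{x}^\top\mathbf{P}\mathbf{x}=\tfrac1n(\|\mathbf{x}\|_2^2-\tfrac1n\langle\mathbf{x},\mathbf{1}\rangle^2)$, which is a cleaner route to the same conclusion. Nothing is lost by doing it your way; the SVD expansion in the paper is essentially just rederiving this quadratic-form identity in coordinates.
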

\begin{proof}
Even though $\mathbf{A}$ does not have $n$ singular values, we can exploit the special structure of $\mathbf{A}$, $\boldsymbol{\phi}$, and $\tilde{\boldsymbol{\phi}}$.
Let $\mathbf{A} = \mathbf{U \Sigma V^\top}$ be the singular value decomposition of $\mathbf{A}$.
By Lemma \ref{lemma:ATA_form}, we know $\mathbf{\Sigma} \in \mathbb{R}^{(n-1) \times (n-1)}$ is a diagonal matrix with $\frac1{\sqrt{n}}$ on the diagonal
and $\mathbf{V} \in \mathbb{R}^{n \times (n-1)}$ has $n-1$ orthonormal columns that are all orthogonal to $\mathbf{1}$.
Let $\mathbf{v}_1, \ldots, \mathbf{v}_{n-1} \in \mathbb{R}^n$ be these vectors.
Then $\mathbf{A} = \sum_{i=1}^n \frac1{\sqrt{n}} \mathbf{u}_i \mathbf{v}_i^\top$ where $\mathbf{u}_1, \ldots, \mathbf{u}_{n-1} \in \mathbb{R}^{2^{n-1}}$ are the $n-1$ orthonormal columns in $\mathbf{U}$.

By Lemma \ref{lemma:equivalence} and \ref{lemma:freedom}, we can write 
\begin{align}
    \boldsymbol{\phi} = \sum_{i=1}^{n-1} \mathbf{v}_i s_i + \mathbf{1} \frac{v(\mathbf{1}) - v(\mathbf{0})}{n}    
    \qquad
    \textnormal{and}
    \qquad
    \tilde{\boldsymbol{\phi}} = \sum_{i=1}^{n-1} \mathbf{v}_i \tilde{s}_i + \mathbf{1} \frac{v(\mathbf{1}) - v(\mathbf{0})}{n}    
\end{align}
for some $s_i$ and $\tilde{s}_i$ where $i \in [n-1]$.
Then
    $(\tilde{\boldsymbol{\phi}} - \boldsymbol{\phi})
    = \sum_{i=1}^{n-1} \mathbf{v}_i (\tilde{s}_i - s_i)$
and
\begin{align}
    \mathbf{A}( \tilde{\boldsymbol{\phi}} - \boldsymbol{\phi})
    = \sum_{i=1}^{n-1} \frac1{\sqrt{n}} \mathbf{u}_i \mathbf{v}_i^\top 
    \sum_{j=1}^{n-1}\mathbf{v}_j (\tilde{s}_j - s_j)
    = \sum_{i=1}^{n-1} \frac1{\sqrt{n}} \mathbf{u}_i (\tilde{s}_i - s_i)
\end{align}
so 
$\| \mathbf{A}( \tilde{\boldsymbol{\phi}} - \boldsymbol{\phi}) \|_2^2 
= \frac1{n} \| \tilde{\boldsymbol{\phi}} - \boldsymbol{\phi} \|_2^2.$

Similarly, we can write
\begin{align}
    \mathbf{A}\boldsymbol{\phi}
    = \sum_{i=1}^{n-1} \frac1{\sqrt{n}} \mathbf{u}_i \mathbf{v}_i^\top 
    \left(\sum_{j=1}^{n-1}\mathbf{v}_j s_j + \mathbf{1} \frac{v(\mathbf{1}) - v(\mathbf{0})}{n} \right)
    = \sum_{i=1}^{n-1} \frac1{\sqrt{n}} \mathbf{u}_i s_i
\end{align}
where the second equality follows because $\mathbf{v}_1, \ldots, \mathbf{v}_{n-1}$ are orthogonal to $\mathbf{1}$.
Then $\| \mathbf{A} \boldsymbol{\phi} \|_2^2 
= \frac1{n} \left( \| \boldsymbol{\phi} \|_2^2 -  
\frac{(v(\mathbf{1}) - v(\mathbf{0}))^2}{n} \right).$

\end{proof}

\corollarynorm*

\begin{proof}[Proof of Corollary \ref{coro:l2norm}]
We have
\begin{align}
    \| \mathbf{A} \tilde{\boldsymbol{\phi}} - \mathbf{b} \|_2^2
    = \| \mathbf{A} \tilde{\boldsymbol{\phi}} - \mathbf{A} \boldsymbol{\phi} + \mathbf{A} \boldsymbol{\phi} - \mathbf{b} \|_2^2    
    = \| \mathbf{A} \tilde{\boldsymbol{\phi}} - \mathbf{A} \boldsymbol{\phi} \|_2^2 + \| \mathbf{A} \boldsymbol{\phi} - \mathbf{b} \|_2^2
\end{align}
where the second equality follows because $\mathbf{A} \boldsymbol{\phi} - \mathbf{b}$ is orthogonal to any vector in the span of $\mathbf{A}$.

Then, by the assumption, we have
\begin{align}\label{eq:to_prove}
    \| \mathbf{A} \tilde{\boldsymbol{\phi}} - \mathbf{A} \boldsymbol{\phi} \|_2^2 \leq \epsilon \| \mathbf{A} \boldsymbol{\phi} - \mathbf{b} \|_2^2.
\end{align}

%Note that we can write $\mathbf{b} = \mathbf{A} \boldsymbol{\phi} + \mathbf{b}^\perp$ for some $\mathbf{b}^\perp$ which is orthogonal to the span of $\mathbf{A}$.
%Then $\langle \mathbf{A} \boldsymbol{\phi}, \mathbf{b} \rangle = \| \mathbf{A} \boldsymbol{\phi} \|_2^2$.
%It follows that 
%\begin{align}\label{eq:tricky_bperp}
% \| \mathbf{A} \boldsymbol{\phi} - \mathbf{b} \|_2^2
% = \| \mathbf{A} \boldsymbol{\phi} \|_2^2
% - 2 \langle \mathbf{A} \boldsymbol{\phi}, \mathbf{b} \rangle
% + \| \mathbf{b} \|_2^2
% = \| \mathbf{b} \|_2^2 - \| \mathbf{A} \boldsymbol{\phi} \|_2^2.
%\end{align}
%By the definition of $\gamma$ and Equation \ref{eq:tricky_bperp},
%\begin{align}
%    \gamma
%    = \frac{\| \mathbf{b} \|_2^2}
%    {\| \mathbf{A} \boldsymbol{\phi} \|_2^2}
%    = \frac{\| \mathbf{b} \|_2^2}
%    {\| \mathbf{b} \|_2^2 - \| \mathbf{A} \boldsymbol{\phi} - \mathbf{b} \|_2^2}    
%\end{align}
%so
%\begin{align}\label{eq:gamma_inverse}
%    \gamma - 1
%    = \frac{\| \mathbf{b} \|_2^2 - \| \mathbf{b} \|_2^2 + \| \mathbf{A} \boldsymbol{\phi} - \mathbf{b} \|_2^2}{\| \mathbf{b} \|_2^2 - \| \mathbf{A} \boldsymbol{\phi} - \mathbf{b} \|_2^2}
%    = \frac{\| \mathbf{A} \boldsymbol{\phi} - \mathbf{b} \|_2^2}
%    {\| \mathbf{b} \|_2^2 - \| \mathbf{A} \boldsymbol{\phi} - \mathbf{b} \|_2^2}
%    = \frac{\| \mathbf{A} \boldsymbol{\phi} - \mathbf{b} \|_2^2}
%    {\| \mathbf{A} \boldsymbol{\phi} \|_2^2}.
%\end{align}
% Then by Equations \ref{eq:gamma_inverse}

By the definition of $\gamma = \frac{\| \mathbf{A} \boldsymbol{\phi} - \mathbf{b} \|_2^2}{\| \mathbf{A} \boldsymbol{\phi} \|_2^2}$ and Lemma \ref{lemma:properties},
\begin{align}\label{eq:l2_upperbound}
    \epsilon \| \mathbf{A} \boldsymbol{\phi} - \mathbf{b} \|_2^2
    = \epsilon \gamma \| \mathbf{A} \boldsymbol{\phi} \|_2^2
    = \epsilon \gamma \frac1{n} \left(\| \boldsymbol{\phi} \|_2^2 - \frac{(v(\mathbf{1}) - v(\mathbf{0}))^2}{n}\right).
\end{align}
Finally, by Lemma \ref{lemma:properties} along with Equations \ref{eq:to_prove} and \ref{eq:l2_upperbound}, we have
\begin{align}
\frac1{n} \| \tilde{\boldsymbol{\phi}} - \boldsymbol{\phi} \|_2^2 =
\| \mathbf{A}( \tilde{\boldsymbol{\phi}} - \boldsymbol{\phi}) \|_2^2 
\leq \epsilon \| \mathbf{A} \boldsymbol{\phi} - \mathbf{b} \|_2^2
\leq \frac1{n} \epsilon \gamma \| \boldsymbol{\phi} \|_2^2.
\end{align}
The corollary statement then follows after multiplying both sides by $n$.
\end{proof}

\section{Computational Hardness}\label{app:hardness}

When the Shapley value problem is viewed as an optimization problem, we provide a constant factor approximation in polynomial time (see e.g., Theorem \ref{thm:main}).

On the other hand, suppose we wanted to, e.g., ensure that $\frac{1}{C} \phi_i \leq \tilde{\phi}_i \leq C \phi_i$ for all $i$. We claim that obtaining this goal for any constant $C$ is NP-hard in many settings. In particular, to ask about computational hardness, we also need to specify the input to the algorithm: i.e., is $v$ given as a polynomial size circuit, a polynomial size formula, or as a black-box, unit cost oracle? In all of these settings the problem is NP-hard to approximate. In particular, consider the case when $v$ either evaluates to $0$ for all sets, or evaluates to $0$ for all sets except for a single set $S$. In the first case, the Shapley values will all equal $0$, whereas in the second case, the Shapley values for indices in $S$ are non-zero. As such, to obtain a multiplicative approximation, we need to determine if there is some set $S$ for which $v$ does not evaluate to 0. When $v$ is a black-box, finding such a set clearly requires $\Omega(2^n)$ time (we can only enumerate all possible inputs). However, it is still hard when $v$ is given other forms. For example, if $v$ is a given as a circuit, then determining if there is an S for which $v(S)\neq 0$ is equivalent to the NP-complete circuit SAT problem.

This argument makes it clear why asking for a multiplicative approximation is not reasonable, and why instead we might care about an approximation in objective value. This is the case for many computational problems: for example, in the class of NP-hard optimization problems like set cover, we do not typically ask to approximate the optimal solution itself, but instead to find another solution with nearly the same objective value as the optimal solution (whether or not it is similar to that optimal solution or not).

\section{Noisy Access to the Set Function}\label{app:noise}

In this section, we explore how the regression-based estimators perform given noisy access to the set function $v$.
This setting is particularly relevant for Shapley values in explainable AI, since the set function may be an expectation that is approximated in practice.
Figure \ref{fig:noise-l2} shows plots performance as noise is added to the set functions:
Instead of observing $v(S)$, the estimators observe $v(S) + \zeta$ where $\zeta \sim \mathcal{N}(0, \sigma^2)$ is normally distributed with standard deviation $\sigma$ (we set $\sigma=0, 5 \times 10^{-3}, 1 \times 10^{-2}, 5 \times 10^{-2}, \ldots, 1$).

\begin{figure}[h]
    \centering
    \includegraphics[width=\linewidth]{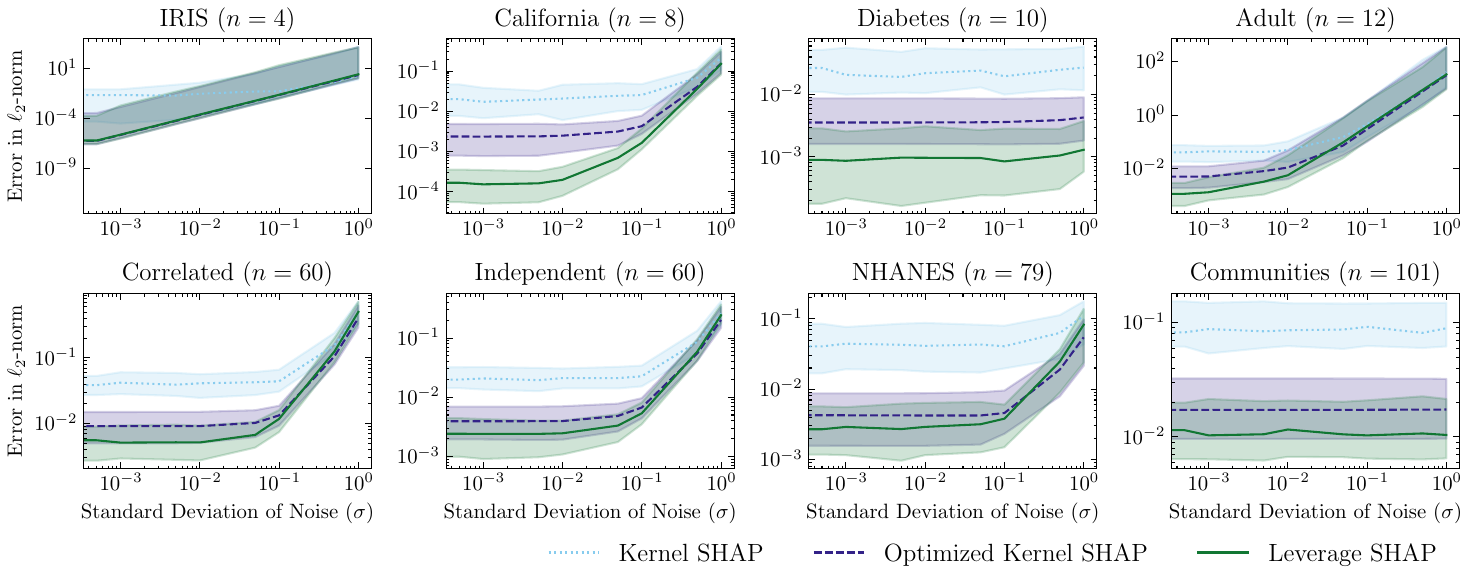}
    \caption{The $\ell_2$-norm error between the estimated and ground truth Shapley values as a function of noise in the set function. Leverage SHAP gives the best performance in all settings; the third quartile of the Leverage SHAP generally matches the median of the Optimized Kernel SHAP error.}
    \label{fig:noise-l2}
    \vspace{-1em}
\end{figure}

\clearpage

\section{Exploring $\gamma$}\label{appendix:gamma}

The problem-specific term $\gamma = \| \mathbf{A} \boldsymbol{\phi} - \mathbf{b} \|_2^2 / \| \mathbf{b} \|_2^2$ appears in the $\ell_2$-norm guarantee of Corollary \ref{coro:l2norm}.
In this section, we explore the distribution of $\gamma$, and the performance by $\gamma$.
Because exactly computing $\gamma$ requires evaluating $\mathbf{b}$, we focus on datasets where $n \leq 16$.

Table \ref{tab:gamma} shows summary statistics of $\gamma$ over 100 runs (the randomness comes from the different models trained on the data). On every dataset, the third quartile of $\gamma$ is less than 2. 

\begin{table}[h]
    \centering
    \caption{Summary statistic of $\gamma$ over 100 runs.}
    \begin{tabular}{lcccc}
    \toprule
    Dataset & $n$ & 1st Quartile & 2nd Quartile & 3rd Quartile \\
    \midrule
    IRIS & 4 & 0.000234 & 0.266 & 1.03 \\
    California & 8 & 0.158 & 0.298 & 0.449 \\
    Diabetes & 10 & 0.174 & 0.321 & 0.513 \\
    Adult & 12 & 0.180 & 0.395 & 0.703 \\
    \bottomrule
\end{tabular}

    \label{tab:gamma}
\end{table}

In Figure \ref{fig:gamma-l2}, we explore experimentally whether $\gamma$ is an artifact of our analysis or a necessary parameter.
We perform the experiment by explicitly building the matrix $\mathbf{A}$ (only possible for small $n$) and then choosing $\mathbf{b}$ as a linear combination of a vector in the column span of $\mathbf{A}$ and a vector not in the column span of $\mathbf{A}$.
Then, we back out the corresponding set function $v$ using the definition of $\mathbf{b}$.
We induce different values of $\gamma$ as we vary how much of $\mathbf{b}$ is a vector in the column span of $\mathbf{A}$.
Because all three of the algorithms we consider perform worse as $\gamma$ increases, the experiment suggests that $\gamma$ is not an artifact of our analysis.
The same trend appears for the ablated estimators, as shown in Figure \ref{fig:ablation_gamma-l2}.
As our analysis suggests, $\gamma$ does not appear to be a relevant factor for the objective error, as shown in Figure \ref{fig:ablation_gamma-objective}.

\begin{figure}[h!]
    \centering
    \includegraphics[width=\linewidth]{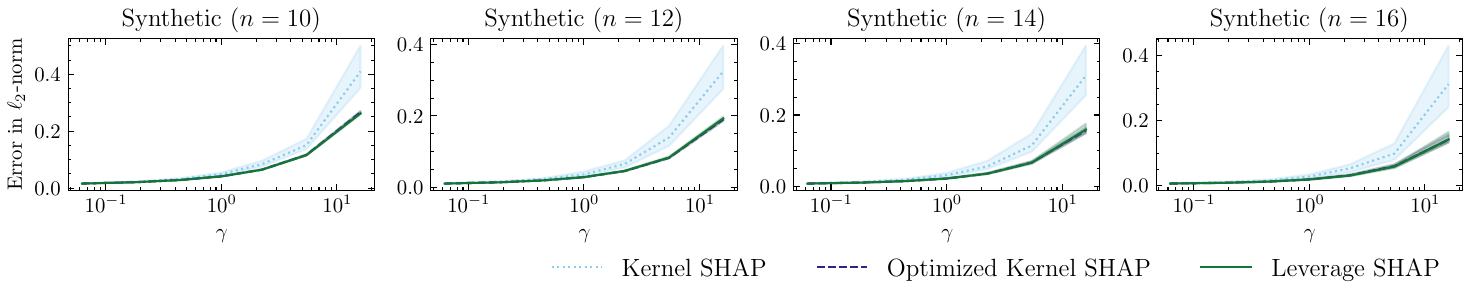}
    \caption{The $\ell_2$-norm error as a function of $\gamma$, the problem-specific parameter in Corollary \ref{coro:l2norm}. The lines indicate the median and the shaded regions encompass the first and third quartile over 100 runs. All three of the main algorithms we consider have higher $\ell_2$-norm error as $\gamma$ grows, suggesting that $\gamma$ is not an artifact of our analysis.}
    \label{fig:gamma-l2}
\end{figure}

\begin{figure}[h!]
    \centering
    \includegraphics[width=\linewidth]{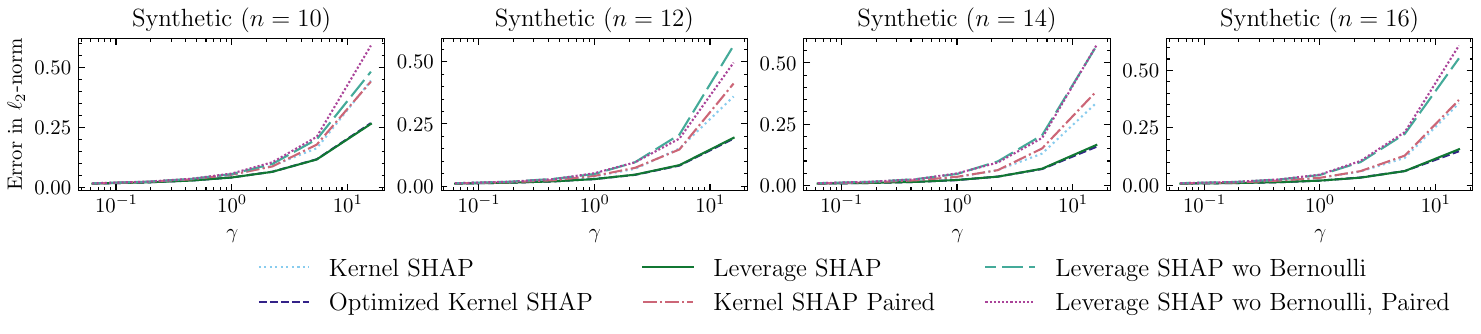}
    \caption{The $\ell_2$-norm error as a function of $\gamma$, the problem-specific parameter in Corollary \ref{coro:l2norm}. The lines indicate the mean over 100 runs. All algorithms we consider have higher $\ell_2$-norm error as $\gamma$ grows, suggesting that $\gamma$ is not an artifact of our analysis.}
    \label{fig:ablation_gamma-l2}
\end{figure}

\begin{figure}[h!]
    \centering
    \includegraphics[width=\linewidth]{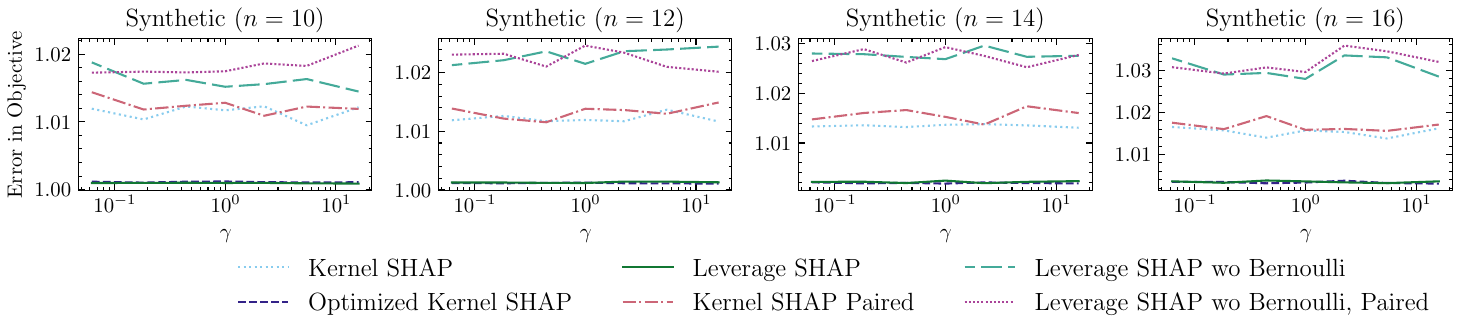}
    \caption{The linear objective error as a function of $\gamma$, the problem-specific parameter in Corollary \ref{coro:l2norm}. The lines indicate the mean over 100 runs. All algorithms we consider have no dependence on $\gamma$ grows, suggesting that $\gamma$ is not a relevant parameter for the linear objective error. This finding aligns with Theorem \ref{thm:main}, which has no dependence on $\gamma$.}
    \label{fig:ablation_gamma-objective}
\end{figure}

\section{Ablation Experiments with $\ell_2$-norm Error}\label{appendix:ablation}

Figures \ref{fig:ablation_detailed}, \ref{fig:ablation_size-l2}, and \ref{fig:ablation_noise-l2} show ablation experiments where the error is measured with the $\ell_2$-norm error. We find that Leverage SHAP, or its ablated version without Bernoulli sampling, give the best performance.

\begin{figure*}[h!]
    \centering
    \includegraphics[width=\linewidth]{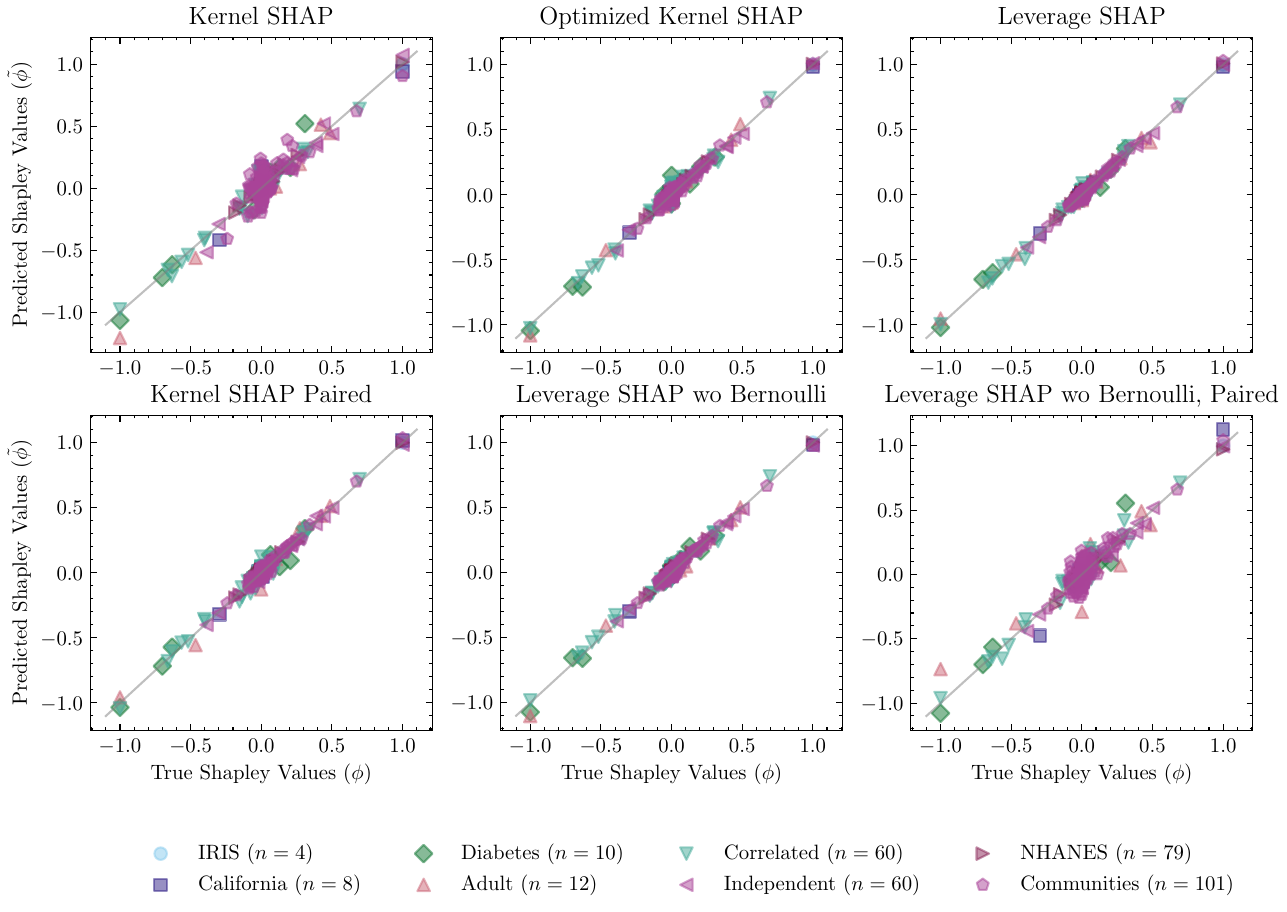}
    \caption{Predicted versus true Shapley values for all features in 8 datasets (we use $m=5n$ samples for this experiment).
    Points near the identity line indicate that the estimated Shapley value is close to its true value. The plots suggest that our Leverage SHAP method is more accurate than the baseline Kernel SHAP algorithm, as well as the optimized Kernel SHAP implementation available in the SHAP library and ablated versions of Leverage SHAP.}
    \label{fig:ablation_detailed}
\end{figure*}

\begin{figure}[h!]
    \centering
    \includegraphics[width=\linewidth]{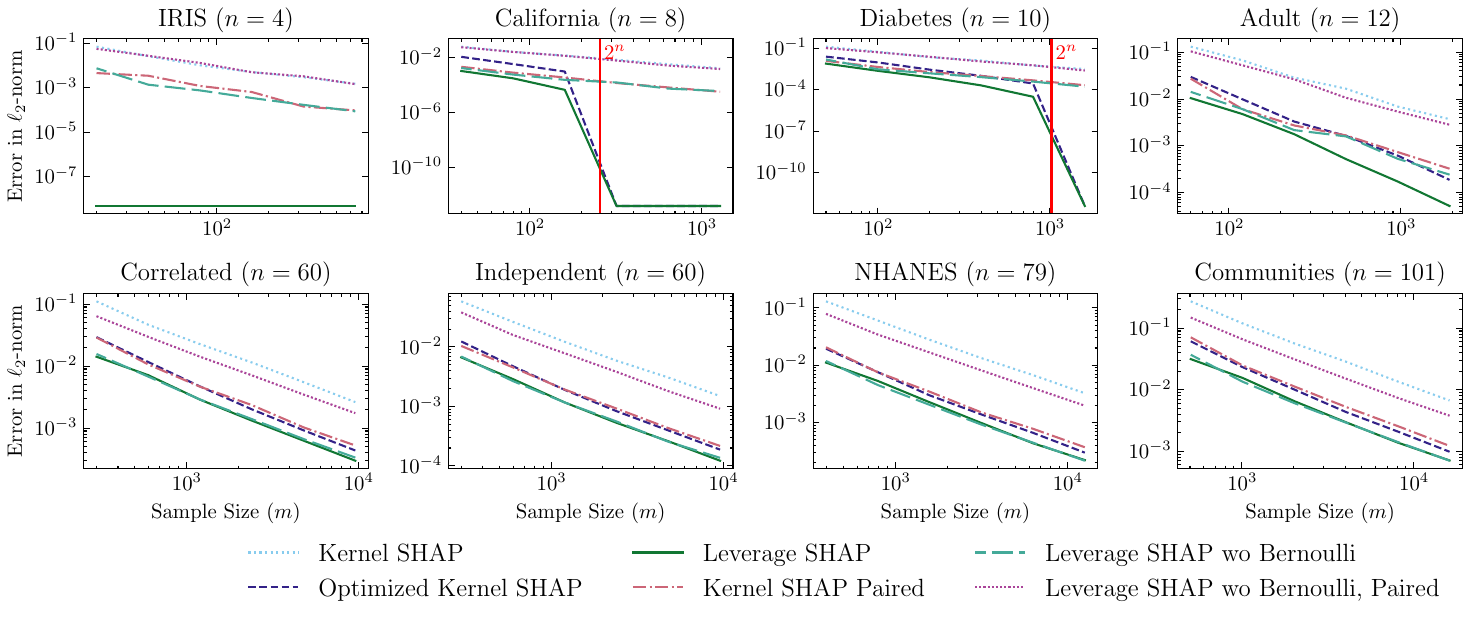}
    \caption{The $\ell_2$-norm error between the estimated Shapley values and ground truth Shapley values as a function of sample size. The lines report the mean error over 100 runs. Leverage SHAP reliably gives the best performance, exceeding second-best Optimized Kernel SHAP for small $n$ and Leverage SHAP w/o Bernoulli Sampling for large $n$.}
    \label{fig:ablation_size-l2}
\end{figure}

\begin{figure}[h!]
    \centering
    \includegraphics[width=\linewidth]{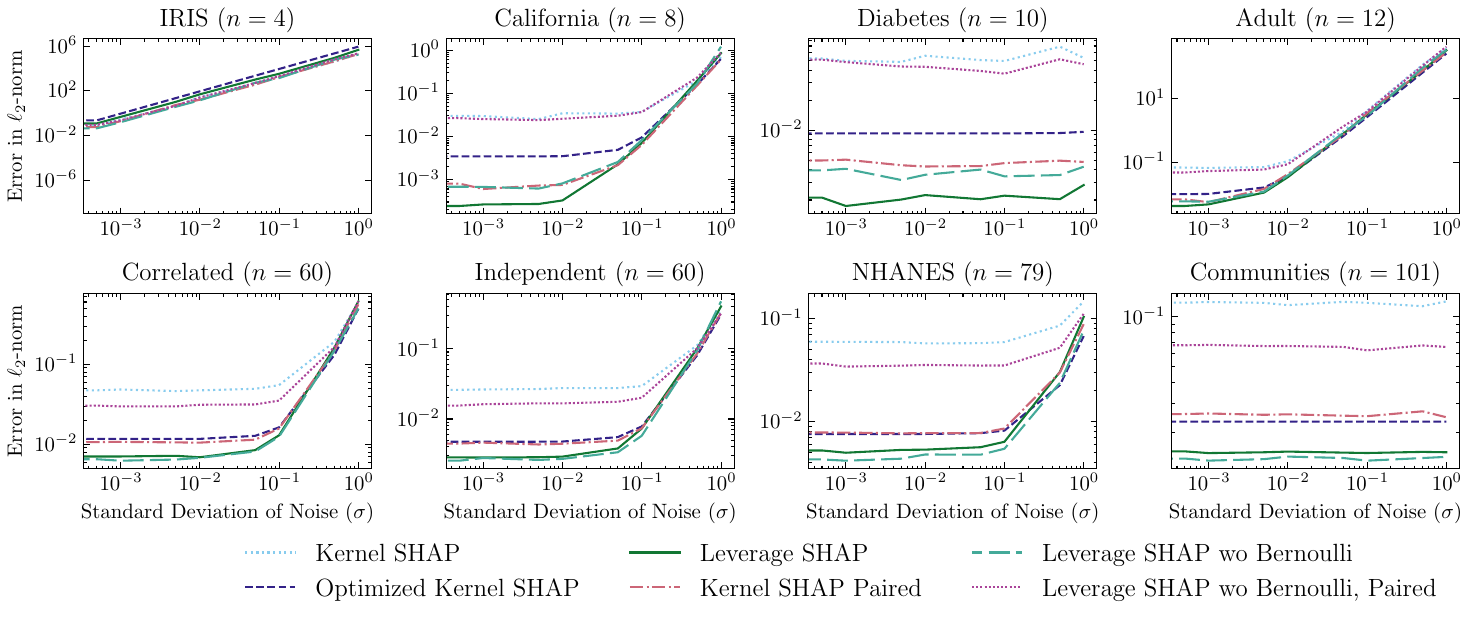}
    \caption{The $\ell_2$-norm error between the estimated and ground truth Shapley values as a function of noise in the set function. Leverage SHAP gives the best performance in almost all settings, with the exception of NHANES where Leverage SHAP with replacement (without Bernoulli) gives slightly better performance.}
    \label{fig:ablation_noise-l2}
\end{figure}

%\begin{table}
%    \centering
%    \input{tables/ablation_shap_error}
%    \caption{Summary statistics of the $\ell_2$-norm error for every dataset and estimator. Except for ties, Leverage SHAP gives the best performance for small $n$. For large $n$, Leverage SHAP with replacement (without Bernoulli sampling) can mildly outperform.}
%    \label{tab:ablation_l2-error}
%\end{table}

\section{Ablation Experiments with Objective Error }\label{appendix:objective}

Figures \ref{fig:ablation_size-objective} and \ref{fig:ablation_noise-objective} show ablation experiments where the error is measured with the objective value naturally suggested by the regression formulation. We find that Leverage SHAP, or its ablated version without Bernoulli sampling, give the best performance.

\begin{figure}[h!]
    \centering
    \includegraphics[width=\linewidth]{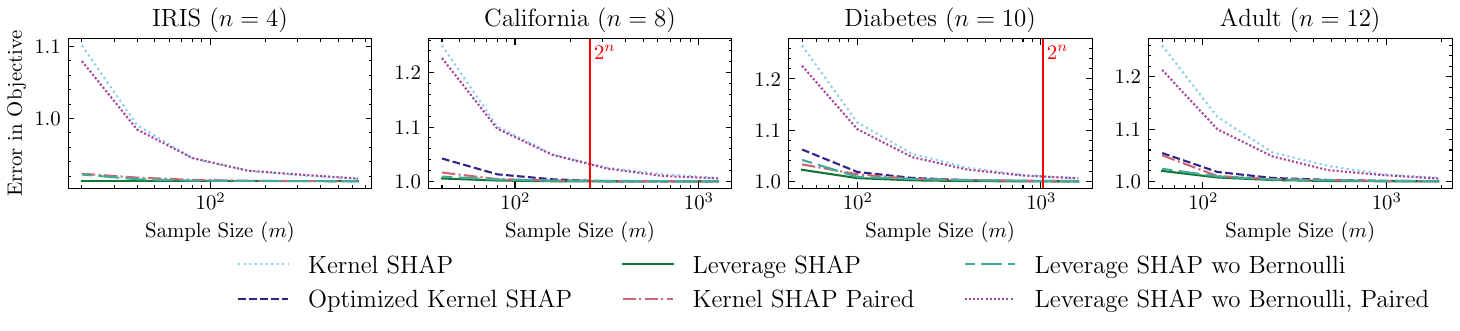}
    \caption{Linear objective error by sample size. The lines indicate the mean error over 100 runs. Leverage SHAP quickly achieves the lowest error in all settings. (For small $n$, numerical instability can lead to error below 1.)}
    \label{fig:ablation_size-objective}
\end{figure}

\begin{figure}[h!]
    \centering
    \includegraphics[width=\linewidth]{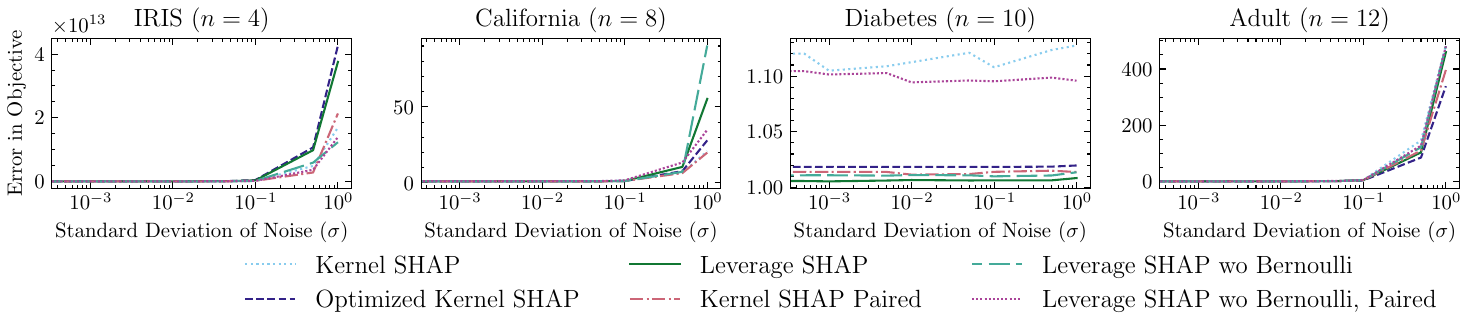}
    \caption{Linear objective error by standard deviation of noise. The noise grows large as $\sigma$ approaches one for all algorithms. The plots indicate no clear winner across all four datasets.}
    \label{fig:ablation_noise-objective}
\end{figure}

\section{Constrained and Unconstrained Equivalence}\label{appendix:equivalence}

\equivalence*

We provide a cleaner, direct proof using the unconstrained problem and the exact characterization of $\mathbf{Z^\top Z}$.

\begin{proof}[Proof of Lemma \ref{lemma:equivalence}]
We will show that
\begin{align}\label{eq:to_show}
    \boldsymbol{\phi} = \argmin_{\mathbf{x}} \| \mathbf{Ax - b} \|_2^2 + \mathbf{1} \frac{v(\mathbf{1}) - v(\mathbf{0})}{n}
\end{align}
where
$\mathbf{A} = \mathbf{Z P}$ and $\mathbf{b} =\mathbf{y} - \mathbf{Z 1} \frac{v(\mathbf{1}) - v(\mathbf{0})}{n}.$
Then Lemma \ref{lemma:equivalence} follows from Lemma \ref{lemma:freedom}.

We know
\begin{align}
    \argmin_{\mathbf{x}} \| \mathbf{Ax - b} \|_2^2
    = (\mathbf{A^\top A})^{+} \mathbf{A^\top b}.
\end{align}
By Lemma \ref{lemma:ATA_form}, we have $(\mathbf{A^\top A})^{+} = (\mathbf{P^\top Z^\top Z P})^{+} = n \mathbf{I} - \mathbf{1 1^\top} = n\mathbf{P}$.
Then
\begin{align}
    (\mathbf{A^\top A})^{+} \mathbf{A}^\top = 
    n \mathbf{P} \mathbf{P}^\top \mathbf{Z}^\top
    = n \mathbf{P} \mathbf{Z}^\top.
\end{align}

Let $\mathbf{z} \in \{0,1\}^n$ with $0 < \| \mathbf{z} \|_1 < n$.
Then the $\mathbf{z}$th column of $(\mathbf{A^\top A})^{+} \mathbf{A}^\top$ is given by
\begin{align}
    %\left[ (\mathbf{A^\top A})^{+} \mathbf{A}^\top \right]_{:, \mathbf{z}}
    n \sqrt{w(\| \mathbf{z}\|_1)} \mathbf{z}  - \sqrt{w(\| \mathbf{z}\|_1)} \| \mathbf{z} \|_1 \mathbf{1} .
\end{align}
Recall $w(s) = \frac{(s-1)!(n-s-1)!}{n!}$.
The weights are carefully designed so that $w(s) s = w(s +1) (n - s -1)$.
Then, for $i \in [n]$, we have the $i$th entry of $(\mathbf{A^\top A})^{+} \mathbf{A}^\top \mathbf{b}$ given by 
%$\left[ (\mathbf{A^\top A})^{+} \mathbf{A}^\top \mathbf{b} \right]_{i}$
\begin{align}
     &\sum_{\mathbf{z}: 0 < \| \mathbf{z} \|_1 < n}
     w(\| \mathbf{z} \|_1)
     (n z_i - \| \mathbf{z} \|_1)
     \left(v(\mathbf{z}) - v(\mathbf{0}) - \| \mathbf{z} \|_1 \frac{ v(\mathbf{1}) - v(\mathbf{0})}{n}\right)
     \nonumber \\ &=
     \sum_{S \subseteq [n] \setminus \{i\}}
     \frac{|S|! (n - |S| -1)!}{n!}
     \biggl(
     v(S \cup \{i \}) - v(\emptyset) - (|S| + 1) \frac{v([n]) - v(\emptyset)}{n}
     \nonumber \\ &\qquad \qquad \qquad \qquad \qquad \qquad \quad
     - v(S) + v(\emptyset) + (|S|) \frac{v([n]) - v(\emptyset)}{n}
     \biggr)
     \label{eq:set_conversion}
\end{align}
where the equality follows by converting from binary vectors to sets.
Observe that $z_i = 1$ if and only if the corresponding set $S \cup \{i\}$ contains $i$.
Continuing, we have
\begin{align}
    (\ref{eq:set_conversion}) &=
     \sum_{S \subseteq [n] \setminus \{i\}}
     \frac{|S|! (n - |S| -1)!}{n!}
     \biggl(
     v(S \cup \{i \}) - v(S) - \frac{v([n]) - v(\emptyset)}{n} \biggr)
     \nonumber \\&= \phi_i - \frac{v([n]) - v(\emptyset)}{n}
     \sum_{S \subseteq [n] \setminus \{i\}}
     \frac{|S|! (n - |S| -1)!}{n!}.
\end{align}
Then Equation \ref{eq:to_show} follows since
$\sum_{S \subseteq [n] \setminus \{i\}} \frac{|S|! (n - |S| -1)!}{n!} = \frac1{n} \sum_{S \subseteq [n] \setminus \{i\}} \binom{n-1}{|S|-1}^{-1} =1$.

\end{proof}

\section{Efficiently Sampling Without Replacement}\label{appendix:bernoulli_code}

Since there are an exponential number of rows,
the naive method of considering each row independently will not work.
Our approach is as follows:
For each set size $s \in [\lfloor n/2 \rfloor]$, we sample the number of samples that will be chosen.
This is distributed as a Binomial distribution with $\binom{n}{s}$ trials and probability $\min(1, 2c \binom{n}{s}^{-1})$.
Then, because the probabilities are the same for all subsets of size $s$, we exploit this symmetry and sample random indices uniformly from $[\binom{n}{s}]$ without replacement.
We index all subsets of size $s$ and then construct the subsets corresponding to the randomly sampled indices without enumerating all subsets.
The Bernoulli sampling code appears in Algorithm \ref{alg:bernoulli_sampling} and the subset construction code appears in Algorithm \ref{alg:combo}, both deferred to Appendix \ref{appendix:bernoulli_code}.

Like before, there is one special case when $n$ is even and there is a middle set size $s = \lfloor n/2 \rfloor$.
Here, the paired samples $\mathbf{z}$ and $\bar{\mathbf{z}}$ have the same size $s$ so we need to partition the set of all subsets of size $s$ so we do not risk sampling the same pair twice.
We accomplish this by sampling $\mathbf{z}$ from all subsets of $n-1$ items with size $s-1$ then appending $\mathbf{z}_n=1$ and computing the complement $\bar{\mathbf{z}}$.

\begin{algorithm}
\caption{\texttt{BernoulliSample}$(n, c)$: Efficient Bernoulli Sampling}\label{alg:bernoulli_sampling}
\begin{algorithmic}[1]
\Input $n$: number of players, $c$: probability scaling
\Output $\mathcal{Z}'$: random sample of $\mathcal{Z}$ that independently contains pair $(\mathbf{z}, \bar{\mathbf{z}})$ w.p. $\min(1, 2c \cdot \ell_z)$
\State $\mathcal{Z} \gets \emptyset$
\For{set size $s \in \{1,\ldots,\lfloor \frac{n}{2} \rfloor \}$}
\State $\textrm{isMiddle} \gets (2 | n) \land (s = \lfloor \frac{n}{2} \rfloor)$ \Comment{Special handling of middle set size for paired samples}
\State $m_s \sim \textrm{Binomial}(\binom{n}{s}, \min(1, 2c \cdot \binom{n}{s}^{-1})$\label{line:binomial_sample}
\State $m_s \gets \lfloor m_s / 2 \rfloor$ if isMiddle
\State randomIndices $\gets$ $m_s$ uniform random integers drawn without replacement from $[\binom{n}{s}]$
\For{$i \in $ randomIndices}
\State $\mathbf{z} \gets$ \texttt{Combo}$(n, s, i)$ \Comment{$i$th combination of $n$ items with size $s$ (Algorithm \ref{alg:combo})}
\If{isMiddle} \Comment{Partition the middle size by setting $\mathbf{z}_n = 1$}
\State $\mathbf{z} \gets$\texttt{Combo}$(n-1, s-1, i)$ \Comment{$i$th combination of $n-1$ items with size $s-1$}
\State Append $0$ to $\mathbf{z}$
\EndIf
\State Add $(\mathbf{z}, \bar{\mathbf{z}})$ to $\mathcal{Z'}$
\EndFor
\EndFor
\State \textbf{return} $\mathcal{Z'}$
\end{algorithmic}
\end{algorithm}

\begin{algorithm}
\caption{\texttt{Combo}$(n, s, i)$: Compute the $i$th Combination in Lexicographic Order}\label{alg:combo}
\begin{algorithmic}[1]
\State \textbf{Input:} $n$: total number of elements, $s$: subset size, $i$: index of subset
\State \textbf{Output:} $\mathbf{z}$: the $i$th combination (lexicographically) in binary form

\State $\mathbf{z} \gets \mathbf{0}_{n}$, $k \gets s$, start $\gets 1$ 

\For{idx $ \in \{1, \ldots, s\}$}
    \For{$j \in \{\textrm{start}, \ldots, n\}$}
        \State count $\gets \binom{n - s - 1}{k - 1}$ \Comment{Combinations possible with remaining elements if $j$ is added}
        \If{$i < \textrm{count}$}
            \State $z_j \gets 1$ \Comment{Add $j$}
            \State $k \gets k - 1$ \Comment{Decrease the number of elements left to choose}
            \State $\textrm{start} \gets s + 1$ \Comment{Update starting index to ensure lexicographic order}
            \State \textbf{break}
        \Else
            \State $i \gets i - \textrm{count}$ \Comment{Skip this batch of combinations}
        \EndIf
    \EndFor
\EndFor
\State \Return $\mathbf{z}$
\end{algorithmic}
\end{algorithm}

\begin{lemma}[Time Complexity]\label{lemma:time}
    Let $m'$ be the number of samples used in Algorithm \ref{alg:leverage_shap} and define $T_{m'}$ as the time complexity of evaluating $v$ in parallel on $m'$ inputs.
    Algorithm \ref{alg:leverage_shap} runs in time $O(m'n^2 + T_{m'})$.
\end{lemma}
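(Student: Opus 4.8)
The plan is to walk through Algorithm \ref{alg:leverage_shap} line by line, pulling in its subroutines (Algorithms \ref{alg:bernoulli_sampling} and \ref{alg:combo}), bound the cost of each step, and show that the total collapses to two dominant contributions: the $m'$ black-box evaluations of $v$, which cost $T_{m'}$ by definition, and the final linearly-constrained least-squares solve, which costs $O(m'n^2)$. Everything else I expect to subsume into $O(m'n^2)$ after noting that the algorithm requires $m\ge n$, hence $m'=\Omega(n)$ and $|\mathcal{Z}'|=\Theta(m')$, so additive $O(n^3)$ terms are harmless.

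For the setup, the binary search for $c$ (Line 2) evaluates a sum of $O(n)$ terms, each a comparison and addition on integers of $O(n)$ bits (the $\binom{n}{s}$ can be built incrementally), and the search terminates in $O(n)$ iterations since the target $m-2$ is at most $2^n$; I would charge this $O(n^3)=O(m'n^2)$. For the post-sampling lines: the \textbf{for} loop that assembles the sampled regression problem touches $\Theta(m')$ pairs and writes $O(n)$ entries per pair, for $O(m'n)$ total, where $\mathbf{W}$ is stored as a length-$O(m')$ vector of diagonal entries rather than a dense matrix (this matters when $m'\gg n$); forming $\mathbf{b}'$ and the reweighted design $\mathbf{W}^{1/2}\mathbf{Z}'\mathbf{P}$ costs $O(m'n)$ using $\mathbf{Z}'\mathbf{P}=\mathbf{Z}'-\tfrac1n(\mathbf{Z}'\mathbf{1})\mathbf{1}^\top$ and the diagonality of $\mathbf{W}$; and solving the $\Theta(m')\times n$ least-squares problem via the normal equations or a QR factorization costs $O(m'n^2+n^3)=O(m'n^2)$. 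The step $\mathbf{y}'=v(\mathbf{Z}')-v(\mathbf{0})\mathbf{1}$ contributes the $T_{m'}$ term (two extra evaluations for $v(\mathbf{0}),v(\mathbf{1})$ are absorbed) plus $O(m')$ arithmetic, and the final rescaling is $O(n)$.

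The one step needing real care — and the main obstacle — is the sampling subroutine \texttt{BernoulliSample} (Algorithm \ref{alg:bernoulli_sampling}), because the ground set $\mathcal{Z}$ is exponentially large and must never be enumerated. The plan here: for each of the $O(n)$ set sizes $s$, (i) sample $m_s\sim\mathrm{Binomial}(\binom{n}{s},\min(1,2c\binom{n}{s}^{-1}))$ in $\mathrm{poly}(n)$ time independent of the number of trials, (ii) draw $m_s$ distinct indices uniformly from $[\binom{n}{s}]$ without replacement in $O(m_s\cdot\mathrm{poly}(n))$ time via hash-based selection, noting each index is an $O(n)$-bit integer, and (iii) convert each index $i$ to the corresponding size-$s$ subset with \texttt{Combo}$(n,s,i)$. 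For \texttt{Combo} (Algorithm \ref{alg:combo}) I would argue that across all $s$ outer iterations the inner pointer advances monotonically through $\{1,\dots,n\}$, so the body executes $O(n)$ times, each time computing one binomial coefficient on $O(n)$-bit numbers — $O(n^2)$ per call, or $O(n)$ with precomputed coefficients. Charging $O(n^2)$ to each of the $\Theta(m')$ \texttt{Combo} calls plus $O(n^3)$ once for the per-size overhead gives $O(m'n^2)$ for the whole subroutine. Summing all lines yields the claimed $O(m'n^2+T_{m'})$; the crux is precisely that steps (ii)–(iii) run in $\mathrm{poly}(n)$ time \emph{per sample} rather than scaling with $\binom{n}{s}$, which is exactly where we exploit the symmetry that all size-$s$ subsets share the same leverage score.
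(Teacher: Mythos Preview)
Your proposal is correct and follows essentially the same approach as the paper's proof: bound \texttt{Combo} at $O(n^2)$ per call to get $O(m'n^2)$ for \texttt{BernoulliSample}, charge $T_{m'}$ to the batched evaluation of $v$, and bound the least-squares solve by $O(m'n^2+n^3)=O(m'n^2)$ using $m'\ge n$. Your write-up is considerably more thorough than the paper's---you additionally account for the binary search, bit-level arithmetic on $O(n)$-bit binomial coefficients, the cost of sampling the binomial counts and distinct indices, and the diagonal storage of $\mathbf{W}$---but the skeleton and the key observations are the same.
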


For most settings, we expect $T_{m'}$ to dominate the time complexity.
For example, consider a shallow fully connected neural network with inner dimension just $n$.
Even evaluating the forward pass on $m'$ inputs for a single layer of this network takes time $O(m'n^2)$.

\begin{proof}[Proof of Lemma \ref{lemma:time}]
    \texttt{BernoulliSample} takes $O(mn^2)$ time:
    For each pair of samples, we call \texttt{Combo} at most twice which takes time at most $O(n^2)$ since there are only two loops, each over at most $n$ items.
    For simplicity of presentation, the algorithm calls $v$ three times: once with input $\mathbf{1}$, once with input $\mathbf{0}$, and once with $m'-2$ inputs.
    However, by concatenating the inputs, the algorithm can be easily modified to call $v$ once on $m'$ inputs for a time complexity of $T_{m'}$.
    Computing the optimal subsampled solution $\tilde{\boldsymbol{\phi}}^\perp$ takes $O(n^3)$ time to compute a factorization of a $n \times n$ matrix and $O(m'n^2)$ time to compute the $n \times n$ matrix.
    Since $m' \geq n$, the time complexity is $O(m'n^2)$.
\end{proof}

\section{Constrained Regression}\label{appendix:constrained_solution}

\freedomlemma*

\begin{proof}[Proof of Lemma \ref{lemma:freedom}]
We will decompose $\mathbf{x}$ into one vector that is orthogonal to $\mathbf{1}$ and another vector that is a scaling of $\mathbf{1}.$ That is, $\mathbf{x} = \mathbf{x}' + c\mathbf{1}$ where $\langle \mathbf{x'},\mathbf{1} \rangle = 0$ for some $c$.
In order to satisfy the constraint, it follows that $c = \frac{v(\mathbf{1}) - v(\mathbf{0})}{n}$.
Then 
\begin{align}
    \argmin_{\mathbf{x}: \langle \mathbf{x}, \mathbf{1} \rangle= v(\mathbf{1}) - v(\mathbf{0})} \| \mathbf{Zx} - \mathbf{y} \|_2^2
    &= \argmin_{\mathbf{x'} + c\mathbf{1}: \langle \mathbf{x'}, \mathbf{1} \rangle = 0}
    \| \mathbf{Z}(\mathbf{x'} + c\mathbf{1}) - \mathbf{y} \|_2^2
    \\&= \argmin_{\mathbf{x'} : \langle \mathbf{x'}, \mathbf{1} \rangle = 0}
    \| \mathbf{Zx'} - (\mathbf{y} - c \mathbf{Z} \mathbf{1}) \|_2^2 + c \mathbf{1}
    \\&= \argmin_{\mathbf{x}'} \| \mathbf{ZPx'} - (\mathbf{y} - c \mathbf{Z} \mathbf{1}) \|_2^2 + c \mathbf{1}
    \label{eq:decomposed}
\end{align}
We used $\mathbf{P}$ to project off any component in the direction of $\mathbf{1}$ and thereby remove the constraint.
Plugging in the value of $c$ with the definitions of $\mathbf{A}$ and $\mathbf{b}$ yields the first equation.
The second equation follows by a similar argument.
\end{proof}

\paragraph{Traditional Solution Used in Kernel SHAP}

Consider the problem
\begin{align}
    \mathbf{x^*} = \argmin_{\mathbf{x}: \langle \mathbf{x}, \mathbf{1} \rangle = v(\mathbf{1}) - v(\mathbf{0})} 
    \| \mathbf{Zx} - \mathbf{y} \|_2^2.
\end{align}

For Leverage SHAP, we reformulate the problem to an unconstrained regression problem with Lemma \ref{lemma:freedom} and solve it using standard least squares.

In the standard Kernel SHAP implementation, the constrained problem is solved directly using the method of Lagrange multipliers.
We put the analysis in here for completeness.

The Lagrangian function with respect to $\mathbf{x}$ and multiplier $\lambda \in \mathbb{R}$ is
\begin{align}
    \mathcal{L}(\mathbf{x}, \lambda)
    &= \| \mathbf{Zx} - \mathbf{y} \|_2^2 + \lambda (\langle \mathbf{x}, \mathbf{1} \rangle - v(\mathbf{1}) + v(\mathbf{0}))
\end{align}

At the optimal solution $\mathbf{x}^*$ and $\lambda^*$, the gradients with respect to $\mathbf{x}$ and $\lambda$ are 0.
This observation implies the following two equations:
\begin{align}
    &\nabla_{\mathbf{x}} \mathcal{L}(\mathbf{x}^*,\lambda^*) 
    = 2 \mathbf{Z}^\top (\mathbf{Z} \mathbf{x}^* - \mathbf{y}) + \lambda^* \mathbf{1} = 0
    \quad \implies \quad \mathbf{x}^* 
    = (\mathbf{Z^\top Z})^{-1} \left(\mathbf{Z}^\top \mathbf{y} - \frac{\lambda^*}{2}\mathbf{1}\right)
    \label{eq:grad_x_0}
    \\&\nabla_{\lambda} \mathcal{L}(\mathbf{x}^*,\lambda^*) 
    = {\langle \mathbf{x}^*}, \mathbf{1} \rangle - v(\mathbf{1}) + v(\mathbf{0}) = 0.
    \label{eq:grad_lambda_0}
\end{align}
Together, Equations \ref{eq:grad_x_0} and \ref{eq:grad_lambda_0} tell us that
\begin{align}
   \langle \mathbf{1},  {\mathbf{x}^*} \rangle = v(\mathbf{1}) - v(\mathbf{0})
   = \mathbf{1}^\top (\mathbf{Z^\top Z})^{-1} \left(\mathbf{Z^\top y} - \frac{\lambda^*}{2}\mathbf{1}\right)
\end{align}

Plugging back into Equation \ref{eq:grad_x_0}, we have that
\begin{align}\label{eq:regression_solution}
    \mathbf{x}^* = (\mathbf{Z^\top Z})^{-1} \left( \mathbf{Z^\top y} 
    - \frac{\mathbf{1}^\top (\mathbf{Z^\top Z})^{-1}\mathbf{Z}^\top \mathbf{y} - v(\mathbf{1}) + v(\mathbf{0})}
    {\mathbf{1}^\top (\mathbf{Z^\top Z})^{-1} \mathbf{1}} \mathbf{1} \right).
\end{align}

\end{document}